\theoremstyle{plain}
\newtheorem{theorem}{Theorem}[section]
\theoremstyle{definition}
\newtheorem{definition}[theorem]{Definition}
\theoremstyle{remark}
\newtheorem{remark}[theorem]{Remark}
\renewcommand{\le}{\leqslant}
\renewcommand{\ge}{\geqslant}
\renewcommand{\geq}{\geqslant}
\newcommand{\OO}{\operatorname{O}}
\newcommand\tsout{\bgroup\markoverwith{\textcolor{red}{\rule[0.5ex]{2pt}{0.4pt}}}\ULon}
\icmltitlerunning{Tree-Sliced Wasserstein Distance with Nonlinear Projection}
\begin{document}

\twocolumn[
\icmltitle{Tree-Sliced Wasserstein Distance with Nonlinear Projection}



\icmlsetsymbol{equal}{*}
\icmlsetsymbol{co}{$\dagger$}

\begin{icmlauthorlist}
\icmlauthor{Thanh Tran}{equal,vinuni}
\icmlauthor{Viet-Hoang Tran}{equal,nus}
\icmlauthor{Thanh Chu}{nus}
\icmlauthor{Trang Pham}{qualcomm}
\icmlauthor{Laurent El Ghaoui}{co,vinuni}
\icmlauthor{Tam Le}{co,ism}
\icmlauthor{Tan M. Nguyen}{co,nus}
\end{icmlauthorlist}

\icmlaffiliation{vinuni}{VinUniversity}
\icmlaffiliation{nus}{National University of Singapore}
\icmlaffiliation{qualcomm}{Movian AI}
\icmlaffiliation{ism}{The Institute of Statistical Mathematics}

\icmlcorrespondingauthor{Viet-Hoang Tran}{hoang.tranviet@u.nus.edu}

\icmlkeywords{tree-sliced Wasserstein distance, optimal transport}

\vskip 0.3in
]



\printAffiliationsAndNotice{$^*$Equal contribution $\,^{\dagger}$Co-last authors}

\begin{abstract}
Tree-Sliced methods have recently emerged as an alternative to the traditional Sliced Wasserstein (SW) distance, replacing one-dimensional lines with tree-based metric spaces and incorporating a splitting mechanism for projecting measures. This approach enhances the ability to capture the topological structures of integration domains in Sliced Optimal Transport while maintaining low computational costs. Building on this foundation, we propose a novel nonlinear projectional framework for the Tree-Sliced Wasserstein (TSW) distance, substituting the linear projections in earlier versions with general projections, while ensuring the injectivity of the associated Radon Transform and preserving the well-definedness of the resulting metric. By designing appropriate projections, we construct efficient metrics for measures on both Euclidean spaces and spheres. Finally, we validate our proposed metric through extensive numerical experiments for Euclidean and spherical datasets. Applications include gradient flows, self-supervised learning, and generative models, where our methods demonstrate significant improvements over recent SW and TSW variants. The code is publicly available at \url{https://github.com/thanhqt2002/NonlinearTSW}.
\end{abstract}

\section{Introduction}
Optimal Transport (OT) \citep{villani2008optimal, peyre2019computational} and Sliced-Wasserstein (SW) \citep{rabin2011wasserstein, bonneel2015sliced} provide geometrically meaningful metrics in the space of probability measures, making them widely applicable across various fields. These include machine learning \citep{nguyen2021optimal, bunne2022proximal, hua2023curved, fan2022complexity, le2024generalized, le2024noisy, kessler2025sava, chapel2025differentiable, chapel2025one}, multimodal data analysis~\citep{park2024bridging, luong2024revisiting}, computer vision and graphics~\citep{lavenant2018dynamical, nguyen2021point, Saleh_2022_CVPR, rabin2011wasserstein, solomon2015convolutional, vu2025few}, statistics~\citep{mena2019statistical, pmlr-v99-weed19a, pmlr-v151-wang22f, pham2024scalable, liu2022entropy, nguyen2022many, nietert2022outlier}. By utilizing the closed-form solution of one-dimensional OT problems, SW substantially reduces the computational complexity typically associated with OT \citep{rabin2011wasserstein, bonneel2015sliced, peyre2019computational}.


\paragraph{Related work.} Several enhancements have been proposed to improve different aspects of the SW framework, including optimizing the sampling process \citep{nguyen2020distributional, nadjahi2021fast, nguyen2024quasimonte}, selecting optimal projection directions \citep{deshpande2019max}, and refining the projection mechanism \citep{kolouri2019generalized, chen2022augmented, bonet2023hyperbolic}.

The Tree-Sliced framework \cite{tran2024tree, tran2025spherical, tran2025distancebased} has recently emerged as an alternative to the traditional SW framework by replacing one-dimensional projection lines with more structured domains, known as tree systems. These systems function similarly to lines but incorporate a more sophisticated and interconnected structure. Instead of projecting measures onto individual lines, this method distributes them across multiple linked lines, forming a hierarchical structure. This approach enhances the representation of topological information while preserving the computational efficiency by leveraging the closed-form solution of OT problems in tree-metric spaces \cite{indyk2003fast, le2019tree}. However, Tree-Sliced frameworks are still restricted to linear projections, as the integration domains used in \cite{tran2024tree, tran2025distancebased} remain confined to hyperplanes. In contrast, several advanced projection techniques have been investigated for SW, enhancing its flexibility and effectiveness \citep{kuchment-2006, kolouri2019generalized, chen2022augmented, bonet2023hyperbolic}. Building on these advancements, this paper introduces a novel framework for the tree-sliced method that integrates nonlinear projections, further broadening its scope and improving its performance.

\textbf{Contribution.} Our contributions are three-fold:
\begin{itemize}

    \item We introduce the Generalized Radon Transform and Spatial Radon Transform on Systems of Lines, extending previous variants that were limited to linear projections. Along with these extensions, we provide theoretical results and proofs demonstrating their injectivity. Furthermore, we generalize the conventional Euclidean framework by proposing a spherical tree-sliced version tailored for nonlinear Radon transforms.
    \item We introduce Tree-Sliced distances based on the proposed Radon Transform, namely the Circular Tree-Sliced Wasserstein distance and the Spatial Tree-Sliced Wasserstein distance, along with an extended version for the spherical setting, called the Spatial Spherical Tree-Sliced Wasserstein distance. Furthermore, we examine different choices of functions that define the nonlinear projections, analyze their computational complexity and explain why certain choices lead to more efficient metrics.
    \item We assess the proposed metrics across various tasks, including gradient flows and generative models, on both Euclidean and spherical data, highlighting their practical effectiveness and computational efficiency.
\end{itemize}

\textbf{Organization.} The structure of the paper is as follows: Section \ref{sec:preliminaries} provides an overview of different variants of the Wasserstein distance, while Section \ref{sec:Radon Transform with Nonlinear Projection} explores various forms of the Radon Transform with Nonlinear Projection used to derive corresponding Wasserstein distances. Section \ref{sec:Radon Transform on Systems of Lines with Nonlinear Projection} introduces novel Radon Transforms on Systems of Lines with Nonlinear Projection and examines their injectivity. In Section \ref{sec:Tree-Sliced Wasserstein Distance with Nonlinear Projection}, two new Tree-Sliced Wasserstein distances associated with the proposed transform are introduced, along with an analysis of their fundamental components. Finally, Section \ref{sec:Experimental Results} assesses the performance of the proposed methods on both Euclidean and Spherical data. Additional materials related to the spherical settings of the proposed method, as well as background, theoretical foundations, and supplementary content, are provided in the Appendix. 

\section{Preliminaries}
\label{sec:preliminaries}
This section reviews the Wasserstein distance between measures and its various sliced variants. For simplicity, the focus is on measures with a finite first moment, while measures with a finite $p^\text{th}$-moment are treated analogously.

\paragraph{Wasserstein distance.}  Given a measurable space $\Omega$ endowed with a metric $d$. Let $\mu,\nu \in \mathcal{P}(\Omega)$, and $\mathcal{P}(\mu, \nu)$ be the set of distributions $\pi$ coupling between $\mu$ and $\nu$. The Wasserstein distance ($\text{W}$) \citep{villani2008optimal} between $\mu$, $\nu$ is:
\begin{equation}\label{equation:OTproblem}
\text{W}(\mu, \nu) = \underset{\pi \in \mathcal{P}(\mu, \nu)}{\inf}   \int_{\Omega \times \Omega} d(x,y) ~ d\pi(x,y) .
\end{equation}

\paragraph{Sliced Wasserstein distance.} The Radon Transform \citep{helgason2011radon} $\mathcal{R} ~ \colon ~ L^1(\mathbb{R}^d) \rightarrow L^1(\mathbb{R} \times \mathbb{S}^{d-1})$ is:
\begin{align}\label{eq:traditional-Radon-Transform}
    \mathcal{R}f(t,\theta) = \int_{\mathbb{R}^d} f(y) \cdot \delta(t-\left<y,\theta\right>) ~dy,
\end{align}
where $\delta$ is the Dirac delta function. 

The Sliced Wasserstein distance (SW)~\citep{rabin2011wasserstein, bonneel2015sliced} between $\mu,\nu \in \mathcal{P}(\mathbb{R}^d)$ is:
\begin{align}
\label{eq:SW}
    \text{SW}(\mu,\nu)  =  \int_{\mathbb{S}^{d-1}} \text{W} (\mathcal{R}f_{\mu}(\cdot, \theta), \mathcal{R}f_{\nu}(\cdot, \theta)) ~ d\sigma(\theta),
\end{align}
where $\sigma = \mathcal{U}(\mathbb{S}^{d-1})$ is the uniform distribution on the sphere, and $f_\mu, f_\nu$ are the probability density functions of $\mu, \nu$, respectively. The one-dimensional Wasserstein distance in Eq.~(\ref{eq:SW}) has the closed-form $\text{W}(\theta \sharp \mu,\theta \sharp \nu) = 
     \int_0^1 |F_{\mathcal{R}f_{\mu}(\cdot, \theta)}^{-1}(z) - F_{\mathcal{R}f_{\nu}(\cdot, \theta)}^{-1}(z)| dz $,
where $F_{\mathcal{R}f_{\mu}(\cdot, \theta)}$, $F_{\mathcal{R}f_{\nu}(\cdot, \theta)}$  are the cumulative distribution functions of $\mathcal{R}f_{\mu}(\cdot, \theta)$, $\mathcal{R}f_{\nu}(\cdot, \theta)$, respectively. The Monte Carlo method is employed to approximate the intractable integral in Eq.~(\ref{eq:SW}):
\begin{align}
\label{eq:MCSW}
    \widehat{\text{SW}}(\mu,\nu) = \dfrac{1}{L} \sum_{i=1}^L\text{W}(\mathcal{R}f_{\mu}(\cdot, \theta_i), \mathcal{R}f_{\nu}(\cdot, \theta_i)) ,
\end{align}
where $\theta_{1},\ldots,\theta_{L}$ are drawn independently from $\sigma$.


\paragraph{Tree-Sliced Wasserstein distance on Systems of Lines.}  Rather than projecting functions onto lines, i.e., directions in $\mathbb{S}^{d-1}$, as in the original Radon Transform, \cite{tran2024tree, tran2025distancebased} introduces an alternative approach that replaces lines with different metric measure spaces, known as tree systems. These tree systems are formed by connecting multiple copies of real lines, creating a structured space. Functions are then partitioned using a splitting mechanism and projected onto these spaces, effectively capturing the positional information of both measure supports and slices. Further details on this Tree-Sliced framework can be found in Appendix~\ref{appendix:sec:background-tsw-sl} and Appendix~\ref{appendix:background-stsw}.

\section{Radon Transform with Nonlinear Projection}
\label{sec:Radon Transform with Nonlinear Projection}
In this section, we explore nonlinear projections utilized in various existing Sliced Wasserstein variants, and compare them with the traditional linear projection in the original Radon Transform that leads to the Sliced Wasserstein distance. Based on these observations, we provide an overview of how these nonlinear projections can be incorporated into the framework of the Radon Transform on Systems of Lines.

\subsection{Generalized and Spatial Radon Transforms}
\label{subsection:Circular and Spatial Radon Transforms}
Let $\Psi$ be a set of feasible parameter, the Generalized Radon Transform $\mathcal{G}$ (GRT) \cite{kolouri2019generalized} is defined by:
\begin{align}\label{eq:GRT}
    &~~~~~\mathcal{G} ~ \colon ~ L^1(\mathbb{R}^d) \longrightarrow L^1(\mathbb{R} \times \Psi), \notag \\
    &\text{s.t.} ~~ \mathcal{G}f(t, \psi) = \int_{\mathbb{R}^d} f(y) \cdot \delta(t - g(y,\psi)) ~ dy.
\end{align}
Here, $g \colon \mathbb{R}^d \times \Omega \rightarrow \mathbb{R}$ is called the defining function of $\mathcal{G}$. The GRT of $f \in L^1(\mathbb{R}^d)$ is the integration of $f$ over hypersurfaces $\{y \in \mathbb{R}^d ~ \colon ~ t = g(y,\psi)\}$ for $t \in \mathbb{R}, \psi \in \Psi$. One common choice for the defining function occurs when $\Psi = \mathbb{R}_{\ge 0} \times \mathbb{S}^{d-1}$, and $g$ is defined as  the circular function:
\begin{align} \label{eq:circular-defining-f}
    g(y,r,\theta) = \|y - r\theta\|_2 ~ , ~ \forall y \in \mathbb{R}^d, (r,\theta) \in \Psi. 
\end{align}
This choice results in the Circular Radon Transform (CRT) \cite{kuchment-2006}. Another possible defining function, based on homogeneous polynomials of odd degree, is presented in \cite{rouvire-2015}. However, this represents a special case of the next Radon Transform we discuss. Given a positive integer $d_\theta$, the Spatial Radon Transform $\mathcal{H}$ (SRT) \cite{chen2022augmented} is defined by:
\begin{align}\label{eq:SRT}
    &~~~~~\mathcal{H} ~ \colon ~ L^1(\mathbb{R}^d) \longrightarrow L^1(\mathbb{R} \times \mathbb{S}^{d_\theta -1}), \notag \\
    &\text{s.t.} ~~ \mathcal{H}f(t, \theta) = \int_{\mathbb{R}^d} f(y) \cdot \delta(t - \left<h(y),\theta\right>) ~ dy,
\end{align}
where $h\colon \mathbb{R}^d \rightarrow \mathbb{R}^{d_\theta}$ is an injective continuous map. The SRT of $f \in L^1(\mathbb{R}^d)$ is defined as the integration of $f$ over the hypersurfaces given by $\{y \in \mathbb{R}^d ~ \colon ~ t = \left <h(y),\theta \right>\}$ for $t \in \mathbb{R}, \theta \in \mathbb{S}^{d_\theta-1}$.
\begin{remark}
    When $\Omega_\theta = \mathbb{S}^{d-1}$, $g(y,\theta) = \left<y,\theta \right>$ in Eq.~\eqref{eq:GRT}, or $d = d_\theta, h(y) = y$ in Eq.~\eqref{eq:SRT}, it is clear that the GRT and SRT recover the original Radon Transform in Eq.~\eqref{eq:traditional-Radon-Transform}.
\end{remark}
By definition, the SRT is a special case of the GRT, but these two transforms can be interpreted from two distinct perspectives. For a general function $g$, the GRT represents a projection along hypersurfaces in $\mathbb{R}^d$, defined by the level sets of $g$. In contrast, for a general function $h$, the SRT involves mapping functions from $\mathbb{R}^d$ to a new space $\mathbb{R}^{d_\theta}$, where the Radon Transform is then applied.

\paragraph{Well-definedness and injectivity.} Certain conditions on $g$ in GRT and $h$ in SRT are necessary to ensure the well-definedness of GRT and SRT. Additionally, since injectivity is typically required for Radon Transform variants, specific assumptions on $g$ and $h$ are made to achieve this property. However, since these properties, along with the inverse problem related to Radon Transform variants, remain long-standing research questions \cite{beylkin1984inversion, ehrenpreis2003universality, uhlmann2003inside, homan2017injectivity} and fall beyond the scope of this paper, we restrict our discussion to mentioned examples of $g$ and $h$ found in the literature.

\subsection{Incorporating Nonlinear Projectional Framework into Systems of Lines Setting}
Here, we provide a brief overview of the Radon Transform on Systems of Lines (RTSL) to highlight the potential of integrating the nonlinear approach discussed in Section~\ref{subsection:Circular and Spatial Radon Transforms}. The formal construction of RTSL with notations is detailed in Appendix~\ref{appendix:sec:background-tsw-sl} and in \cite{tran2024tree, tran2025distancebased}. Roughly speaking, unlike the traditional Radon Transform in Eq.~\eqref{eq:traditional-Radon-Transform}, which projects onto one line at a time, RTSL extends this concept by simultaneously projecting onto a set of interconnected multiple lines, denoted as $\mathcal{L}$, using a splitting mechanism:
    \begin{align} \label{eq:rtsl-1}
        &\mathcal{R}^{\alpha} \colon L^1(\mathbb{R}^d) \rightarrow \prod_{\mathcal{L} \in \mathbb{L}^d_k} L^1(\mathcal{L}) ~~\text{ where } ~~f  \mapsto \left(\mathcal{R}_{\mathcal{L}}^{\alpha}f\right)_{\mathcal{L} \in \mathbb{L}^d_k}, \notag \\
        & \text{s.t.} ~~~\mathcal{R}_{\mathcal{L}}^{\alpha}f(x_i+t \cdot \theta_i) \notag \\
        &\hspace{22pt}= \int_{\mathbb{R}^d} f(y) \cdot \alpha(y,\mathcal{L})_i \cdot \delta\left(t - \left<y-x_i,\theta_i \right>\right)~dy.
    \end{align}
Here, $\alpha$ is a continuous map from $\mathbb{R}^d \times \mathbb{L}^d_k$ to $\Delta_{k-1}$ presenting the splitting mechanism, and $(x_i, \theta_i)$ indicates $i^{\text{th}}$-line in $\mathcal{L}$. Eq.~\eqref{eq:rtsl-1} can be interpreted as integrating the $i^\text{th}$ portion of $f$, given by $f\cdot \alpha_i$, over the hyperplanes $\{y \in \mathbb{R}^d ~\colon~ \left<y,\theta_i\right> = t +\left<x_i,\theta_i\right>\}$. 

\begin{remark}
    It is important to note that the partitioning process depends on both $y \in \mathbb{R}^d$ and the set of lines $\mathcal{L}$. This splitting mechanism is absent in the traditional Radon Transform and its variants, as seen in Eqs.~\eqref{eq:traditional-Radon-Transform}, \eqref{eq:GRT}, \eqref{eq:SRT}. This naturally leads to the question: 
    
    \emph{Can a nonlinear projectional framework, similar to GRT and SRT, be developed for RTSL?} 
    
    Notably, the presence of the map $\alpha$ introduces a trade-off between the effectiveness of the induced Wasserstein metric and the associated theoretical guarantees. Empirical findings indicate that, given the same number of projections (and consequently the same computational cost), the distances derived from RTSL surpass those obtained from the original Radon Transform. However, incorporating $\alpha$ shifts the transformation from operating on straight lines to a more intricate space. This transition may compromise fundamental properties of the transform, such as injectivity, which might no longer be assured in this new framework. 
\end{remark}

In the next sections, we propose an approach for incorporating the nonlinear projectional framework into the systems of lines setting, thereby generalizing the current RTSL and inducing new variants of the tree-sliced distance.

\section{Radon Transform on Systems of Lines with Nonlinear Projection}
\label{sec:Radon Transform on Systems of Lines with Nonlinear Projection}
In this section, we extend the current Radon Transform on Systems of Lines \cite{tran2024tree, tran2025distancebased} by incorporating nonlinear projections. We then analyze key properties of the resulting transforms, including injectivity.
\subsection{Nonlinear Radon Transform on Systems of Lines}
\label{section:Nonlinear Radon Transform on Systems of Lines}
Given a positive integer $k$ representing the number of lines in a tree system, and a continuous splitting map function $\alpha \in \mathcal{C}(\mathbb{R}^d \times \mathbb{L}^d_k, \Delta_{k-1})$ defining the splitting mechanism. Let $\mathcal{L}$ be a system of $k$ lines in $\mathbb{L}^d_k$ and a scalar $r \ge 0$. For a function $f \in L^1(\mathbb{R}^d)$, define the function $\mathcal{CR}_{\mathcal{L},r}^{\alpha}f \in L^1(\mathcal{L})$ as follows:
    \begin{align} \label{eq:circular-radon-transform-SL}
    &~~~~~~~~~~~~~~\mathcal{CR}_{\mathcal{L},r}^{\alpha}f(x_i+t \cdot \theta_i) \notag \\
        &= \int_{\mathbb{R}^d} f(y) \cdot \alpha(y,\mathcal{L})_i \cdot \delta\left(t - \|y-x_i-r\theta_i\|_2\right)~dy.
    \end{align}
The \textit{Circular Radon Transform on Systems of Lines} (CRTSL) is defined as the operator:
\begin{align}
        \mathcal{CR}^{\alpha} ~ \colon~~~~~ L^1(\mathbb{R}^d) ~~~ &\longrightarrow ~\prod_{\mathcal{L} \in \mathbb{L}^d_k, r \ge 0} L^1(\mathcal{L}) \notag \\f  ~~~~~~~~~~~&\longmapsto ~ \left(\mathcal{CR}_{\mathcal{L},r}^{\alpha}f\right)_{\mathcal{L} \in \mathbb{L}^d_k, r \ge 0}.
\end{align}
This is analogous to the CRT described in Section~\ref{subsection:Circular and Spatial Radon Transforms}. In the case of SRT, consider a positive integer $d_\theta$, an injective continuous map $h \colon \mathbb{R}^d \rightarrow \mathbb{R}^{d_\theta}$, and a splitting map $\alpha \in \mathcal{C}(\mathbb{R}^{d_\theta} \times \mathbb{L}^{d_\theta}_k, \Delta_{k-1})$. Let $\mathcal{L}$ be a system of lines in $\mathbb{L}^{d_\theta}_k$. For a function $f \in L^1(\mathbb{R}^d)$, define the function $\mathcal{H}_{\mathcal{L}}^{\alpha}$ as:
    \begin{align} \label{eq:spatial-radon-transform-SL}
         &~~~~~~~~~~~~~\mathcal{H}_{\mathcal{L}}^{\alpha}f(x_i+t \cdot \theta_i)  \\
        &=\int_{\mathbb{R}^d} f(y) \cdot \alpha(h(y),\mathcal{L})_i \cdot \delta\left(t - \left<h(y)-x_i,\theta_i \right>\right)~dy. \notag
    \end{align}
The \textit{Spatial Radon Transform on Systems of Lines} (SRTSL) is defined as the operator:
    \begin{align} 
        \mathcal{H}^{\alpha} ~\colon ~~~~~~ L^1(\mathbb{R}^d) ~~~&\longrightarrow ~~ \prod_{\mathcal{L} \in \mathbb{L}^{d_\theta}_k} L^1(\mathcal{L}) ~ \notag\\f  ~~~~~~~~&\longmapsto~~~ \left(\mathcal{H}_{\mathcal{L}}^{\alpha}f\right)_{\mathcal{L} \in \mathbb{L}^{d_\theta}_k}.
    \end{align}
    
\begin{remark}
    It is important to note that when the system of lines consists of a single line, i.e. $k=1$, $\mathcal{CR}^\alpha$ and $\mathcal{H}^\alpha$ recover the GRT and the SRT, respectively.
\end{remark}
Properties of $\mathcal{CR}^{\alpha}$ and $\mathcal{H}^{\alpha}$ are discussed in the next part.

\subsection{Well-definedness and Injectivity}
\label{section:Well-definedness and Injectivity}
\textbf{Well-definedness.} Given the setting of $\mathcal{CR}^\alpha$ and $\mathcal{H}^\alpha$ as defined in Eqs.~\eqref{eq:circular-radon-transform-SL}, \eqref{eq:spatial-radon-transform-SL}, we have $\mathcal{CR}_{\mathcal{L},r}^{\alpha}f \in L^1(\mathcal{L})$ and $\mathcal{H}_{\mathcal{L}}^{\alpha}f \in L^1(\mathcal{L})$ for $f \in L^1(\mathbb{R}^d)$. Furthermore, we have the bounds:  
\begin{align}
    \|\mathcal{CR}_{\mathcal{L},r}^{\alpha}f\|_{\mathcal{L}}\le \|f\|_1 ~~~ \text{ and } ~~~ \|\mathcal{H}_{\mathcal{L}}^{\alpha}f\|_\mathcal{L}  \le \|f\|_1.
\end{align}
The proofs for these properties are provided in Appendices~\ref{appendix:well-definedness-CR} and \ref{appendix:well-definedness-S}. Additionally, these proofs imply that if $f \in \mathcal{P}(\mathbb{R}^d)$, then $\mathcal{CR}_{\mathcal{L},r}^{\alpha}f,  \mathcal{H}_{\mathcal{L}}^{\alpha}f \in \mathcal{P}(\mathcal{L})$. 

\paragraph{Injectivity.} For injectivity of $\mathcal{CR}^\alpha$ and $\mathcal{H}^\alpha$, we refer to the concept of $\operatorname{E}(d)$-invariance in splitting maps as introduced in \cite{tran2025distancebased}. The group $\operatorname{E}(d)$ represents the Euclidean group, which consists of all transformations of Euclidean space $\mathbb{R}^d$ that preserve the Euclidean distance between any two points. Through the canonical action of $\operatorname{E}(d)$ on $\mathbb{R}^d$, an induced action of $\operatorname{E}(d)$ on $\mathbb{L}^d_k$ follows. Appendix~\ref{appendix:sec:background-tsw-sl} provides a formal description of the underlying group actions associated with these equivariant constructions. A splitting map $\alpha \in \mathcal{C}(\mathbb{R}^d \times \mathbb{L}^d_k, \Delta_{k-1})$ is $\operatorname{E}(d)$-invariant if:
\begin{align}
        \alpha(gy,g\mathcal{L}) = \alpha(y,\mathcal{L}),
\end{align}
for all $(y, \mathcal{L}) \in \mathbb{R}^d \times \mathbb{L}^d_k$ and $g \in \operatorname{E}(d)$.
We have two results about injectivity of operator $\mathcal{CR}^\alpha$ and $\mathcal{H}^\alpha$.
\begin{theorem}\label{mainbody:injectivity-of-Radon-Transform-circular}
    For an $\operatorname{E}(d)-$invariant splitting map $\alpha \in \mathcal{C}(\mathbb{R}^d \times \mathbb{L}^d_k, \Delta_{k-1})$, $\mathcal{CR}^{\alpha}$ is injective.
\end{theorem}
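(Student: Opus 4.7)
The plan is to reduce the claim to the classical injectivity of the spherical-mean (circular Radon) transform on $\Rbb^d$ by exploiting the $\operatorname{E}(d)$-invariance of $\alpha$ to strip away the $\mathcal{L}$-dependence of the weights. Assume $\mathcal{CR}^\alpha f \equiv 0$ and restrict attention to the slice $r=0$; this already gives, for every $\mathcal{L}\in\mathbb{L}^d_k$, every index $i$, and every $t\ge 0$, the weighted-sphere identity
\[
\int_{\Rbb^d} f(y)\,\alpha(y,\mathcal{L})_i\,\delta\bigl(t-\|y-x_i\|_2\bigr)\,dy = 0.
\]

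Next I would fix a reference system $\mathcal{L}^0\in\mathbb{L}^d_k$ with $i$-th line $(x_i^0,\theta_i^0)$, set $\alpha_i^0(z):=\alpha(z,\mathcal{L}^0)_i$, and apply the vanishing at $g\mathcal{L}^0$ for every $g\in\operatorname{E}(d)$. By $\operatorname{E}(d)$-invariance, $\alpha(z,g\mathcal{L}^0)_i = \alpha_i^0(g^{-1}z)$; writing $g(y)=Ay+b$ with $A\in\OO(d)$, substituting $y=gz$ (so that the $g$-distances cancel), and parameterizing $S(x_i^0,t)$ by $z=x_i^0+t\xi$ with $\xi\in\Sbb^{d-1}$, the identity rewrites, upon setting $u:=Ax_i^0+b$ and $\eta:=A\xi$, as
\[
\int_{\Sbb^{d-1}} f(u+t\eta)\,W_t(A^{-1}\eta)\,dS(\eta)=0 \qquad \forall\, u\in\Rbb^d,\ A\in\OO(d),\ t\ge 0,
\]
where $W_t(\xi):=\alpha_i^0(x_i^0+t\xi)$. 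The crucial step is to integrate this identity over $A\in\OO(d)$ against the Haar measure: since $A^{-1}\eta$ is uniformly distributed on $\Sbb^{d-1}$ for each fixed $\eta$, the inner average collapses to a scalar $\overline{W}_t$ (the spherical average of $\alpha_i^0$ over $S(x_i^0,t)$) independent of $\eta$, producing $\overline{W}_t\cdot M_t f(u)=0$ for all $u\in\Rbb^d$ and $t\ge 0$, where $M_t f(u):=\int_{\Sbb^{d-1}}f(u+t\eta)\,dS(\eta)$ denotes the spherical mean.

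Finally I would show that $\overline{W}_t>0$ on an open interval of $t$. Since $\alpha(\cdot,\mathcal{L}^0)\in\Delta_{k-1}$, at any $y_0\in\Rbb^d$ some coordinate $\alpha_i^0(y_0)$ is strictly positive; for this $i$, continuity of $\alpha$ yields $\alpha_i^0>0$ on an open neighborhood $U$ of $y_0$, and (picking $y_0\neq x_i^0$) the cap $U\cap S(x_i^0,t)$ has positive spherical measure for $t$ in an open interval around $\|y_0-x_i^0\|$. Therefore $M_t f\equiv 0$ on $\Rbb^d$ for all $t$ in this interval, and the classical Fourier-slice identity expressing $\widehat{M_t f}(\xi)$ as $\hat f(\xi)$ times a Bessel factor in $t|\xi|$—whose zeros in $t$ are isolated for each fixed $\xi\neq 0$—forces $\hat f\equiv 0$, whence $f=0$. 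The main obstacle I foresee is precisely this decoupling of the $\mathcal{L}$-dependent weight from the spherical integration of $f$; the $\operatorname{E}(d)$-invariance hypothesis is essential because it keeps the same fixed function $\alpha_i^0$ across all $\operatorname{E}(d)$-translates of $\mathcal{L}^0$, enabling the $\OO(d)$-averaging trick that collapses the weight to a single $\eta$-independent scalar $\overline{W}_t$.
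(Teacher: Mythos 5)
Your proof is correct, but it follows a genuinely different route from the paper's. The paper collapses the splitting weights by a degenerate choice of system: for each $\theta$ it takes all $k$ lines of $\mathcal{L}$ to coincide with $(0,\theta)$ and sums the transform over the $k$ indices, so that the simplex constraint $\sum_{i}\alpha(y,\mathcal{L})_i=1$ eliminates $\alpha$ entirely and the full family $(\theta,r)\in\mathbb{S}^{d-1}\times\mathbb{R}_{\ge 0}$ reproduces the classical circular Radon transform, whose injectivity is then invoked as a black box; notably, that argument never actually uses the $\operatorname{E}(d)$-invariance hypothesis. You instead keep a single index and a single arbitrary reference system, use only the slice $r=0$, and eliminate the weight by averaging over $\OO(d)$ --- which is precisely where the invariance hypothesis does real work --- at the cost of needing the sharper fact that spherical means over all centers but radii restricted to an open interval already determine $f$, which you correctly supply via the Fourier--Bessel factor with isolated zeros. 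The trade-off: the paper's proof is shorter and cites a classical result, but needs the whole $r$-family and degenerate systems; yours is self-contained up to a standard Fourier identity, genuinely exploits the stated hypothesis, and as a bonus establishes injectivity of the fixed-$r$ sub-family (replace the center $x_i^0$ by $x_i^0+r\theta_i^0$ throughout), which is what the positive-definiteness of CircularTSW at a fixed $r$, and of CircularTSW$_{r=0}$ in particular, actually requires. Two cosmetic repairs: choose $y_0$ outside the finite set $\{x_1^0,\ldots,x_k^0\}$ \emph{before} selecting the index $i$ with $\alpha_i^0(y_0)>0$, so that $y_0\neq x_i^0$ is automatic; and read the sphere identities ``for a.e.\ $t$,'' since the restriction of an $L^1$ function to spheres is only defined almost everywhere in the radius. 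Neither affects the argument.
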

\begin{theorem}\label{mainbody:injectivity-of-Radon-Transform-spatial}
    For an $\operatorname{E}(d_\theta)-$invariant splitting map $\alpha \in \mathcal{C}(\mathbb{R}^{d_\theta} \times \mathbb{L}^{d_\theta}_k, \Delta_{k-1})$, $\mathcal{H}^{\alpha}$ is injective.
\end{theorem}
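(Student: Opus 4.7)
The plan is to reduce Theorem~\ref{mainbody:injectivity-of-Radon-Transform-spatial} to the injectivity of the linear Radon Transform on Systems of Lines $\mathcal{R}^{\alpha}$ on $\mathbb{R}^{d_\theta}$, which is established in \cite{tran2025distancebased} under the $\operatorname{E}(d_\theta)$-invariance hypothesis. The reduction goes through the pushforward by $h$.

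First, I would show that $\mathcal{H}^{\alpha}$ factors as $f \mapsto \mu_f := h_\sharp(f\cdot\lambda_d) \mapsto \mathcal{R}^{\alpha}\mu_f$, where $\lambda_d$ denotes Lebesgue measure on $\mathbb{R}^d$ and $\mathcal{R}^{\alpha}$ is interpreted as acting on finite signed Borel measures on $\mathbb{R}^{d_\theta}$. Concretely, the change of variables $z = h(y)$ applied to Eq.~\eqref{eq:spatial-radon-transform-SL} yields
\begin{align}
\mathcal{H}^{\alpha}_{\mathcal{L}}f(x_i + t\theta_i) = \int_{\mathbb{R}^{d_\theta}} \alpha(z,\mathcal{L})_i \, \delta\bigl(t - \langle z - x_i, \theta_i\rangle\bigr) \, d\mu_f(z),
\end{align}
so $\mathcal{H}^{\alpha}f$ depends on $f$ only through $\mu_f$ and coincides with $\mathcal{R}^{\alpha}\mu_f$ line-system by line-system.

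Second, I would invoke the linear analogue of Theorem~\ref{mainbody:injectivity-of-Radon-Transform-circular} from \cite{tran2025distancebased} to deduce $\mu_f = \mu_g$ whenever $\mathcal{H}^{\alpha}f = \mathcal{H}^{\alpha}g$. If the cited injectivity statement is phrased only for $L^1$-densities, I would lift it to finite Borel measures, either by mollification (convolving $\mu_f$ with a smooth approximate identity, applying the $L^1$ statement, and passing to the weak-$\ast$ limit via continuity of $\mathcal{R}^{\alpha}$ in its input measure) or by re-inspecting the proof in \cite{tran2025distancebased} to verify that its arguments go through at the measure level, using $\operatorname{E}(d_\theta)$-invariance to reduce each single line to a standard Radon Transform of a weighted version of $\mu_f$.

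Third, I would use injectivity of $h$: as a continuous injection between Polish spaces, $h$ restricts to a Borel isomorphism onto its image, so the pushforward map $h_\sharp$ is injective on finite Borel measures on $\mathbb{R}^d$. Consequently, $\mu_f = \mu_g$ forces $f \cdot \lambda_d = g \cdot \lambda_d$, i.e.\ $f = g$ almost everywhere, as required.

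The main obstacle is the second step. When $d < d_\theta$ (or $h$ has lower-dimensional image), $\mu_f$ is supported on the lower-dimensional set $h(\mathbb{R}^d)$ and is typically singular with respect to Lebesgue measure on $\mathbb{R}^{d_\theta}$, so one must ensure that the $\operatorname{E}(d_\theta)$-invariant injectivity result for $\mathcal{R}^{\alpha}$ is available, or can be promoted, to the class of finite Borel measures rather than to absolutely continuous densities only. The mollification argument should suffice, but verifying the requisite weak-$\ast$ continuity of $\mathcal{R}^{\alpha}$ under the mere continuity assumption on $\alpha$ is the crux of the technical work.
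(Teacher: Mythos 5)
Your proposal follows essentially the same route as the paper: both reduce the injectivity of $\mathcal{H}^{\alpha}$ to the injectivity of the linear Radon Transform on Systems of Lines $\mathcal{R}^{\alpha}$ on $\mathbb{R}^{d_\theta}$ established in \cite{tran2025distancebased}, by pushing $f$ forward along $h$, verifying the intertwining identity $\mathcal{H}^{\alpha}_{\mathcal{L}}f = \mathcal{R}^{\alpha}_{\mathcal{L}}(h_\sharp f)$ for every $\mathcal{L}$, and then using injectivity of the pushforward to conclude. The difference is in how the pushforward is formalized: the paper defines $h_\sharp f$ as the function on $\mathbb{R}^{d_\theta}$ equal to $f\circ h^{-1}$ on $h(\mathbb{R}^d)$ and $0$ elsewhere, treats it as an element of $L^1(\mathbb{R}^{d_\theta})$, and checks the identity by a formal change of variables with no Jacobian factor, whereas you work with the pushforward measure $h_\sharp(f\,\lambda_d)$. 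The obstacle you flag in your second step is real and is in fact sharper than anything the paper addresses: when $d<d_\theta$ the image $h(\mathbb{R}^d)$ is Lebesgue-null in $\mathbb{R}^{d_\theta}$, so the paper's $h_\sharp f$ is zero almost everywhere as an $L^1$ function and the cited $L^1$ injectivity result does not literally apply to it; the paper simply does not carry out the measure-level extension or mollification argument you outline. So your plan is not a different proof but a more careful rendition of the paper's, and completing the weak-$\ast$ continuity step you identify would close a gap that the published argument leaves open rather than introduce a new one.
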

The proofs of Theorem~\ref{mainbody:injectivity-of-Radon-Transform-circular} and Theorem~\ref{mainbody:injectivity-of-Radon-Transform-spatial} are provided in Appendices~\ref{appendix:Proof for Theorem {mainbody:injectivity-of-Radon-Transform-circular}} and \ref{Proof for Theorem {mainbody:injectivity-of-Radon-Transform-spatial}}. Intuitively, the reason why 
$\operatorname{E}(d)-$invariance in splitting maps ensures the injectivity of $\mathcal{CR}^{\alpha}$ stems from the fact that the circular defining function in Eq.~\eqref{eq:circular-defining-f} is primarily based on the Euclidean norm $\|\cdot \|_2$, which itself is an $\operatorname{E}(d)-$ function. Similarly, the reason why $\operatorname{E}(d_\theta)$-invariance in splitting maps guarantees the injectivity of $\mathcal{H}^{\alpha}$ is that this property is essential for achieving injectivity in the standard Radon Transform on Systems of Lines in $\mathbb{R}^{d_\theta}$, as discussed in \cite{tran2025distancebased}.

\subsection{Spatial Spherical Radon Transform on Spherical Trees}
In \cite{tran2025spherical}, the tree-sliced framework is extended to functions defined on hyperspheres. The techniques presented in Sections~\ref{section:Nonlinear Radon Transform on Systems of Lines} and \ref{section:Well-definedness and Injectivity} can be adapted to the spherical setting. We provide a brief derivation here, while a more detailed background and notation are given in Appendix~\ref{appendix:background-stsw}. For $f \in L^1(\mathbb{S}^d)$, we recall the Spherical Radon Transform on Spherical Trees (SRTST), which  transforms $f$ to $\mathcal{R}^\alpha_{\mathcal{T}}f \in L^1(\mathcal{T})$, where:
\begin{align}
&~~~~~~~~~~\mathcal{R}^\alpha_{\mathcal{T}}f(t,r^x_{y_i}) \notag \\ &= \int_{\mathbb{S}^d} f(y) \cdot \alpha(y, \mathcal{T})_i \cdot \delta(t  - \operatorname{arccos}\left<x,y \right>) ~ dy,
\end{align}
Since the literature on defining functions for GRT on the sphere is limited, we focus only on the spatial version of SRTST. Given a positive integer $d_\theta$ and an injective continuous map $h \colon \mathbb{S}^{d} \rightarrow \mathbb{S}^{d_\theta}$, the \textit{Spatial Spherical Radon Transform on Spherical Trees} transforms $f \in L^1(\mathbb{S}^d)$ to $\mathcal{H}^\alpha_{\mathcal{T}}f \in L^1(\mathcal{T})$, where:
\begin{align}
&~~~~~\mathcal{H}^\alpha_{\mathcal{T}}f(t,r^x_{y_i})  \\ &= \int_{\mathbb{S}^d} f(y) \cdot \alpha(h(y), \mathcal{T})_i \cdot \delta(t  - \operatorname{arccos}\left<x,h(y) \right>) ~ dy.\notag
\end{align}
As stated in \cite{tran2025spherical}, $\operatorname{O}(d+1)$-invariance is a necessary property of the splitting map $\alpha$ to ensure the injectivity of $\mathcal{R}^\alpha$. A similar result holds in our setting, as described below.
\begin{theorem}\label{mainbody:injectivity-of-generalized-spherical-radon-transform-tree}
    For an $\operatorname{O}(d_\theta+1)$-invariant splitting map $\alpha \in \mathcal{C}(\mathbb{S}^{d_\theta} \times \mathbb{T}^{d_\theta}_k,\Delta_{k-1})$, $\mathcal{H}^\alpha$ is injective.
\end{theorem}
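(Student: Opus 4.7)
The plan is to reduce the statement to the already-established injectivity of the standard Spherical Radon Transform on Spherical Trees (SRTST) $\mathcal{R}^\alpha$ over $\mathbb{S}^{d_\theta}$, which is guaranteed by \cite{tran2025spherical} when the splitting map $\alpha$ is $\operatorname{O}(d_\theta+1)$-invariant. The strategy parallels that of Theorem~\ref{mainbody:injectivity-of-Radon-Transform-spatial}: push the input forward through $h$, apply the known injectivity result in the target space, and then undo the pushforward using injectivity of $h$.

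For any $f\in L^1(\mathbb{S}^d)$, I would first introduce the pushforward Borel measure $\mu_f := h_\sharp(f\,dy)$ on $\mathbb{S}^{d_\theta}$. Because the integrand of $\mathcal{H}^{\alpha}_{\mathcal{T}}f$ depends on $y$ only through $h(y)$, the defining identity of pushforward yields
\begin{equation*}
    \mathcal{H}^{\alpha}_{\mathcal{T}}f(t, r^x_{y_i}) = \int_{\mathbb{S}^{d_\theta}} \alpha(z,\mathcal{T})_i \cdot \delta\bigl(t-\operatorname{arccos}\langle x,z\rangle\bigr)\, d\mu_f(z),
\end{equation*}
which is exactly $\mathcal{R}^{\alpha}_{\mathcal{T}}\mu_f$, with $\mathcal{R}^{\alpha}$ extended from $L^1$-densities to Borel measures in the natural way. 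Hence $\mathcal{H}^{\alpha}f_1 = \mathcal{H}^{\alpha}f_2$ forces $\mathcal{R}^{\alpha}\mu_{f_1} = \mathcal{R}^{\alpha}\mu_{f_2}$ on every $\mathcal{T}\in\mathbb{T}^{d_\theta}_k$.

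Invoking the $\operatorname{O}(d_\theta+1)$-equivariance-based injectivity of $\mathcal{R}^{\alpha}$ from \cite{tran2025spherical}, I would conclude $\mu_{f_1}=\mu_{f_2}$. Since $\mathbb{S}^d$ is compact and $h$ is continuous and injective, $h$ is a homeomorphism onto its image $h(\mathbb{S}^d)\subseteq\mathbb{S}^{d_\theta}$, so $h_\sharp$ is a bijection between Borel measures on $\mathbb{S}^d$ and Borel measures supported on $h(\mathbb{S}^d)$. Therefore $f_1\,dy = f_2\,dy$ as measures on $\mathbb{S}^d$, yielding $f_1 = f_2$ almost everywhere.

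The main obstacle I expect lies in the reduction step: when $d < d_\theta$, the pushforward $\mu_f$ is supported on the lower-dimensional compact set $h(\mathbb{S}^d)$ and is typically singular with respect to the uniform measure on $\mathbb{S}^{d_\theta}$, so $\mu_f$ need not be represented by an $L^1$ density. Resolving this requires either verifying that the argument in \cite{tran2025spherical} only uses integration of the fibre distribution $\alpha(\cdot,\mathcal{T})_i\cdot\delta(\cdot-\operatorname{arccos}\langle x,\cdot\rangle)$ against the underlying measure — which is compatible with the $\operatorname{O}(d_\theta+1)$-equivariance machinery — or inserting a short mollification/approximation step that reduces the singular case to the $L^1$ setting already handled there.
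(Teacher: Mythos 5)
Your proposal follows essentially the same route as the paper's proof: push $f$ forward through $h$, identify $\mathcal{H}^{\alpha}_{\mathcal{T}}f$ with the Spherical Radon Transform on Spherical Trees of the pushforward, and invoke the injectivity of $\mathcal{R}^{\alpha}$ for $\operatorname{O}(d_\theta+1)$-invariant $\alpha$ from \cite{tran2025spherical}. The only substantive difference is that you work with the pushforward \emph{measure} $\mu_f = h_\sharp(f\,dy)$, whereas the paper defines a pushforward \emph{function} $h_\sharp f$ equal to $f\circ h^{-1}$ on $h(\mathbb{S}^d)$ and $0$ elsewhere, and verifies $\mathcal{H}^{\alpha}_{\mathcal{T}}f=\mathcal{R}^{\alpha}_{\mathcal{T}}(h_\sharp f)$ by a formal change of variables. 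Your formulation is the more defensible one: when $d<d_\theta$ the image $h(\mathbb{S}^d)$ is a null set in $\mathbb{S}^{d_\theta}$, so the paper's $h_\sharp f$ vanishes almost everywhere as an element of $L^1(\mathbb{S}^{d_\theta})$, and the displayed change of variables (which drops any Jacobian factor) cannot be read literally. Consequently, the obstacle you flag at the end --- that $\mu_f$ is singular with respect to the uniform measure on $\mathbb{S}^{d_\theta}$ while the cited injectivity result is stated for $L^1$ densities --- is a genuine gap, but it is one the paper's own proof shares and silently glosses over rather than one you have introduced; either of the repairs you sketch (checking that the injectivity argument of \cite{tran2025spherical} only uses integration of the fibre distributions against the underlying measure, or a mollification step) would be needed to make the paper's argument fully rigorous as well. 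Modulo that shared caveat, your reduction is correct and matches the intended proof.
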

The detailed derivation of the Spatial Spherical Radon Transform on Spherical Trees and the proof of Theorem~\ref{mainbody:injectivity-of-generalized-spherical-radon-transform-tree} are provided in Appendices~\ref{appendix:def-Spatial Spherical Radon Transform on Spherical Trees} and \ref{appendix:Proof for Theorem{mainbody:injectivity-of-generalized-spherical-radon-transform-tree}}.

\section{Tree-Sliced Wasserstein Distance with Nonlinear Projection}
\label{sec:Tree-Sliced Wasserstein Distance with Nonlinear Projection}

In this section, we propose new distance between measures derived from the variants of the Radon Transform introduced in Section \ref{sec:Radon Transform on Systems of Lines with Nonlinear Projection}. We also examine different choices of functions that define the nonlinear projections and explain why certain choices lead to more efficient metrics.

\subsection{Definition of Tree-Sliced Distances}
For two probability measures $\mu$ and $\nu$ with density function $f_\mu$ and $f_\nu$, and a fixed $r \ge 0$, the \textit{Circular Tree-Sliced Wasserstein Distance} (CircularTSW) between $\mu$ and $\nu$ is defined as the average Wasserstein distance on the tree-metric space $\mathcal{L}$ between the CRTSL of $f_\mu$ and $f_\nu$. Following \cite{tran2024tree, tran2025distancebased}, this averaging is taken over the space of trees $\mathbb{T}^d_k \subset \mathbb{L}^d_k$, according to a distribution $\sigma$ on $\mathbb{T}^d_k$ which arises from the tree sampling process.
\begin{definition}
The \textit{Circular Tree-Sliced Wasserstein Distance} between $\mu$ and $\nu$ in $\mathcal{P}(\mathbb{R}^d)$ is defined by:    
\begin{align}
\label{eq:CircularTSW-formula}
    &\text{CircularTSW} (\mu,\nu) \notag\\ 
    &~~~~~~~~~~~~~\coloneqq 
 \int_{\mathbb{T}^d_k} \text{W}(\mathcal{CR}^\alpha_{\mathcal{L},r} f_\mu, \mathcal{CR}^\alpha_{\mathcal{L},r} f_\nu) ~d\sigma(\mathcal{L}).
\end{align}    
\end{definition}
Similarly, given a choice of the continuous injective map $h\colon \mathbb{R}^d \rightarrow \mathbb{R}^{d_\theta}$ in SRTSL, we have the definition of the \textit{Spatial Tree-Sliced Wasserstein Distance} (SpatialTSW) between 
$\mu$ and $\nu$.
\begin{definition}
The \textit{Spatial Tree-Sliced Wasserstein Distance} between $\mu$ and $\nu$ in $\mathcal{P}(\mathbb{R}^d)$ is defined by:
\begin{align} 
\label{eq:SpatialTSW-formula} 
\hspace{-1.5em} \text{SpatialTSW} (\mu,\nu) \coloneqq \int_{\mathbb{T}^{d_\theta}_k} \hspace{-0.2em}\text{W}(\mathcal{H}^\alpha_\mathcal{L} f_\mu, \mathcal{H}^\alpha_\mathcal{L} f_\nu) ~d\sigma(\mathcal{L}).
\end{align}
\end{definition}

Both CircularTSW and SpatialTSW distances are, indeed, metrics on the space $\mathcal{P}(\mathbb{R}^d)$ of measures on $\mathbb{R}^d$.
\begin{theorem}
\label{thm:CircularTSW and SpatialTSW are metrics}
    CircularTSW and SpatialTSW are metrics on the space $\mathcal{P}(\mathbb{R}^d)$.
\end{theorem}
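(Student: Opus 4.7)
The plan is to verify the four standard metric axioms for CircularTSW; the proof for SpatialTSW is structurally identical after replacing $\mathcal{CR}^\alpha$ by $\mathcal{H}^\alpha$, $\mathbb{T}^d_k$ by $\mathbb{T}^{d_\theta}_k$, and invoking Theorem~\ref{mainbody:injectivity-of-Radon-Transform-spatial} in place of Theorem~\ref{mainbody:injectivity-of-Radon-Transform-circular}. Non-negativity, symmetry, and the triangle inequality follow by pulling the corresponding properties out of the integrand. For every $\mathcal{L} \in \mathbb{T}^d_k$, the well-definedness bounds established in Section~\ref{section:Well-definedness and Injectivity} guarantee $\mathcal{CR}^\alpha_{\mathcal{L},r}f_\mu, \mathcal{CR}^\alpha_{\mathcal{L},r}f_\nu \in \mathcal{P}(\mathcal{L})$, so their Wasserstein distance on the tree-metric space $\mathcal{L}$ is non-negative and symmetric; integration against the probability measure $\sigma$ preserves these properties. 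For the triangle inequality, I would fix $\mu,\nu,\eta \in \mathcal{P}(\mathbb{R}^d)$, apply the pointwise triangle inequality of $\text{W}$ on each $\mathcal{L}$ to the triple $(\mathcal{CR}^\alpha_{\mathcal{L},r}f_\mu, \mathcal{CR}^\alpha_{\mathcal{L},r}f_\eta, \mathcal{CR}^\alpha_{\mathcal{L},r}f_\nu)$, and integrate against $d\sigma(\mathcal{L})$.

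The main obstacle is the identity of indiscernibles. The direction $\mu=\nu \Rightarrow \text{CircularTSW}(\mu,\nu)=0$ is immediate. For the converse, suppose $\text{CircularTSW}(\mu,\nu)=0$. Since the integrand is non-negative and $\sigma$ is a probability measure, $\text{W}(\mathcal{CR}^\alpha_{\mathcal{L},r}f_\mu, \mathcal{CR}^\alpha_{\mathcal{L},r}f_\nu)=0$ for $\sigma$-almost every $\mathcal{L}$; because $\text{W}$ is a metric on $\mathcal{P}(\mathcal{L})$, this gives $\mathcal{CR}^\alpha_{\mathcal{L},r}f_\mu = \mathcal{CR}^\alpha_{\mathcal{L},r}f_\nu$ for $\sigma$-a.e. $\mathcal{L}$. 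The remaining steps are to (i) promote this almost-everywhere equality to every $\mathcal{L} \in \mathbb{L}^d_k$, and (ii) invoke the injectivity statement Theorem~\ref{mainbody:injectivity-of-Radon-Transform-circular}, under the standing $\operatorname{E}(d)$-invariance hypothesis on $\alpha$, to conclude $f_\mu = f_\nu$ and hence $\mu = \nu$.

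Step (i) is where the substantive work lies. I would establish continuity of the map $\mathcal{L} \mapsto \mathcal{CR}^\alpha_{\mathcal{L},r}f$ from $\mathbb{L}^d_k$ into the space of finite measures with respect to a suitable weak topology; this continuity is inherited from the continuity of the splitting map $\alpha$ and the smooth dependence on the line parameters $(x_i,\theta_i)$. Combined with the full-support property of the tree-sampling distribution $\sigma$ on $\mathbb{T}^d_k$ from \cite{tran2024tree, tran2025distancebased}, this upgrades the $\sigma$-a.e. equality to equality for every $\mathcal{L} \in \mathbb{T}^d_k$, and an $\operatorname{E}(d)$-orbit argument then extends it to every $\mathcal{L} \in \mathbb{L}^d_k$. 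A subtlety to flag is that $\mathcal{CR}^\alpha$ in Theorem~\ref{mainbody:injectivity-of-Radon-Transform-circular} is indexed by pairs $(\mathcal{L}, r)$, whereas CircularTSW uses a single fixed $r$; this is resolved by observing that the proof of injectivity goes through verbatim for any fixed $r \ge 0$, so that step (ii) applies as stated and completes the argument.
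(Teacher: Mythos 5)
Your proposal is correct and follows essentially the same route as the paper's proof in Appendix~\ref{appendix:thm:CircularTSW and SpatialTSW are metrics}: verify non-negativity, symmetry, and the triangle inequality by pulling the corresponding properties of $\text{W}$ through the integral over $\sigma$, and deduce identity of indiscernibles from the injectivity of $\mathcal{CR}^\alpha$ (Theorem~\ref{mainbody:injectivity-of-Radon-Transform-circular}). In fact you are somewhat more careful than the paper, which passes directly from $\sigma$-a.e.\ equality of the transforms to the injectivity theorem without the continuity/full-support argument of your step (i), and which does not remark on the discrepancy between the fixed-$r$ transform appearing in CircularTSW and the family over all $r\ge 0$ in Theorem~\ref{mainbody:injectivity-of-Radon-Transform-circular}.
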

The proof of Theorem~\ref{thm:CircularTSW and SpatialTSW are metrics} is presented in Appendix~\ref{appendix:thm:CircularTSW and SpatialTSW are metrics}. The algorithms for CircularTSW and SpatialTSW are presented in Appendix~\ref{app:subsec:alg_TSW} (Alg.~\ref{alg:compute-tsw-sl-circular} and~\ref{alg:compute-tsw-sl-spatial} respectively).

\subsection{Components in CircularTSW and SpatialTSW}
Both CircularTSW and SpatialTSW distances depend on the choice of splitting maps $\alpha$. Additionally, CircularTSW is influenced by the parameter $r \ge 0$, whereas SpatialTSW is determined by the selection of the injective map $h$.

\paragraph{Splitting maps $\alpha$.} For both CircularTSW and SpatialTSW, we follow the construction of the splitting map $\alpha$ as proposed in \cite{tran2025distancebased}, defined as:
\begin{align}
    \alpha(x,\mathcal{L})_l = \operatorname{softmax} \Bigl( \{ d(x,\mathcal{L})_i\}_{i=1}^k \Bigr),
\end{align}
where $d(x,\mathcal{L})_i$ represents the distance between $x$ and $i^{\text{th}}$ line of $\mathcal{L}$. This choice of $\alpha$ ensures the $\operatorname{E}(d)$-invariance while also incorporating positional information between a point and the tree system. Consequently, it results in meaningful and varied mass distributions that adapt to each specific system. Moreover, the use of the $\operatorname{softmax}$ function guarantees that $\alpha$ produces a valid probability vector in the standard simplex $\Delta_{k-1}$. The splitting map for CircularTSW is further discussed in Appendix~\ref{appendix:splitting-map-circular-tsw}.

\paragraph{Radius $r$ in CircularTSW.}
\label{mainbody:radius-in-circular-tsw}
In the formulation of the original Circular Radon Transform, the radius $r$ plays a crucial role. Together with $\theta \in \mathbb{S}^{d-1}$, the term $r\theta$ spans the entire space $\mathbb{R}^d$. This ensures that the level sets defined by the defining circular function in Eq.~\eqref{eq:circular-defining-f} can represent arbitrary $(d-1)$-dimensional spheres in $\mathbb{R}^d$:
\begin{align}
    \{ y \in \mathbb{R}^d ~ \colon ~ t = \|y-r\theta\|_2 \} ~ \text{ for } ~ t \ge 0.
\end{align}
However, in the framework of Tree-Sliced methods, since the sources within tree systems already encompass the entire space $\mathbb{R}^d$, considering all values of $r \ge 0$ becomes redundant. In other words, for a fixed $r \ge 0$,  the level sets arising in the Circular Radon Transform on Systems of Lines from Eq.~\eqref{eq:circular-radon-transform-SL} can still effectively represent arbitrary $(d-1)$-dimensional spheres in $\mathbb{R}^d$:
\begin{align}
    \{ y \in \mathbb{R}^d ~ \colon ~ t = \|y- x_i -r\theta_i\|_2 \} ~ \text{ for } ~ t \ge 0.
\end{align}
This is why, in the definition of CircularTSW, we fix a specific $r \ge 0$. Furthermore, we want to examine CircularTSW$_{r=0}$, a special case of CircularTSW when $r=0$. In this scenario, by selecting the concurrent tree space as proposed in \citet{tran2025distancebased}, the practical implementation of CircularTSW becomes significantly more efficient, reducing computational complexity. In summary, transforming a measure $\mu \in \mathcal{P}(\mathbb{R}^d)$ involves projecting its support onto $k$ lines in the tree system. When $r=0$, all support points of $\mu$ share the same coordinates when projected onto these $k$ lines. As a result, projecting and sorting cost is reduced. Figure~\ref{fig:circular-concentric} illustrates this phenomenon. Furthermore, we emphasize that CircularTSW$_{r=0}$ is specifically designed for tree settings, where splitting maps play a crucial role. Since the coordinates are identical across these lines, the tree structure and the distance-based splitting map are necessary to distinguish between the $k$ lines. Empirical results in Appendix~\ref{appendix:ablate-k-circular-concentric} show that CircularTSW$_{r=0}$ performs well in a tree setting but poortly in original sliced setting. 

\begin{figure}[t]
\vskip 0.2in
\begin{center}
\centerline{\includegraphics[width=.7\columnwidth]{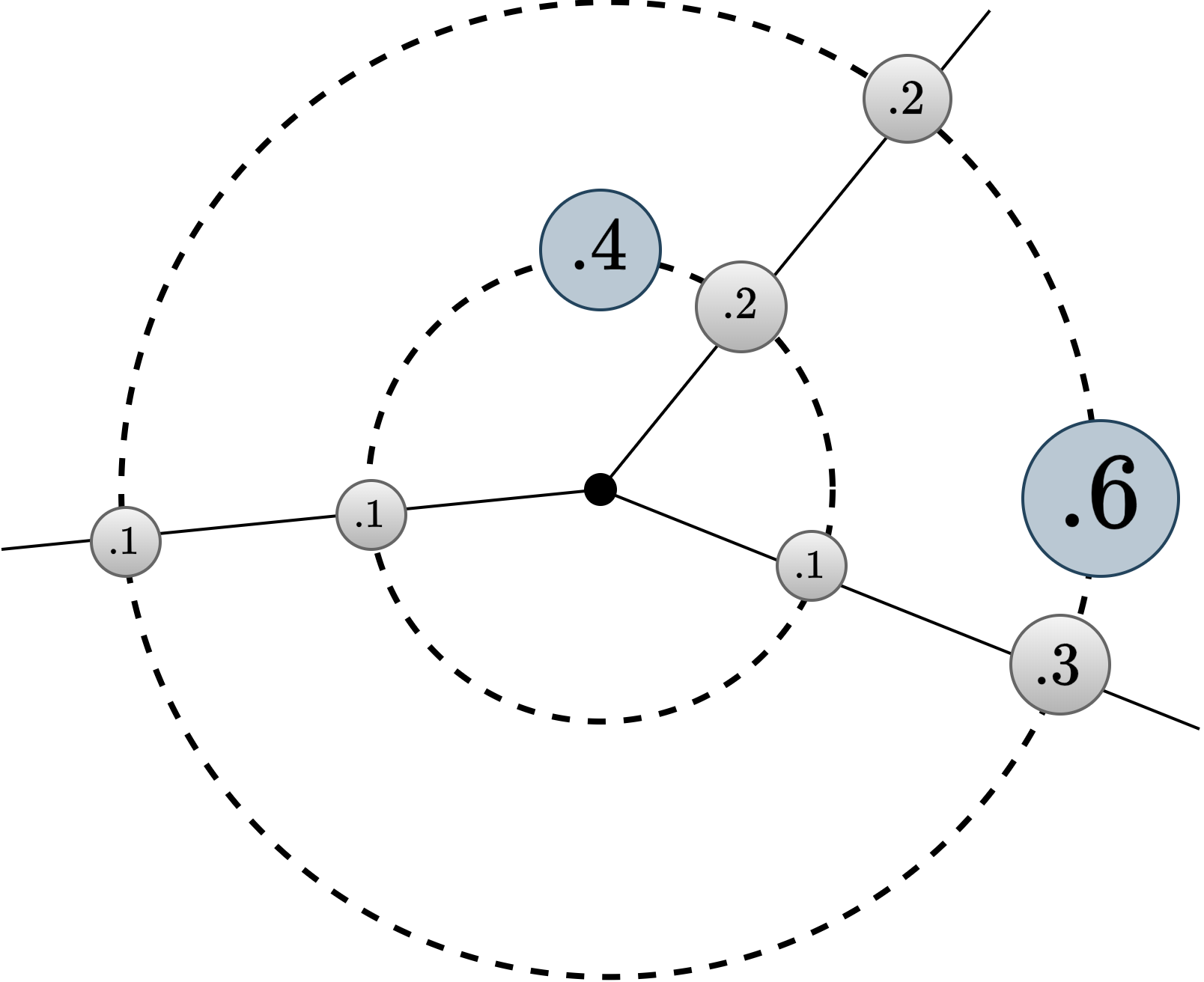}}
\caption{An illustration depicting CircularTSW$_{r=0}$. Given a support, the projection coordinates are identical when projected onto $k$ lines.}
\label{fig:circular-concentric}
\end{center}
\vspace{-25pt}
\end{figure}

\paragraph{The choice of the map $h$ in SpatialTSW.} As discussed in Section \ref{section:Nonlinear Radon Transform on Systems of Lines}, the map 
$h$ in SpatialTSW must be both injective and continuous. One approach to selecting this map is based on odd degree homogeneous polynomials. However, following the constructions in \cite{kolouri2019generalized, rouvire-2015} can lead to an excessively large new dimension $d_\theta = \binom{m+d-1}{d-1}$. To address this, we propose an alternative approach: Consider $h ~ \colon ~ \mathbb{R}^d \rightarrow \mathbb{R}^d$ defined by:
\begin{align}
    h(x_1, \ldots, x_d) = (f_1(x_1), \ldots, f_d(x_d)), 
\end{align}
where $\{f_i\colon \mathbb{R} \rightarrow \mathbb{R}\}_{i=1}^d$ are injective and continuous functions. A simple choice for $f_i$ is an odd-degree polynomial that remains injective, such as $f_i(x) = x_i + x_i^3$. Another approach, inspired by \cite{chen2022augmented}, involves concatenating the input with the output of a neural network by concatenating input with an arbitrary neural network $\phi(\cdot)$. Specifically, we define $h \colon \mathbb{R}^d \rightarrow \mathbb{R}^{d+d'}$ as $h(x) = (x,\phi(x))$, where $\phi \colon \mathbb{R}^d \rightarrow \mathbb{R}^{d'}$ is a neural network. This choice introduces learnable parameters for $h$, offering a trade-off between potentially improving performance and increasing computational cost.

\subsection{Computational Complexity}

Consider two discrete measures $\mu, \nu \in \mathcal{P}(\mathbb{R}^d)$ with $m,n$ support points, respectively. The computational complexity of the original Sliced Wasserstein distance is $\mathcal{O}(Ln \log n + Ldn)$, where $L$ represents the number of samples used in the Monte Carlo approximation \cite{peyre2019computational, nguyen2024quasimonte}. In comparison, the computational complexity of the Tree-Sliced Wasserstein (TSW) distances, such as TSW-SL in \cite{tran2024tree} or Db-TSW in \cite{tran2025distancebased}, is $\mathcal{O}(Lkn \log n + Lkdn)$, where $L$ denotes the number of tree samples used in the Monte Carlo approximation, and $k$ is the number of lines per tree. This computational difference highlights why a fair comparison between the sliced method and the tree-sliced method requires ensuring that the total number of directions remains the same. For SpatialTSW, its computational complexity is $\mathcal{O}(Lkn \log n + Lkd_\theta n)$, with an additional initial cost for computing the function $h$. This complexity matches TSW-SL and Db-TSW but operates in the transformed space $\mathbb{R}^{d_\theta}$. For CircularTSW with a general $r \geq 0$, the complexity remains $\mathcal{O}(Lkn \log n + Lkd_\theta n)$, yet it achieves faster empirical runtime than Db-TSW by computing vector norms instead of vector products. Notably, for $r=0$, CircularTSW$_{r=0}$ improves complexity to $\mathcal{O}(Ln \log n + Lkd_\theta n)$. This reduction arises because the $\mathcal{O}(n \log n)$ sorting step per line is required for only a single line, rather than all $k$ lines in a tree, as illustrated in Figure~\ref{fig:circular-concentric}. The empirical efficiency of CircularTSW and CircularTSW$_{r=0}$ is demonstrated in Figure~\ref{fig:compare-runtime}, where we use $L = 10000$ for SW and $L = 2500, k = 4$ for Tree-Sliced methods, following the practical setting used in the Diffusion Model experiment. CircularTSW$_{r=0}$ scales efficiently with the number of supports $n$ and is the only Tree-Sliced method that closely matches the speed of vanilla SW.

\subsection{Spatial Spherical Tree-Sliced Wasserstein Distance}
From the Spatial Spherical Radon Transform on Spherical Trees, we introduce a spherical variant of the Spatial Tree-Sliced Wasserstein distance, referred to as SpatialSTSW distance. A detailed derivation, along with the selection of the corresponding injective map and theoretical proofs for the SpatialSTSW distance, is provided in Appendix~\ref{appendix:dis-Spatial Spherical Tree-Sliced Wasserstein distance}.

\begin{figure}[t]
\vskip 0.2in
\begin{center}
\centerline{\includegraphics[width=\columnwidth]{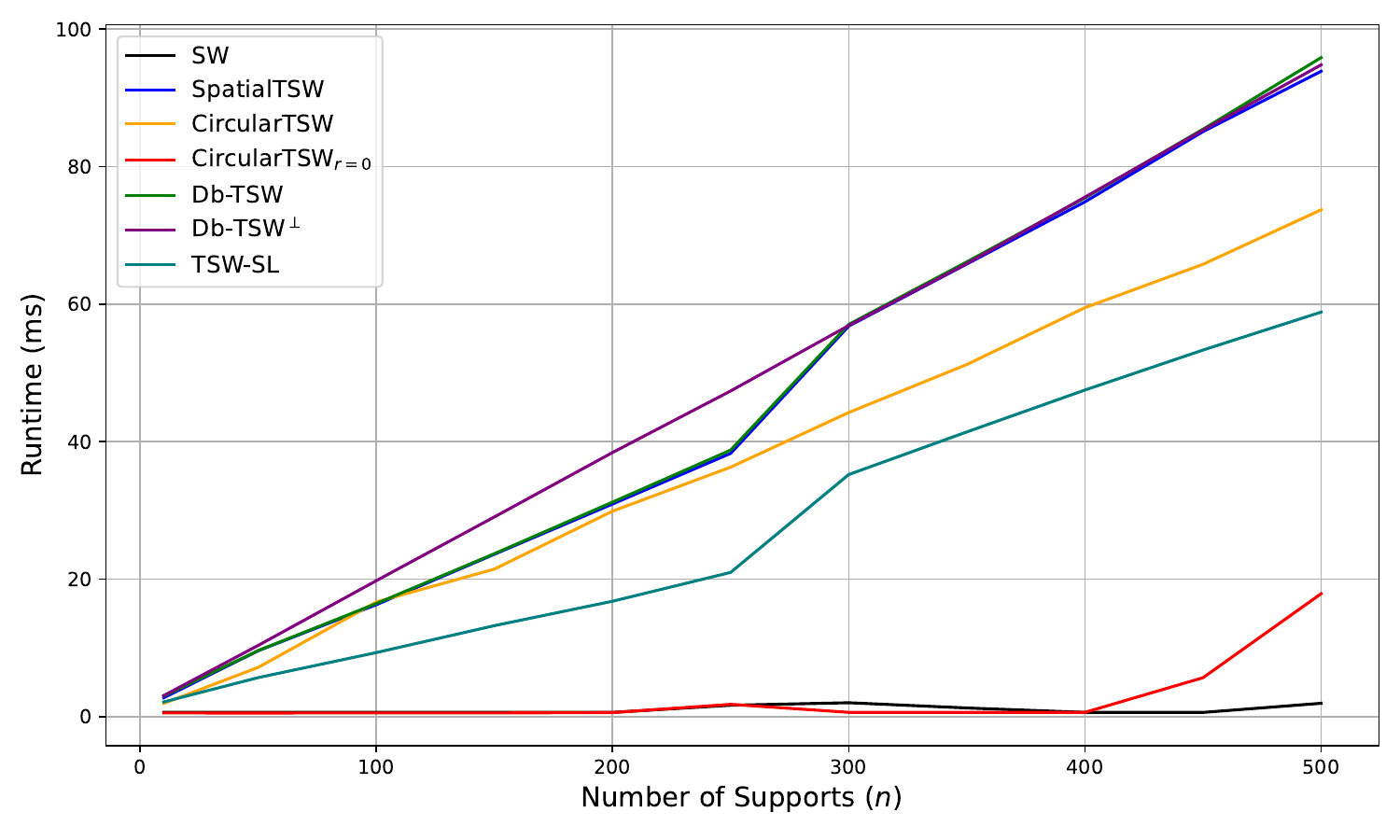}}
\caption{Runtime comparison of our proposed distances and baseline methods. We randomize the measures and projection directions and benchmark runtime over 10 runs. CircularTSW$_{r=0}$ is comparable to SW and significantly outperforms existing Tree-Sliced distances in terms of speed.}
\label{fig:compare-runtime}
\end{center}
\vspace{-25pt}
\end{figure}
\section{Experimental Results}
\label{sec:Experimental Results}
In this section, we thoroughly assess the validity of our proposed metric through extensive numerical experiments on both Euclidean and spherical datasets.
\subsection{Euclidean Datasets}

\paragraph{Denoising Diffusion Generative Adversarial Network.} This experiment investigates training denoising diffusion models for unconditional image synthesis. Following the approach of \citet{nguyen2024sliced}, we incorporate a Wasserstein distance into the Augmented Generalized Mini-batch Energy (AGME) loss function of the Denoising Diffusion Generative Adversarial Network (DDGAN) \citep{xiao2021tackling}. We benchmark our proposed methods -- SpatialTSW-DD, CircularTSW, and CircularTSW$_{r=0}$ -- against Sliced and Tree-Sliced Wasserstein-based DDGAN variants, as detailed in Table~\ref{table:diffusion}. All models are trained for $1800$ epochs on the CIFAR10 dataset \citep{krizhevsky2009learning}. For vanilla SW and its variants, we follow the parameter settings from \citet{nguyen2024sliced}, using $L = 10000$. For Tree-Sliced methods, including our own, we adopt the configuration from \citet{tran2025distancebased}, setting $L = 2500$ and $k = 4$.  Further details on this experiment can be found in Appendix~\ref{appendix:exp-ddgan}

The results in Table~\ref{table:diffusion} show that SpatialTSW-DD, CircularTSW-DD, and CircularTSW$_{r=0}$-DD achieve notable improvements in FID compared to all baselines. They surpass the current state-of-the-art OT-based DDGAN, Db-TSW-DD$^\perp$ \citep{tran2025distancebased}, by margins of $0.01$, $0.2$, and $0.05$, respectively. Additionally, our methods offer faster training times compared to existing Tree-Sliced approaches. CircularTSW-DD and CircularTSW reduce training time relative to Db-TSW-DD$^\perp$ by 10\% and 19\%, respectively. These enhancements in both training efficiency and model performance underscore the practical advantages of our proposed methods.
\begin{table}[t]
\caption{Fréchet Inception Distance (FID) scores and per-epoch training times of different DDGAN variants for unconditional generation on CIFAR-10.}
\vspace{-15pt}
\label{table:diffusion}
\vskip 0.15in
\begin{center}
\begin{small}
\begin{adjustbox}{width=.48\textwidth} 
\begin{tabular}{lcccr} 
    \toprule 
    Model & FID $\downarrow$ & Time/Epoch(s) $\downarrow$  \\ 
    \midrule 
    DDGAN \cite{xiao2021tackling} & 3.64 & 188 \\
    SW-DD \cite{nguyen2024sliced} & 2.90 & 192  \\
    DSW-DD \cite{nguyen2024sliced} & 2.88 & 1268 \\
    EBSW-DD \cite{nguyen2024sliced} & 2.87 & 188 \\
    RPSW-DD \cite{nguyen2024sliced} & 2.82 & 194  \\
    IWRPSW-DD \cite{nguyen2024sliced} & 2.70 & 194  \\
    \midrule
    TSW-SL-DD \cite{tran2024tree} & 2.83 & 249 \\
    Db-TSW-DD \cite{tran2025distancebased} & 2.60 & 256 \\
    Db-TSW-DD$^\perp$ \cite{tran2025distancebased} & 2.53 & 262 \\
    \midrule
    SpatialTSW-DD (ours) & 2.52 & 262 \\
    CircularTSW-DD (ours) & \textbf{2.33} & 234 \\
    CircularTSW$_{r=0}$-DD (ours) & \underline{2.48} & 211 \\
    \bottomrule 
\end{tabular}
\end{adjustbox}
\end{small}
\end{center}
\vspace{-20pt}
\end{table}

\paragraph{Gradient Flow.} The goal of gradient flow is to minimize the distance between a source distribution $\mu$ and a target distribution $\nu$ through gradient-based optimization. The update rule follows $ \partial_t \mu_t = -\nabla \mathcal{D}(\mu_t,\nu), \mu_0 = \mathcal{N}(0, 1),$ where $\mu_t$ represents the distribution at time $t$, and $\nabla \mathcal{D}(\mu_t,\nu)$ is the gradient of the distance function $\mathcal{D}$ with respect to $\mu_t$. We evaluate SpatialTSW, CircularTSW, CircularTSW$_{r=0}$, and several established Sliced-Wasserstein (SW) variants, including vanilla SW \citep{bonneel2015sliced}, MaxSW \citep{deshpande2019max}, LCVSW \citep{nguyen2023sliced}, SWGG \citep{mahey2023fast}, alongside the recently introduced Tree-Sliced distances, such as TSW-SL \citep{tran2024tree}, Db-TSW$^\perp$, and Db-TSW \citep{tran2025distancebased}. We conduct experiments on the \textit{25 Gaussians} dataset and use the Wasserstein distance to evaluate the average distance between the source and target distributions. We report results over 5 runs at iterations $500$, $1000$, $1500$, $2000$, and $2500$.

The results presented in Table~\ref{table:gradientflow} demonstrate that SpatialTSW achieves the best performance across all iterations, reaching a final $W_2$ distance of $1.17\mathrm{e}{-7}$ at the last step. This represents a significant improvement over vanilla SW ($3.59\mathrm{e}{-2}$) and LCVSW ($9.28\mathrm{e}{-3}$). Furthermore, compared to the best existing Tree-Sliced distances, SpatialTSW achieves better results than Db-TSW ($1.3\mathrm{e}{-7}$), exhibiting faster convergence, and maintaining similar computational efficiency. In this experiment, SpatialTSW outperforms CircularTSW, aligning with the findings from \citep{kolouri2019generalized}, where polynomial-based defining functions yield superior results over circular defining functions in gradient flow tasks. Nevertheless, both CircularTSW and CircularTSW$_{r=0}$ still achieve better results than vanilla SW while offering significant computational speedups. Specifically, CircularTSW and CircularTSW$_{r=0}$ are approximately 5\% and 16\% faster than vanilla SW, respectively.

\begin{table}[t]
\caption{Average Wasserstein distance between source and target distributions of $5$ runs on 25 Gaussians datasets. All methods use $100$ projecting directions.}
\vspace{-15pt}
\label{table:gradientflow}
\vskip 0.15in
\begin{center}
\begin{small}
\begin{adjustbox}{width=0.48\textwidth}
\begin{tabular}{lcccccccc}
\toprule
\multirow{2}{*}{Methods}       & \multicolumn{5}{c}{Iteration}                                                                     & \multirow{2}{*}{Time/Iter($s$)} \\ 
\cmidrule(lr){2-6}
                                & 500              & 1000             & 1500             & 2000             & 2500             &           \\ \midrule
SW                              & 4.21e-1 & 1.54e-1          & 7.72e-2          & 4.97e-2          & 3.59e-2          & 0.0018     \\ 
MaxSW                           & 5.23e-1          & 2.36e-1          & 1.23e-1          & 8.04e-2          & 6.76e-2          & 0.1020      \\  
SWGG                            & 6.59e-1 & 3.62e-1 & 1.92e-1          & 9.07e-2          & 4.42e-2          & 0.0019    \\ 
LCVSW                           & \underline{3.46e-1}          & \underline{6.96e-2}          & 2.26e-2          & 1.31e-2          & 9.28e-3          & 0.0019    \\ 
\midrule
TSW-SL                          & 3.49e-1          & 8.10e-2          & \underline{1.06e-2}          & 2.68e-3          & 3.16e-6          & 0.0019     \\ 
Db-TSW                       & 3.50e-1          & 8.12e-2          & 1.09e-2 & \underline{1.77e-3} & \underline{1.30e-7} & 0.0020     \\
Db-TSW$^\perp$               & 3.52e-1          & 7.69e-2 & 2.73e-2 & 2.56e-3 & 2.03e-6 & 0.0021     \\
\midrule
SpatialTSW                          & \textbf{3.20e-1}          & \textbf{3.44e-2}          & \textbf{2.95e-3}          & \textbf{3.97e-4}          & \textbf{1.17e-7}          & 0.0021     \\ 
CircularTSW               & 4.28e-1          & 1.20e-1 & 3.48e-2 & 1.41e-2 &  7.86e-3 & 0.0017     \\
CircularTSW$_{r=0}$                       & 4.32e-1          & 1.22e-1          & 3.41e-2 & 1.45e-2 & 8.94e-3 & 0.0015     \\
\bottomrule
\end{tabular}
\end{adjustbox}
\end{small}
\end{center}
\vspace{-15pt}
\end{table}

\subsection{Spherical Datasets}

\paragraph{Gradient Flow on The Sphere.} We now evaluate the ability to learn distributions by iteratively minimizing $d(\nu, \mu)$, where $d$ is a distance metric such as SSW \cite{bonet2022spherical}, S3W variants \cite{tran2024stereographic} and STSW \cite{tran2025spherical}. In line with previous works \cite{tran2024stereographic, tran2025spherical}, we consider a mixture of 12 von Mises-Fisher distributions (vMFs) from which we have access to a sample set $\{y_i\}_{i=1}^M$ with $M=2400$. The optimization procedure uses projected gradient descent \cite{bonet2022spherical}, applied on the sphere with full-batch size. Table \ref{table:sphGF} illustrates the evolution of the log $2$-Wasserstein distance at epochs 50, 100, 150, 200, and 250, averaged over 5 runs. From these results, SpatialSTSW demonstrates better performance compared to baselines while maintaining computational efficiency close to STSW.

\begin{table}[t]
\caption{Average Log of the Wasserstein distance between source and target distributions over $5$ runs on a mixture of 12 vMFs.}
\label{table:sphGF}
\vskip 0.15in
\begin{center}
\begin{small}
\vspace{-15pt}
\begin{adjustbox}{width=0.48\textwidth}
\begin{tabular}{lcccccccc}
\toprule
\multirow{2}{*}{Methods} & \multicolumn{5}{c}{Epoch} & \multirow{2}{*}{Time/Epoch($s$)} \\ 
\cmidrule(lr){2-6}
 & 50 & 100 & 150 & 200 & 250 & \\ 
\midrule
SSW & -2.4274 & -2.7893 & -2.9226 & -2.9882 &-3.0313 & 0.4323 \\
S3W & -2.0204 & -2.1920 & -2.2615 & -2.2699 & -2.2734 & 0.0151 \\
RI-S3W (1) & -2.1107 & -2.5163 & -2.7295 & -2.8568 & -2.9447 & 0.0182 \\
RI-S3W (5) & -2.4399 & -2.8273 & -3.0093 & -3.1234 & -3.2145 & 0.0503 \\
ARI-S3W (30) & -2.6508 & -3.0279 & -3.2405 & -3.4385 & -3.6661 & 0.1884\\
\midrule
STSW & \textbf{-2.9545} & \textbf{-3.5322} & \underline{-3.9992} & \underline{-4.3623} & \underline{-4.6486} & \textbf{0.0134} \\
SpatialSTSW & \underline{-2.8824} & \underline{-3.4626} & \textbf{-4.0903} & \textbf{-4.5368} & \textbf{-4.6859} & \underline{0.0145} \\

\bottomrule
\end{tabular}
\end{adjustbox}
\end{small}
\end{center}
\vskip -0.03in
\end{table}

\paragraph{Self-Supervised Learning (SSL).}
In earlier work, \citet{wang2020understanding} have shown that the contrastive objective can be broken down into an alignment loss, which ensures that representations of similar inputs are close together, and a uniformity loss, which prevents representations from collapsing by encouraging them to spread out evenly. Inspired by \citet{bonet2022spherical}, we replace the Gaussian kernel in uniformity loss with our SpatialSTSW:
\begin{align*}
    \mathcal{L} &= \underbrace{\frac{1}{n} \sum_{i=1}^{n} \left\| z_i^A - z_i^B \right\|_2^2}_{\text{Alignment loss}} + \\
    & \frac{\lambda}{2} \underbrace{\left( \text{SpatialSTSW}(z^A, \nu) + \text{SpatialSTSW}(z^B, \nu) \right)}_{\text{Uniformity loss}},
\label{ssl:lossEq}
\end{align*}
where $\nu = \mathcal{U}(\mathbb{S}^{d})$ is the uniform distribution on $\mathbb{S}^d$, $z^{A}, z^{B} \in \mathbb{R}^{n\times (d + 1)}$ are feature embeddings of two augmented views of the same image and $\lambda > 0$ is regularization factor that balances the loss components. Following a similar approach to \citet{bonet2022spherical, tran2024stereographic, tran2025spherical}, we use the above objective function to pretrain a ResNet18 \cite{he2016deep} encoder on CIFAR-10 \cite{krizhevsky2009learning} for 200 epochs and then train a linear classifier to evaluate learned features. The results in Table \ref{table:ssl} indicate that SpatialSTSW achieves the best performance compared to various baseline methods, including Hypersphere \citep{wang2020understanding}, SimCLR \citep{chen2020simple}, SSW, S3W variants, and STSW.

\begin{table}[t]
\caption{Accuracy of the linear classifier on encoded (E) features and projected (P) features on $\mathbb{S}^9$. ARI-S3W and RI-S3W use 5 rotations.}
\vspace{-15pt}
\label{table:ssl}
\vskip 0.15in
\begin{center}
\begin{small}
\begin{adjustbox}{width=0.48\textwidth} 
\begin{tabular}{lccc}
\toprule
Method & Acc. E(\%) $\uparrow$ & Acc. P(\%) $\uparrow$ & Time (s/ep.)\\
\midrule
Hypersphere & 79.78 & 74.60 & 13.10 \\
SimCLR & 79.86 & 72.79 & \textbf{12.71} \\
\midrule
SSW & 70.37 & 64.76 & 13.31 \\
S3W & 78.53 & 73.73 & 12.90 \\
RI-S3W (5) & 79.96 & 74.02 & 13.08 \\
ARI-S3W (5) & 80.06 & 75.10 & 13.01 \\
\midrule
STSW & 80.51 & 76.79 & \underline{12.81} \\
SpatialSTSW & \textbf{80.68} & \textbf{77.31} & 12.87 \\
\bottomrule
\end{tabular}
\end{adjustbox}
\end{small}
\end{center}
\vskip -0.1in
\end{table}


\paragraph{Sliced-Wasserstein Auto-Encoder.}
In this study, we utilize the Sliced-Wasserstein Auto-Encoder (SWAE) framework introduced by \citet{kolouri2018sliced} to evaluate the performance of various distances, including SW, SSW \citep{bonet2022spherical}, S3W variants \citep{tran2024stereographic}, and STSW \citep{tran2025spherical}. The target of SWAE is to ensure that the encoded embeddings follow a predefined prior distribution $q$ in the latent space. Let the encoder be denoted as $\varphi: \mathcal{X} \rightarrow \mathbb{S}^{d}$ and the decoder as $\psi: \mathbb{S}^{d} \rightarrow \mathcal{X}$. The optimizing objective is defined as:
\begin{equation*}
    \min_{\varphi, \psi} \mathbb{E}_{x \sim p}\left[ c(x, \psi(\varphi(x))) \right] + \lambda\cdot\text{SpatialSTSW}\left(\varphi_{\sharp}p, q\right),
\label{swae:loss}
\end{equation*}
where $p$ represents the data distribution, $\lambda$ acts as a regularization weight, and $c(\cdot, \cdot)$ measures the reconstruction error. For reconstruction loss, we use Binary Cross Entropy (BCE) and adopt a mixture of 10 von Mises-Fisher (vMF) distributions as the prior. We report results in Table \ref{table:swae}, where we evaluate performance using the same metrics as in \citet{tran2024stereographic, tran2025spherical}. We observe that SpatialSTSW achieves better results in terms of $\log W_2$ and NLL while maintaining a competitive reconstruction loss (BCE) and efficient computation times.

\begin{table}[t]
\caption{CIFAR-10 results for SWAE evaluated on latent regularization.}
\vspace{-15pt}
\label{table:swae}
\vskip 0.15in
\begin{center}
\begin{small}
\begin{adjustbox}{width=0.48\textwidth} 
\begin{tabular}{lcccc} 
\toprule
Method & log $W_2$ $\downarrow$ & NLL $\downarrow$& BCE $\downarrow$ & Time (s/ep.)\\
\midrule
SW & -3.3181 & -0.0010 & 0.6330 & \textbf{6.8939} \\
SSW & -2.3425 & 0.0037 & 0.6316 & 16.8639\\
\midrule
S3W & -3.3181 & 0.0018 & \textbf{0.6307} & 8.9703 \\
RI-S3W & -3.1857 & -0.0034 & 0.6357 & 10.1904 \\
ARI-S3W & -3.3850 & 0.0020 & 0.6328 & 9.5882 \\
\midrule
STSW & -3.4098 & -0.0045 & 0.6347 & 7.1623 \\
SpatialSTSW & \textbf{-3.4254} & \textbf{-0.0049} & 0.6368 & 7.2811\\
\bottomrule
\end{tabular}
\end{adjustbox}
\end{small}
\end{center}
\vspace{-10pt}
\end{table}

\section{Conclusion}
This paper introduces the Circular Tree-Sliced Wasserstein Distance (CircularTSW) and the Spatial Tree-Sliced Wasserstein Distance (SpatialTSW) as novel approaches for comparing probability measures in Euclidean spaces. These approaches integrate nonlinear projection techniques from the Sliced Wasserstein distance into the recent Tree-Sliced Wasserstein framework, resulting in enhanced performance and more efficient metric computations. The paper presents a formal derivation of these metrics and provides comprehensive theoretical guarantees to ensure their practical applicability. Furthermore, the proposed techniques are extended to the tree-sliced framework with a spherical setting. Experimental evaluations show that CircularTSW, SpatialTSW, and their spherical variant consistently outperform state-of-the-art Sliced Wasserstein and Tree-Sliced Wasserstein methods across various tasks, including gradient flows and diffusion models, while maintaining comparable or improved runtime efficiency. These results highlight Tree-Sliced Wasserstein distance as a promising and impactful research direction, complementing the Sliced Wasserstein distance in practical applications.



\section*{Acknowledgements}

We thank the area chairs and anonymous reviewers for their comments. TL gratefully acknowledges the support of JSPS KAKENHI Grant number 23K11243, and Mitsui Knowledge Industry Co., Ltd. grant.



\section*{Impact Statement}
This paper presents work whose goal is to advance the field of 
Machine Learning. There are many potential societal consequences 
of our work, none which we feel must be specifically highlighted here.



\bibliography{example_paper}
\bibliographystyle{icml2025}

\newpage
\onecolumn
\begin{table}[ht]
\renewcommand*{\arraystretch}{1.2}
    \begin{tabularx}{\textwidth}{p{0.40\textwidth}X}
    $\mathbb{R}^d$ & $d$-dimensional Euclidean space \\
    $\mathbb{S}^{d}$ & $d$-dimensional hypersphere \\
    $\|\cdot\|_2$ & Euclidean norm \\
    $\left< \cdot, \cdot \right>$ & standard dot product \\
    $\mathbb{S}^{d-1}$ & $(d-1)$-dimensional hypersphere \\
    $\theta$ & unit vector \\
    $\sqcup$ & disjoint union \\
        $L^1(X)$ & space of Lebesgue integrable functions on $X$ \\
    $\mathcal{P}(X)$ & space of probability distributions (or measures) on $X$ \\
        $\mu,\nu$ & measures \\
        $\delta(\cdot)$ & $1$-dimensional Dirac delta function \\
        $\mathcal{U}(\mathbb{S}^{d-1})$ & uniform distribution on $\mathbb{S}^{d-1}$ \\
        $\sharp$ & pushforward (measure) \\
    $\mathcal{C}(X,Y)$ & space of continuous maps from $X$ to $Y$ \\
        $d(\cdot,\cdot)$ & metric in metric space \\
        $\operatorname{T}(d)$ & translation group of order $d$ \\
        $\operatorname{O}(d)$ & orthogonal group of order $d$ \\
    $\operatorname{E}(d)$ & Euclidean group of order $d$ \\
    $g$ & element of group \\
    $\text{W}_p$ & $p$-Wasserstein distance \\
    $\text{SW}_p$ & Sliced $p$-Wasserstein distance \\
    $\mathcal{L}$ & system of lines, tree system \\
    $r^x_y$ & spherical ray \\
$\mathcal{T}, \mathcal{T}^x_{y_1,\ldots,y_k}$ & spherical tree \\
    $\mathbb{L}^d_k$ & space of symtems of $k$ lines in $\mathbb{R}^d$ \\
    $L$ & number of tree systems \\
    $k$ & number of lines in a system of lines or a tree system \\
    $\mathcal{R}$ & original Radon Transform \\
    $\mathcal{R}^\alpha$ & Radon Transform on Systems of Lines, or Radon Transform on Spherical Trees \\
    $\Delta_{k-1}$ & $(k-1)$-dimensional standard simplex \\
    $\alpha$ & splitting map \\
    $\delta$ & Dirac delta function \\
    $\mathbb{T}$ & space of tree systems \\
    $\sigma$ & distribution on space of tree systems
     \end{tabularx}
\end{table}
\newpage
\appendix
\onecolumn

\begin{center}
{\bf \Large{Supplemental Material for \\ ``Tree-Sliced Wasserstein Distance with Nonlinear Projection"}}
\end{center}

The supplementary is organized into four parts as follows:
\begin{itemize}
\item In Section~\ref{appendix:sec:background-tsw-sl}, we provide background for Tree-Sliced Wasserstein distance.

\item In Section~\ref{appendix:Theoretical Proofs}, we derive theoretical proofs for Radon transform on systems of lines with nonlinear projection.

\item In Section~\ref{app:sec:radon_sphere}, we describe Radon transform on systems of lines for spherical functions.

\item In Section~\ref{app:sec:experiment_details}, we provide further details for the experiments.

\end{itemize}

\section{Background for Tree-Sliced Wasserstein Distance}
\label{appendix:sec:background-tsw-sl}

In this section, we briefly outline the notion of the Radon Transform on Systems of Lines \cite{tran2024tree} with its distance-based extension \cite{tran2025distancebased}.

\textbf{Building blocks of Tree-sliced Wasserstein distance on Systems of Lines. } The Tree-sliced Wasserstein distance on Systems of Lines is constructed step-by-step as follows:
\begin{enumerate}
    \item Given a positive number $d$ presenting the dimension.
    \item  A \textit{line} in $\mathbb{R}^d$ is an element $l = (x,\theta) \in \mathbb{R}^d \times \mathbb{S}^{d-1}$. Here, $x$ is called the \textit{source} and $\theta$ is called the \textit{direction} of the line.  
    \item A \textit{system of $k$ lines} in $\mathbb{R}^d$ is an element of $(\mathbb{R}^d \times \mathbb{S}^{d-1})^k$. Denote a system of lines as $\mathcal{L}$,  and the space of all systems of $k$ lines by $\mathbb{L}^d_k$. 
    \item \textit{A point $x$ in $\mathcal{L}$} can be parameterized as $x_i+t \cdot \theta_i$, where $i$ is the index of the line, and $t$ is the coordinate of the point on that $i^{\text{th}}$ lines.
    \item A system of line $\mathcal{L}$ with additional tree structure is called a \textit{tree system} (see \cite{tran2024tree, tran2025distancebased}). Each tree system is a measure space, endowed with a tree metric.
    \item A \textit{space of trees} (collections of all tree systems with the same tree structure) is denoted by $\mathbb{T}$ with a probability distribution $\sigma$ on $\mathbb{T}$, which comes from the tree sampling process.
    \item For $\mathcal{L} \in \mathbb{L}^d_k$, \textit{the space of integrable functions on $\mathcal{L}$} is:
    \begin{align}
    L^1(\mathcal{L}) = \left \{ f \colon \mathcal{L} \rightarrow \mathbb{R} ~ \colon ~ \|f\|_{\mathcal{L}} = \sum_{i=1}^k \int_{\mathbb{R}} |f(x_i+t\cdot \theta_i)|  \, dt < \infty \right \}.
\end{align}
    \item A splitting map $\alpha$ is a continuous map from $\mathbb{R}^d \times \mathbb{L}^d_k$ to the $(k-1)$-dimensional  standard simplex $\Delta_{k-1}$, i.e. $\alpha \in \mathcal{C}(\mathbb{R}^d\times \mathbb{L}^d_, \Delta_{k-1})$. For $f \in L^1(\mathbb{R}^d)$, we define:
    \begin{align}
\mathcal{R}_{\mathcal{L}}^{\alpha}f~~~ \colon ~~~~~~~~~~\mathcal{L}~~~~~~ &\longrightarrow ~~~~~~\mathbb{R} \\ x_i+t\cdot \theta_i &\longmapsto \int_{\mathbb{R}^d} f(y) \cdot \alpha(y,\mathcal{L})_i \cdot \delta\left(t - \left<y-x_i,\theta_i \right>\right)  ~ dy.
\end{align}
The function $\mathcal{R}_{\mathcal{L}}^{\alpha}f$ is in $L^1(\mathcal{L})$. 
\item The operator:
    \begin{align*}
        \mathcal{R}^{\alpha}~ \colon ~ L^1(\mathbb{R}^d)~  &\longrightarrow ~ \prod_{\mathcal{L} \in \mathbb{L}^d_k} L^1(\mathcal{L}) \\ f ~~~~ &\longmapsto ~ \left(\mathcal{R}_{\mathcal{L}}^{\alpha}f\right)_{\mathcal{L} \in \mathbb{L}^d_k}
    \end{align*}
    is called the \textit{Radon Transform on Systems of Lines}. 
    \item When the splitting map $\alpha$ is $\operatorname{E}(d)$-invariant (this $\operatorname{E}(d)$-invariance will be described in the next part), the Radon Transform on Systems of Lines is \textit{injective} (see \cite{tran2025distancebased}). 
    \item The \textit{Tree-Sliced Wasserstein Distance on Systems of Lines}, denoted by TSW-SL as in \cite{tran2024tree}, or its Distance-based variant Db-TSW as in \cite{tran2025distancebased}, between $\mu,\nu$ in $\mathcal{P}(\mathbb{R}^d)$ is defined by:
\begin{equation}
\label{eq:old-TSW-SL-formula}
    \text{Db-TSW} (\mu,\nu) =  \int_{\mathbb{T}^d_k} \text{W}_{d_\mathcal{L},1}(\mathcal{R}^\alpha_\mathcal{L} f_\mu, \mathcal{R}^\alpha_\mathcal{L} f_\nu) ~d\sigma(\mathcal{L}).
\end{equation}
    We choose the notation Db-TSW since this variant is a generalization of TSW-SL. Db-TSW is identical with the definition of SW when $k=1$, i.e. tree systems in $\mathbb{L}^d_k$ have only one line.
    \item The Db-TSW distance is a metric on $\mathcal{P}(\mathbb{R}^d)$ (see \cite{tran2025distancebased}). 
    \item It is worth noting that, on tree systems, optimal transport problems admits closed-form expression, since it is a metric space with tree metric (see \cite{le2019tree}). Leveraging this closed-form expression and the Monte Carlo method, the distance in Eq.~(\ref{eq:old-TSW-SL-formula}) can be efficiently approximated by a closed-form expression. Additionally, for the $p$-order Wasserstein with $p > 1$, one may consider the scalable variant---Sobolev transport~\citep{le2022sobolev}, which also yields a closed-form expression for a fast computation, and generalizes tree-Wasserstein (i.e., $1$-order Wasserstein on a tree) to a more general settings such as for $p>1$, and for measures on a graph.
\end{enumerate}

\textbf{The group $\operatorname{E}(d)$ and its action. } The Euclidean group E(d) is the group of all transformations of $\mathbb{R}^d$ that preserve the Euclidean
distance between any two points. It is the semidirect product between $\operatorname{T}(d)$ and $\OO(d)$, i.e.
\begin{align}
    \operatorname{E}(d) \simeq \operatorname{T}(d) \rtimes \operatorname{O}(d),
\end{align} 
where $\operatorname{T}(d)$ is group of all translations in $\mathbb{R}^d$ and $\OO(d)$ is the orthogonal group of $\mathbb{R}^d$. Each element $g$ of $\operatorname{T}(d)$ can be presented as a pair:
\begin{align}
    g = (Q,a) \in \operatorname{T}(d) ~~ \text{ where } ~~ Q \in \operatorname{O}(d) \text{ and } a \in \mathbb{R}^d.
\end{align}
The group $\operatorname{E}(d)$ acts on $\mathbb{R}^d$ naturally as follows: For $x \in \mathbb{R}^d$ and $g = (Q,a) \in \operatorname{T}(d)$, we have:
\begin{align}
    (g,x) \longmapsto gx = Q \cdot x + a.
\end{align}
It naturally induces a group action on the set of all lines in $\mathbb{R}^d$, i.e. $\mathbb{R}^d \times \mathbb{S}^{d-1}$: For $l = (x,\theta) \in \mathbb{R}^d \times \mathbb{S}^{d-1}$ and $g = (Q,a) \in \operatorname{E}(d)$, we have:
\begin{align}
    (g,l) \longmapsto gl = (Q\cdot  x+ a, Q \cdot \theta) \in \mathbb{R}^d \times \mathbb{S}^{d-1}.
\end{align}
For $\mathcal{L} = \bigl\{l_i = (x_i,\theta_i)\bigr\}_{i=1}^{k} \in \mathbb{L}^d_k$, the action of $\operatorname{E}(d)$ on $\mathbb{L}^d_k$ is defined as:
\begin{align}
    g\mathcal{L} = \bigl\{gl_i = (Q\cdot  x_i+ a, Q \cdot \theta_i)\bigr\}_{i=1}^k \in \mathbb{L}^d_k.
\end{align}
The tree structure of a tree system is preserved under the action of $\operatorname{E}(d)$ (see  \cite{tran2024tree, tran2025distancebased}). In other words, if $\mathcal{L} \in \mathbb{T}$ is a tree system, then $g\mathcal{L}$ is also a tree system. The group action of $\operatorname{E}(d)$ on $\mathbb{L}^d_k$ induces a group action of $\operatorname{E}(d)$ on $\mathbb{T}$.

\textbf{$\operatorname{E}(d)$-invariant splitting maps.} A splitting map $\alpha \in \mathcal{C}(\mathbb{R}^d \times \mathbb{L}^d_k, \Delta_{k-1})$ is $\operatorname{E}(d)$-invariant, if:
    \begin{align}
        \alpha(gy,g\mathcal{L}) = \alpha(y,\mathcal{L}),
    \end{align}
    for all $(y, \mathcal{L}) \in \mathbb{R}^d \times \mathbb{L}^d_k$ and $g \in \operatorname{E}(d)$.

\begin{remark}
Equivariance is widely used in machine learning across various contexts, including equivariant models~\cite{tran2024clifford} and equivariant metanetworks~\cite{vo2024equivariant,tran2024equivariant,tran2024monomial,tran2024equivariant}. These approaches leverage symmetries in data or model architectures to improve generalization, reduce sample complexity, and ensure consistency under group transformations. 
\end{remark}

\section{Theoretical Proofs for Radon Transform on Systems of Lines with Nonlinear Projection}
\label{appendix:Theoretical Proofs}
\subsection{$\mathcal{CR}^\alpha$ is well-defined}
\label{appendix:well-definedness-CR}
We show that the Circular Radon Transform on Systems of Lines is well-defined. 

\begin{proof}
    Recall from Eq.~\eqref{eq:circular-radon-transform-SL}, we have:
\begin{align}
        &~~~~~~\mathcal{CR}^{\alpha} \colon L^1(\mathbb{R}^d) \longrightarrow \prod_{\mathcal{L} \in \mathbb{L}^d_k,r \ge 0} L^1(\mathcal{L})~~ \text{ where }~~f  \longmapsto \left(\mathcal{CR}_{\mathcal{L},r}^{\alpha}f\right)_{\mathcal{L} \in \mathbb{L}^d_k, r\ge 0},
         \\ &\text{and}~~~\mathcal{CR}_{\mathcal{L},r }^{\alpha}f(x_i+t \cdot \theta_i) = \int_{\mathbb{R}^d} f(y) \cdot \alpha(y,\mathcal{L})_i \cdot \delta\left(t - \|y-x_i-r\theta_i\|_2\right)~dy.
    \end{align}
    We have:
    \allowdisplaybreaks
    \begin{align}
    \|\mathcal{CR}_{\mathcal{L},r}^{\alpha}f\|_{\mathcal{L}} &= \sum_{i=1}^k \int_{\mathbb{R}} \left | \mathcal{CR}_{\mathcal{L},r}^{\alpha}f(x_i + t \cdot \theta) \right |  \, dt_x \notag\\
        &= \sum_{i=1}^k \int_{\mathbb{R}} \left |\int_{\mathbb{R}^d} f(y) \cdot \alpha(y,\mathcal{L})_i \cdot \delta\left(t - \|y-x_i-r\theta_i\|_2\right)~dy \right| \, dt\notag \\
        & \le \sum_{i=1}^k \int_{\mathbb{R}} \left(\int_{\mathbb{R}^d} |f(y)| \cdot \alpha(y,\mathcal{L})_i \cdot \delta\left(t - \|y-x_i-r\theta_i\|_2\right)~dy  \right) \, dt\notag \\
        &= \sum_{i=1}^k \int_{\mathbb{R}^d} \left(\int_{\mathbb{R}} |f(y)| \cdot \alpha(y,\mathcal{L})_i \cdot \delta\left(t - \|y-x_i-r\theta_i\|_2\right)~dt  \right) \, dy\notag \\
        &= \sum_{i=1}^k \int_{\mathbb{R}^d}  |f(y)| \cdot \alpha(y,\mathcal{L})_i \cdot \left(\int_{\mathbb{R}}\delta\left(t - \|y-x_i-r\theta_i\|_2\right)~dt  \right) \, dy \notag\\
        &=\sum_{i=1}^k \int_{\mathbb{R}^d}  |f(y)| \cdot \alpha(y,\mathcal{L})_i \, dy\notag \\
        &= \int_{\mathbb{R}^d}  |f(y)| \cdot \left(\sum_{i=1}^k \alpha(y,\mathcal{L})_i \right) \, dy\notag \\
        &= \int_{\mathbb{R}^d}  |f(y)| \, dy \notag \\
        & = \|f \|_1.
    \end{align}
So $\mathcal{CR}^\alpha_{\mathcal{L},r} f \in L^1(\mathcal{L})$. It implies that the operator $\mathcal{CR}_{\mathcal{L},r}^{\alpha} \colon L^1(\mathbb{R}^{d}) \to L^1(\mathcal{L})$ is well-defined, as well as $\mathcal{CR}^{\alpha}$.
\end{proof}
\begin{remark}
    Note that, from the above proof, we see that if $f \in \mathcal{P}(\mathbb{R}^d)$, i.e. $f \in L^1(\mathbb{R}^d)$, $\|f \|_1 = 1$ and $f(y) \ge 0$ for all $ y \in \mathbb{R}^d$, we also have $\|\mathcal{CR}_{\mathcal{L},r}^{\alpha}f\|_{\mathcal{L}} =1$ and $\mathcal{CR}^\alpha_{\mathcal{L},r} f(x_i+t\cdot \theta_i) \ge 0$ for all $x_i+t\cdot \theta_i \in \mathcal{L}$. It implies that $\mathcal{CR}_{\mathcal{L},r}^{\alpha}f \in \mathcal{P}(\mathcal{L})$.
\end{remark}

\subsection{$\mathcal{H}^\alpha$ is well-defined}
\label{appendix:well-definedness-S}
We show that the Spatial Radon Transform on Systems of Lines is well-defined. 

\begin{proof}
    Recall from Eq.~\eqref{eq:spatial-radon-transform-SL}, we have:
    \begin{align}
        &~~~~~\mathcal{H}^{\alpha} \colon L^1(\mathbb{R}^d) \rightarrow \prod_{\mathcal{L} \in \mathbb{L}^{d_\theta}_k} L^1(\mathcal{L}) ~~\text{ where }~~f  \mapsto \left(\mathcal{H}_{\mathcal{L}}^{\alpha}f\right)_{\mathcal{L} \in \mathbb{L}^{d_\theta}_k}, \notag \\
        &\text{and }~~\mathcal{H}_{\mathcal{L}}^{\alpha}f(x_i+t \cdot \theta_i)  =\int_{\mathbb{R}^d} f(y) \cdot \alpha(h(y),\mathcal{L})_i \cdot \delta\left(t - \left<h(y)-x_i,\theta_i \right>\right)~dy.
    \end{align}
    We have:
    \allowdisplaybreaks
    \begin{align}
    \|\mathcal{H}_{\mathcal{L}}^{\alpha}f\|_{\mathcal{L}} &= \sum_{i=1}^k \int_{\mathbb{R}} \left | \mathcal{H}_{\mathcal{L}}^{\alpha}f(x_i+t \cdot \theta_i) \right |  \, dt_x \notag\\
        &= \sum_{i=1}^k \int_{\mathbb{R}} \left |\int_{\mathbb{R}^d} f(y) \cdot \alpha(h(y),\mathcal{L})_i \cdot \delta\left(t - \left<h(y)-x_i,\theta_i \right>\right)~dy \right| \, dt\notag \\
        & \le \sum_{i=1}^k \int_{\mathbb{R}} \left (\int_{\mathbb{R}^d} |f(y)| \cdot \alpha(h(y),\mathcal{L})_i \cdot \delta\left(t - \left<h(y)-x_i,\theta_i \right>\right)~dy \right) \, dt\notag  \\
        &= \sum_{i=1}^k \int_{\mathbb{R}^d} \left (\int_{\mathbb{R}} |f(y)| \cdot \alpha(h(y),\mathcal{L})_i \cdot \delta\left(t - \left<h(y)-x_i,\theta_i \right>\right)~dt \right) \, dy\notag  \\
        &= \sum_{i=1}^k \int_{\mathbb{R}^d}  |f(y)| \cdot \alpha(h(y),\mathcal{L})_i \cdot \left (\int_{\mathbb{R}}\delta\left(t - \left<h(y)-x_i,\theta_i \right>\right)~dt \right) \, dy\notag \\
        &=\sum_{i=1}^k \int_{\mathbb{R}^d}  |f(y)| \cdot \alpha(h(y),\mathcal{L})_i \, dy\notag \\
        &= \int_{\mathbb{R}^d}  |f(y)| \cdot \left(\sum_{i=1}^k \alpha(h(y),\mathcal{L})_i \right) \, dy\notag \\
        &= \int_{\mathbb{R}^d}  |f(y)| \, dy \notag \\
        & = \|f \|_1.
    \end{align}
So $\mathcal{H}^\alpha_{\mathcal{L}} f \in L^1(\mathcal{L})$. It implies the operator $\mathcal{H}_{\mathcal{L}}^{\alpha} \colon L^1(\mathbb{R}^{d}) \to L^1(\mathcal{L})$ is well-defined, as well as $\mathcal{H}^{\alpha}$.
\end{proof}
\begin{remark}
    Note that, from the above proof, we see that if $f \in \mathcal{P}(\mathbb{R}^d)$, i.e. $f \in L^1(\mathbb{R}^d)$, $\|f \|_1 = 1$ and $f(y) \ge 0$ for all $ y \in \mathbb{R}^d$, we also have $\|\mathcal{H}_{\mathcal{L}}^{\alpha}f\|_{\mathcal{L}} =1$ and $\mathcal{H}^\alpha_{\mathcal{L}} f(x_i+t\cdot \theta_i) \ge 0$ for all $x_i+t\cdot \theta_i \in \mathcal{L}$. It implies that $\mathcal{H}_{\mathcal{L}}^{\alpha}f \in \mathcal{P}(\mathcal{L})$.
\end{remark}

\subsection{Proof for Theorem~\ref{mainbody:injectivity-of-Radon-Transform-circular}}

\label{appendix:Proof for Theorem {mainbody:injectivity-of-Radon-Transform-circular}}

Recall the original Circular Radon Transform $\mathcal{CR}$ \cite{kuchment-2006, kolouri2019generalized} as follows:
    \begin{align}
        \mathcal{CR} ~\colon ~~~~~~~~~  L^1(\mathbb{R}^d) ~~~~&\longrightarrow ~ L^1(\mathbb{R} \times \mathbb{S}^{d-1} \times \mathbb{R}_{\ge 0} )\notag \\ f ~~~~~~~~  &\longmapsto~~~~~~~~~~~~ \mathcal{CR}f, 
    \end{align}
    where:
    \begin{align}
    \mathcal{CR}f ~ \colon ~~~~~~~~ \mathbb{R} \times \mathbb{S}^{d-1} \times \mathbb{R}_{\ge 0}  ~~~~&\longrightarrow ~~~~~~~~~~~\mathbb{R} \\
         (t,\theta,r) ~~~~~~~~~~&\longmapsto ~~\int_{\mathbb{R}^d} f(y) \cdot \delta\left(t - \|y-r\theta\|_2\right)~dy.
    \end{align}
In \cite{kuchment-2006}, it is showed that the Circular Radon Transform $\mathcal{CR}$ is injective. We will leverage this result to prove the injectivity of the proposed Circular Radon Transform on Systems of Lines $\mathcal{CR}^\alpha$. 

First, for each $\theta \in \mathbb{S}^{d-1}$, consider the tree system $\mathcal{L}^{(i)}$ consists of $k$ identical lines $(0,\theta)$. Define the function $g$ as follows:
\begin{align}\label{appendix:eq-definition-g}
    g ~ \colon ~~~~~~~~ \mathbb{R} \times \mathbb{S}^{d-1} \times \mathbb{R}_{\ge 0}  ~~~~&\longrightarrow ~~~~~~~~~~~\mathbb{R} \\
         (t,\theta,r) ~~~~~~~~~~&\longmapsto ~~ \sum_{j=1}^k \mathcal{CR}^\alpha_{\mathcal{L}^{(j)},r}f(x_{\mathcal{L}^{(j)}:i} + t \cdot \theta_{\mathcal{L}^{(j)}:i} ) .
    \end{align}
Since 
\begin{align}
    \mathcal{CR}_{\mathcal{L},r }^{\alpha}f(x_i+t \cdot \theta_i) = \int_{\mathbb{R}^d} f(y) \cdot \alpha(y,\mathcal{L})_i \cdot \delta\left(t - \|y-x_i-r\theta_i\|_2\right)~dy.
\end{align}
We have:
\allowdisplaybreaks
\begin{align}
    g(t,\theta,r) &= \sum_{j=1}^k \mathcal{CR}^\alpha_{\mathcal{L}^{(j)},r}f(x_{\mathcal{L}^{(j)}:i} + t \cdot \theta_{\mathcal{L}^{(j)}:i} ) \\
    &=\sum_{i=1}^k \int_{\mathbb{R}^d} f(y) \cdot \alpha(y,\mathcal{L}^{(j)})_i \cdot \delta\left(t - \|y-x_{\mathcal{L}^{(j)}:i}-r\theta_{\mathcal{L}^{(j)}:i}\|_2\right)~dy\\
    &= \sum_{i=1}^k \int_{\mathbb{R}^d} f(y) \cdot \alpha(y,\mathcal{L}^{(j)})_i \cdot \delta\left(t - \|y-r\theta\|_2\right)~dy \\
    &= \sum_{i=1}^k \int_{\mathbb{R}^d} f(y) \cdot \delta\left(t - \|y-r\theta\|_2\right)\cdot \left(\sum_{i=1}^k\alpha(y,\mathcal{L}^{(j)})_i \right)~dy  \\
    &= \sum_{i=1}^k \int_{\mathbb{R}^d} f(y) \cdot \delta\left(t - \|y-r\theta\|_2\right)\cdot  ~dy  \\
    &= \mathcal{CR}f.
\end{align}
It is clear that $\mathcal{CR}^\alpha$ is a linear operator. To prove $\mathcal{CR}^\alpha$ injective, consider $f \in \operatorname{Ker}(\mathcal{CR}^\alpha)$. By the definition of $g$ in Eq.~\eqref{appendix:eq-definition-g}, we have $g$ is the function $0$. But $g$ is exactly is the Circular Radon Transform of $f$, and since the Circular Radon Transform is injective, we conclude that $f$ is the function $0$. In conclusion, $\mathcal{CR}^\alpha$ is injective.

\subsection{Proof for Theorem~\ref{mainbody:injectivity-of-Radon-Transform-spatial}}
\label{Proof for Theorem {mainbody:injectivity-of-Radon-Transform-spatial}}

We present the proof for Theorem~\ref{mainbody:injectivity-of-Radon-Transform-spatial}. 
\begin{proof}
    Recall the Radon Transform on Systems of Lines $\mathcal{R}^\alpha$ \cite{tran2025distancebased} as follows:
    \begin{align}
        \mathcal{R}^{\alpha}~ \colon ~ L^1(\mathbb{R}^d)~  &\longrightarrow ~ \prod_{\mathcal{L} \in \mathbb{L}^d_k} L^1(\mathcal{L}) \notag \\ f ~~~~ &\longmapsto ~ \left(\mathcal{R}_{\mathcal{L}}^{\alpha}f\right)_{\mathcal{L} \in \mathbb{L}^d_k},
    \end{align} 
where
    \begin{align}
\mathcal{R}_{\mathcal{L}}^{\alpha}f~ \colon ~~~~~~~~~~~~\mathcal{L}~~~~~~~ &\longrightarrow ~~~~~~~\mathbb{R}  \notag\\x_i+t \cdot \theta_i ~~&\longmapsto ~\int_{\mathbb{R}^d} f(y) \cdot \alpha(y,\mathcal{L})_i \cdot \delta\left(t - \left<y-x_i,\theta_i \right>\right)  ~ dy,
\end{align}
It is proved in \cite{tran2025distancebased} that $\mathcal{R}^\alpha$ is injective for $\operatorname{E}(d)$-invariant splitting map $\alpha$. We leverage this result to prove the Spatial Radon Transform on Systems of Lines is injective. First, by the injective continuous map $h ~ \colon ~ \mathbb{R}^d \rightarrow \mathbb{R}^{d_\theta}$, we show that the push-forward of $f \in \mathbb{R}^d$ via $h$, defined as:
\begin{align} \label{appendix:pushforward-in-spatial}
    h_\sharp f(y) =  \begin{cases}
        f(h^{-1}(y)) &, ~ \text{ for all } y \in \mathbb{R}^{d_\theta}~ \text{ such that }  y \in h(\mathbb{R}^d), \\
        0 &, ~ \text{ for all } y \in \mathbb{R}^{d_\theta}~ \text{ such that }  y \notin h(\mathbb{R}^d).
    \end{cases}
\end{align}
has its Radon Transform on Systems of Lines, i.e. $\{\mathcal{R}^\alpha_\mathcal{L}(h_\sharp f)\}_{\mathcal{L} \in \mathbb{L}^{d_\theta}_k}$, equal to the Spatial Radon Transform on Systems of Lines of $f$, i.e. $\{\mathcal{H}^\alpha_\mathcal{L}f\}_{\mathcal{L} \in \mathbb{L}^{d_\theta}_k}$. In other words, for all $\mathcal{L} \in \mathbb{L}^{d_\theta}_k$, we have:
\begin{align}
    \mathcal{H}^\alpha_\mathcal{L}f = \mathcal{R}^\alpha_\mathcal{L}(h_\sharp f).
\end{align}
Indeed, we have:
\allowdisplaybreaks
\begin{align}
    \mathcal{R}^\alpha_\mathcal{L}(h_\sharp f)(x_i+t\cdot \theta_i) &= \int_{\mathbb{R}^{d_{\theta}}} h_\sharp f(y) \cdot \alpha(y,\mathcal{L})_l \cdot \delta\left(t - \left<y-x_i,\theta_i \right>\right)  ~ dy  \notag\\
    &= \int_{h(\mathbb{R}^{d})} h_\sharp f(y) \cdot \alpha(y,\mathcal{L})_l \cdot \delta\left(t - \left<y-x_i,\theta_i \right>\right)  ~ dy  \notag\\
    &= \int_{\mathbb{R}^{d}} f(h^{-1}(h(y)) \cdot \alpha(h(y),\mathcal{L})_l \cdot \delta\left(t - \left<h(y)-x_i,\theta_i \right>\right)  ~ dy  \notag\\
    &= \int_{\mathbb{R}^d} f(y) \cdot \alpha(h(y),\mathcal{L})_l \cdot \delta\left(t - \left<h(y)-x_i,\theta_i \right>\right)  ~ dy  \notag\\
    &= \mathcal{H}^\alpha_\mathcal{L}f(x_i+t\cdot \theta_i).
\end{align}
It is clear that $\mathcal{H}^\alpha$ is a linear operator. To prove $\mathcal{H}^\alpha$ is injective, consider $f \in \operatorname{Ker}(\mathcal{H}^\alpha)$. Since $\mathcal{H}^\alpha_\mathcal{L}f = \mathcal{R}^\alpha_\mathcal{L}(h_\sharp f)$, it implies that $h_\sharp f \in \operatorname{Ker}(\mathcal{R}^\alpha)$. Since $\mathcal{R}^\alpha$ is injective, it implies that $h_\sharp f$ is the function $0$. By the definition of the push-forward $h_\sharp f$ as in Eq.~\eqref{appendix:pushforward-in-spatial}, we conclude that  $f$ is the function $0$. In conclusion, $\mathcal{H}^\alpha$ is injective.
\end{proof}

\subsection{Proof of Theorem~\ref{thm:CircularTSW and SpatialTSW are metrics}}
\label{appendix:thm:CircularTSW and SpatialTSW are metrics}

We show that CircularTSW is a metric on $\mathcal{P}(\mathbb{R}^d)$. The proof for SpatialTSW is similar.
\begin{proof}
    We will show that:
    \begin{equation}
    \text{CircularTSW} (\mu,\nu) = 
  \int_{\mathbb{T}^d_k} \text{W}(\mathcal{CR}^\alpha_{\mathcal{L},r} f_\mu, \mathcal{CR}^\alpha_{\mathcal{L},r} f_\nu) ~d\sigma(\mathcal{L}),
\end{equation}
is a metric on $\mathcal{P}(\mathbb{R}^d)$, by verifying its positive definiteness, symmetry and triangle inequality.
\paragraph{Positive definiteness.} For $\mu,\nu \in \mathcal{P}(\mathbb{R}^d)$, it is clear that 
\begin{align}
    \text{CircularTSW}(\mu,\mu) = 0,
\end{align} 
and
\begin{align}
\text{CircularTSW}(\mu,\nu) \ge 0. 
\end{align}
If $\text{CircularTSW}(\mu,\nu) = 0$, then $\text{W}(\mathcal{CR}^\alpha_{\mathcal{L},r} f_\mu, \mathcal{CR}^\alpha_{\mathcal{L},r} f_\nu) = 0$ for almost every $\mathcal{L} \in \mathbb{T}^d_k$. Since $\text{W}$ is a metric on $\mathcal{P}(\mathcal{L})$, we have $\mathcal{CR}^\alpha_{\mathcal{L},r} f_\mu =  \mathcal{CR}^\alpha_{\mathcal{L},r} f_\nu$ for almost every $\mathcal{L} \in \mathbb{T}$. By Theorem~\ref{mainbody:injectivity-of-Radon-Transform-circular}, it implies that $\mu =\nu$.

\paragraph{Symmetry.} For $\mu,\nu \in \mathcal{P}(\mathbb{R}^d)$, we have:
\begin{align}
    \text{CircularTSW} (\mu,\nu) &= 
  \int_{\mathbb{T}^d_k} \text{W}(\mathcal{CR}^\alpha_{\mathcal{L},r} f_\mu, \mathcal{CR}^\alpha_{\mathcal{L},r} f_\nu) ~d\sigma(\mathcal{L}) \notag \\
  &= 
  \int_{\mathbb{T}^d_k} \text{W}(\mathcal{CR}^\alpha_{\mathcal{L},r} f_\nu, \mathcal{CR}^\alpha_{\mathcal{L},r} f_\mu) ~d\sigma(\mathcal{L}) \notag \\
  &= \text{CircularTSW} (\nu,\mu)
\end{align}

So  $\text{CircularTSW}(\mu,\nu)  =\text{CircularTSW} (\nu,\mu)$. 

\paragraph{Triangle inequality.} For $\mu_1, \mu_2, \mu_3 \in \mathcal{P}(\mathbb{R}^D)$, we have:
\begin{align}
    &\text{CircularTSW} (\mu_1,\mu_2) + \text{CircularTSW} (\mu_2,\mu_3) \notag\\
    &\hspace{30pt}=\int_{\mathbb{T}^d_k} \text{W}(\mathcal{CR}^\alpha_{\mathcal{L},r} f_{\mu_1}, \mathcal{CR}^\alpha_{\mathcal{L},r} f_{\mu_2})  ~d\sigma(\mathcal{L})
 +\int_{\mathbb{T}^d_k} \text{W}(\mathcal{CR}^\alpha_{\mathcal{L},r} f_{\mu_2}, \mathcal{CR}^\alpha_{\mathcal{L},r} f_{\mu_3})~d\sigma(\mathcal{L}) \notag\\
 & \hspace{30pt}= \int_{\mathbb{T}^d_k} \left(\text{W}(\mathcal{CR}^\alpha_{\mathcal{L},r} f_{\mu_1}, \mathcal{CR}^\alpha_{\mathcal{L},r} f_{\mu_2}) +\text{W}(\mathcal{CR}^\alpha_{\mathcal{L},r} f_{\mu_1}, \mathcal{CR}^\alpha_{\mathcal{L},r} f_{\mu_2}) \right)  ~d\sigma(\mathcal{L}) \notag\\
 & \hspace{30pt}\ge \int_{\mathbb{T}^d_k} \text{W}(\mathcal{CR}^\alpha_{\mathcal{L},r} f_{\mu_1}, \mathcal{CR}^\alpha_{\mathcal{T},r} f_{\mu_3})~d\sigma(\mathcal{L})\notag \\
 &\hspace{30pt} = \text{CircularTSW} (\mu_1,\mu_3).
\end{align}
The triangle inequality holds for $\text{CircularTSW}$.

In conclusion, CircularTSW is a metric on the space $\mathcal{P}(\mathbb{R}^d)$.
\end{proof}

\section{Radon Transform on Systems of Lines for Spherical Functions}\label{app:sec:radon_sphere}

In this section, we review \cite{tran2025spherical} which proposes Spherical Radon Transform on Spherical Trees and Spherical Tree-Sliced Wasserstein distance, which are analogs to Radon Transform on Systems of Lines \cite{tran2024tree,tran2025distancebased} and corresponding metric, applied for spherical functions. Then, we explain how to apply Generalized-like framework in this paper for spherical settings
\subsection{Background for Spherical Radon Transform on Spherical Trees}
\label{appendix:background-stsw}
To make this easy to follow, we will follow the construction of Appendix~\ref{appendix:sec:background-tsw-sl}.
\textbf{Building blocks of Spherical Tree-Sliced Wasserstein distance.}
\begin{enumerate}
    \item Given a positive number $d$ presenting the dimension. We will work with functions on the d-dimensional hypersphere $\mathbb{S}^d \subset \mathbb{R}^{d+1}$, where:
\begin{align*}
    \mathbb{S}^{d} \coloneqq \left\{x = (x_0, x_1,\ldots, x_d) \in \mathbb{R}^{d+1} ~  : ~ \|x\|_2 = 1 \right\} \subset \mathbb{R}^{d+1}.
\end{align*}
Note that $\mathbb{S}^d$ is a metric space with the metric  $d_{\mathbb{S}^d}$ defined as  $d_{\mathbb{S}^d}(a,b) = \operatorname{arccos}\left<a,b\right>_{\mathbb{R}^{d+1}}$.
    \item The stereographic projection corresponding to $x \in \mathbb{S}^{d}$ is defined by:
\begin{align}
    \varphi_x ~ \colon ~~~~ \mathbb{S}^{d} \setminus \{x\} ~&\longrightarrow ~ H_x \nonumber \\
    y ~~~ &\longmapsto ~ \dfrac{-\left<x,y \right>}{1-\left<x,y \right>} \cdot x + \dfrac{1}{1-\left<x,y \right>} \cdot y.
\end{align}
By convention, let $\varphi_x(x) = \infty$, then $ \varphi_x \colon\mathbb{S}^{d} \rightarrow H_x \cup \{\infty\}$.
\item The \textit{spherical ray} with root $x$ and direction $y$, denoted by $r^x_y$, is defined as 
\begin{align}
    r_y^x = \varphi_x^{-1}\bigl(\{ t\cdot y ~ \colon ~ t > 0\} \cup \{\infty\} \bigr).
\end{align}
Each ray $r_y^x$ is isomorphic to $[0,\pi]$ via $d_{\mathbb{S}^d}(x,\cdot)$, so it is parameterized as $(t,r^x_y)$.

\item Spherical trees $\mathcal{T}^x_{y_1,\ldots,y_k}$ in $\mathbb{S}^d$ is the gluing space of $k$ spherical rays  $r^x_{y_i}$ at the root $x$. $x$ is the root and $y_1,\ldots,y_k$ are the edges of $\mathcal{T}^x_{y_1,\ldots,y_k}$. It is a measure metric space, endowed with tree metric.

\item The space of spherical trees with $k$ edges in $\mathbb{S}^d$ is denoted by $\mathbb{T}^d_k$, with a probability
distribution $\sigma$ on $\mathbb{T}^d_k$, which comes from the tree sampling process.

\item For $\mathcal{T} \in \mathbb{T}^d_k$, \textit{the space of integrable functions on $\mathcal{T}$} is:
    \begin{align}
    L^1(\mathcal{T}) = \left \{ f \colon \mathcal{T} \rightarrow \mathbb{R} ~ \colon ~ \|f\|_{\mathcal{L}} = \sum_{i=1}^k \int_{0}^\pi |f(t,r^x_y)|  \, dt < \infty \right \}.
    \end{align}

\item A splitting map $\alpha$ is a continuous map from $\mathbb{S}^d \times \mathbb{T}^d_k$ to the $(k-1)$-dimensional standard simplex $\Delta_{k-1}$, i.e. $\alpha \in \mathcal{C}\left(\mathbb{S}^d \times \mathbb{T}^d_k, \Delta_{k-1} \right)$. For $f \in L^1(\mathbb{S}^d)$, we define:
\begin{align}
    \mathcal{R}^\alpha_{\mathcal{T}}f ~~~~~ \colon~~~~~~ \mathcal{T} ~~~~~~~~&\longrightarrow ~~~~~~~~\mathbb{R} \\
    (t,r^x_{y_i})  ~~&\longmapsto ~~~ \int_{\mathbb{S}^d} f(y) \cdot \alpha(y, \mathcal{T})_i \cdot \delta(t  - \operatorname{arccos}\left<x,y \right>) ~ dy.
\end{align}
The function $\mathcal{R}^\alpha_{\mathcal{T}}f$ is in $L^1(\mathcal{T})$.

\item The operator:
\begin{align*}
        \mathcal{R}^{\alpha}~ \colon ~ L^1(\mathbb{S}^d)~  &\longrightarrow ~ \prod_{\mathcal{T} \in \mathbb{T}^d_k} L^1(\mathcal{T}) \\ f ~~~~ &\longmapsto ~ \left(\mathcal{R}_{\mathcal{T}}^{\alpha}f\right)_{\mathcal{T} \in \mathbb{T}^d_k}.
    \end{align*}
    is called the \textit{Spherical Radon Transform on Spherical Trees}.

\item When the splitting map $\alpha$ is $\operatorname{O}(d+1)$-invariant (this $\operatorname{O}(d+1)$-invariance will be described in the next part), the Spherical Radon Transform on Spherical Trees is injective (see \cite{tran2025spherical}).

\item  The \textit{Spherical Tree-Sliced Wasserstein Distance} \cite{tran2025spherical} between $\mu,\nu$ in $\mathcal{P}(\mathbb{S}^d)$ is defined by:
\begin{equation} \label{appendix:eq-STSW}
    \text{STSW} (\mu,\nu) = 
 \int_{\mathbb{T}^d_k} \text{W}_{d_\mathcal{T},1}(\mathcal{R}^\alpha_\mathcal{T} f_\mu, \mathcal{R}^\alpha_\mathcal{T} f_\nu) ~d\sigma(\mathcal{T}).
\end{equation}

\item The $\text{STSW}$ distance is a metric on $\mathcal{P}(\mathbb{S}^d)$.

\item It is worth noting that, on tree systems, optimal transport problems admits closed-form expression, since it is a metric space with tree metric (see \cite{le2019tree}). Leveraging this closed-form expression and the Monte Carlo method, the
distance in Eq.~\eqref{appendix:eq-STSW} can be efficiently approximated by a closed-form expression.
\end{enumerate}

\textbf{The group $\operatorname{O}(d+1)$ and its actions.} The orthogonal group $\operatorname{O}(d+1)$ is the group of linear transformations of $\mathbb{R}^{d+1}$ that preserves the Euclidean norm $\| \cdot \|_2$. The group $\operatorname{O}(d+1)$ acts on $\mathbb{S}^d$ naturally as follows: For $x \in \mathbb{S}^d$ and $g=Q \in \operatorname{O}(d+1)$, we have:
\begin{align}
    (g,x) \longmapsto gx = Q \cdot x.
\end{align}
It naturally induces a group action on the set of all spherical lines in $\mathbb{S}^d$, as well as spherical trees.
The tree structure of a spherical tree is preserved under the action of $\operatorname{O}(d+1)$ (see  \cite{tran2024tree, tran2025spherical}). In other words, if $\mathcal{T} \in \mathbb{T}$ is a spherical tree, then $g\mathcal{T}$ is also a spherical tree. 

\begin{definition}
    A splitting map $\alpha$ in $\mathcal{C}(\mathbb{S}^d \times \mathbb{T}^d_k, \Delta_{k-1})$ is said to be $\operatorname{O}(d+1)$-invariant, if we have
    \begin{align}
        \alpha(gy,g\mathcal{T}) = \alpha(y,\mathcal{T})
    \end{align}
    for all $(y, \mathcal{T}) \in \mathbb{S}^d \times \mathbb{T}^d_k$ and $g \in \operatorname{O}(d+1)$.
\end{definition}

A candidate for $\operatorname{O}(d+1)$-invariant splitting maps is presented as follows: Consider the map $\beta \colon \mathbb{S}^d \times \mathbb{T}^d_k \rightarrow \mathbb{R}^k$:
\begin{align}
    \beta(y,\mathcal{T}^x_{y_1,\ldots,y_k})_i = \begin{cases}
        0, & \text{if $y=x$ or $y=-x$,} \\
        \operatorname{arccos}\left(\dfrac{\left<y,y_i\right>}{\sqrt{1-\left<x,y\right>^2}}\right) \cdot \sqrt{1-\left<x,y\right>^2},  & \text{if $y \neq \pm x$.}
    \end{cases} 
\end{align}
The map $\beta$ is continuous and $\operatorname{O}(d+1)$-invariant. Take $\alpha \colon \mathbb{S}^d \times \mathbb{T}^d_k \rightarrow \Delta_{k-1}$ to be:
\begin{align}
\label{eq:construction-of-alpha}
    \alpha(y,\mathcal{T}) = \operatorname{softmax}\Bigl (\{ \beta(y,\mathcal{T})_i\}_{i=1,\ldots,k} \Bigr ).
\end{align}

\subsection{Spatial Spherical Radon Transform on Spherical Trees}
\label{appendix:def-Spatial Spherical Radon Transform on Spherical Trees}
Consider a positive integer $d_\theta$, and an injective continuous map $h \colon \mathbb{S}^d \rightarrow \mathbb{S}^{d_\theta}$, and a splitting map $\alpha \in \mathcal{C}(\mathbb{R}^{d_\theta} \times \mathbb{L}^{d_\theta}_k, \Delta_{k-1})$ defining the splitting mechanism. Let $\mathcal{T}$ be a spherical tree of $k$ edges in $\mathbb{T}^{d_\theta}_k$. For a function $f \in L^1(\mathbb{S}^d)$, define the function $\mathcal{H}_{\mathcal{T}}^{\alpha}f \in L^1(\mathcal{T})$ as follows:
\begin{align}
\mathcal{R}^\alpha_{\mathcal{T}}f ~~~~ \colon ~~~~~ \mathcal{T}~~~~~~~~~ &\longrightarrow ~~~~~~~ \mathbb{R} \\
(t,r^x_{y_i}) ~~~~~~ &\longmapsto ~~ \int_{\mathbb{S}^d} f(y) \cdot \alpha(h(y), \mathcal{T})_i \cdot \delta(t  - \operatorname{arccos}\left<x,h(y) \right>) ~ dy,
\end{align}
The \textit{Spatial Spherical Radon Transform on Spherical Trees} is defined as the operator:
\begin{align}
        \mathcal{H}^{\alpha} ~~ \colon~~~~~~~~~ L^1(\mathbb{R}^d) ~~~~~~ &\longrightarrow ~~~~~~\prod_{\mathcal{T} \in \mathbb{T}^{d_\theta}_k} L^1(\mathcal{T}) \notag \\f  ~~~~~~~~~~~~&\longmapsto ~~~~~~ \left(\mathcal{H}_{\mathcal{T}}^{\alpha}f\right)_{\mathcal{T} \in \mathbb{T}^{d_\theta}_k}.
\end{align}

\subsection{Proof for Theorem~\ref{mainbody:injectivity-of-generalized-spherical-radon-transform-tree}}
\label{appendix:Proof for Theorem{mainbody:injectivity-of-generalized-spherical-radon-transform-tree}}
We present the proof for Theorem~\ref{mainbody:injectivity-of-generalized-spherical-radon-transform-tree} about the injectivity of the Spatial Spherical Radon Transform on Spherical Trees.
\begin{proof}
    Recall the Radon Transform on Spherical Tres $\mathcal{R}^\alpha$ \cite{tran2025spherical} as follows:
    \begin{align}
        \mathcal{R}^{\alpha}~ \colon ~ L^1(\mathbb{S}^d)~  &\longrightarrow ~ \prod_{\mathcal{T} \in \mathbb{T}^d_k} L^1(\mathcal{T}) \notag \\ f ~~~~ &\longmapsto ~ \left(\mathcal{R}_{\mathcal{T}}^{\alpha}f\right)_{\mathcal{T} \in \mathbb{T}^d_k},
    \end{align} 
where
    \begin{align}
\mathcal{R}_{\mathcal{T}}^{\alpha}f~ \colon ~~~~~~~~~~~~\mathcal{T}~~~~~~~ &\longrightarrow ~~~~~~~\mathbb{R}  \notag\\(t,r^x_{y_i}) ~~&\longmapsto ~\int_{\mathbb{S}^d} f(y) \cdot \alpha(y,\mathcal{L})_1 \cdot \delta\left(t - \arccos\left<x,y\right>\right)  ~ dy,
\end{align}
It is proved in \cite{tran2025spherical} that $\mathcal{R}^\alpha$ is injective for $\operatorname{O}(d+1)$-invariant splitting map $\alpha$. We use this result to prove the Spatial Radon Transform on Spherical Trees is injective. First, by the injective continuous map $h ~ \colon ~ \mathbb{S}^d \rightarrow \mathbb{S}^{d_\theta}$, we show that the push-forward of $f \in \mathbb{R}^d$ via $h$, defined as:
\begin{align} \label{appendix:pushforward-in-spatial-spherical}
    h_\sharp f(y) =  \begin{cases}
        f(h^{-1}(y)) &, ~ \text{ for all } y \in \mathbb{S}^{d_\theta}~ \text{ such that }  y \in h(\mathbb{S}^d), \\
        0 &, ~ \text{ for all } y \in \mathbb{S}^{d_\theta}~ \text{ such that }  y \notin h(\mathbb{S}^d).
    \end{cases}
\end{align}
has its Spherical Radon Transform on Spherical Trees, i.e. $\{\mathcal{R}^\alpha_\mathcal{T}(h_\sharp f)\}_{\mathcal{T} \in \mathbb{T}^{d_\theta}_k}$, equal to the Spatial Radon Transform on Spherical Trees of $f$, i.e. $\{\mathcal{H}^\alpha_\mathcal{T}f\}_{\mathcal{T} \in \mathbb{T}^{d_\theta}_k}$. In other words, for all $\mathcal{T} \in \mathbb{T}^{d_\theta}_k$, we have:
\begin{align}
    \mathcal{H}^\alpha_\mathcal{T}f = \mathcal{R}^\alpha_\mathcal{T}(h_\sharp f).
\end{align}
Indeed, we have:
\allowdisplaybreaks
\begin{align}
    \mathcal{R}^\alpha_\mathcal{T}(h_\sharp f)(t,r^{x}_{y_i}) &= \int_{\mathbb{S}^{d_{\theta}}} h_\sharp f(y) \cdot \alpha(y,\mathcal{L})_l \cdot \delta\left(t - \arccos\left<x,y \right>\right)  ~ dy \notag \\
    &= \int_{h(\mathbb{S}^{d})} h_\sharp f(y) \cdot \alpha(y,\mathcal{L})_l \cdot \delta\left(t - \arccos\left<x,y \right>\right)  ~ dy \notag \\
    &= \int_{\mathbb{S}^{d}} f(h^{-1}(h(y)) \cdot \alpha(h(y),\mathcal{L})_l \cdot \delta\left(t - \arccos\left<x,h(y) \right>\right)  ~ dy \notag\\
    &= \int_{\mathbb{S}^d} f(y) \cdot \alpha(h(y),\mathcal{L})_l \cdot \delta\left(t - \arccos\left<x,h(y) \right>\right)  ~ dy \notag \\
    &= \mathcal{H}^\alpha_\mathcal{T}f(t,r^{x}_{y_i}).
\end{align}
It is clear that $\mathcal{H}^\alpha$ is a linear operator. To prove $\mathcal{H}^\alpha$ is injective, consider $f \in \operatorname{Ker}(\mathcal{H}^\alpha)$. Since $\mathcal{H}^\alpha_\mathcal{T}f = \mathcal{R}^\alpha_\mathcal{T}(h_\sharp f)$, it implies that $h_\sharp f \in \operatorname{Ker}(\mathcal{R}^\alpha)$. Since $\mathcal{R}^\alpha$ is injective, it implies that $h_\sharp f$ is the function $0$. By the definition of the push-forward $h_\sharp f$ as in Eq.~\eqref{appendix:pushforward-in-spatial-spherical}, we conclude that  $f$ is the function $0$. In conclusion, $\mathcal{H}^\alpha$ is injective.
\end{proof}

\subsection{Spatial Spherical Tree-Sliced Wasserstein Distance}
\label{appendix:dis-Spatial Spherical Tree-Sliced Wasserstein distance}
For two probability measures $\mu$ and $\nu$ with density function $f_\mu$ and $f_\nu$. Given a positive integer $d_\theta$ and a choice of the continuous injective map $h \colon \mathbb{S}^d \rightarrow \mathbb{S}^{d_\theta}$, the \textit{Spatial Spherical Tree-Sliced Wasserstein Distance} between $\mu$ and $\nu$ is defined as the average Wasserstein distance on the tree-metric space $\mathcal{L}$ between the Spatial Spherical Radon Transform on Spherical Trees of $f_\mu$ and $f_\nu$. Following \cite{tran2025spherical}, this averaging is taken over the space of trees $\mathbb{T}^{d_\theta}_k$, according to a distribution $\sigma$ on $\mathbb{T}^{d_\theta}_k$ which arises from the tree sampling process.
\begin{definition}
The \textit{Spatial Spherical Tree-Sliced Wasserstein Distance} between $\mu$ and $\nu$ in $\mathcal{P}(\mathbb{S}^d)$ is defined by:
\begin{align} 
\label{eq:SpatialSTSW-formula} 
    \text{SpatialSTSW} (\mu,\nu) \coloneqq \int_{\mathbb{T}^{d_\theta}_k} \text{W}(\mathcal{H}^\alpha_\mathcal{T} f_\mu, \mathcal{H}^\alpha_\mathcal{T} f_\nu) ~d\sigma(\mathcal{T}).
\end{align}
\end{definition}
SpatialSTSW is a metric on the space $\mathcal{P}(\mathbb{S}^d)$ of measures on $\mathbb{S}^d$.
\begin{theorem}
    SpatialSTSW is a metric on the space $\mathcal{P}(\mathbb{S}^d)$.
\end{theorem}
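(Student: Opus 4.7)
The plan is to follow the template used for Theorem~\ref{thm:CircularTSW and SpatialTSW are metrics} in Appendix~\ref{appendix:thm:CircularTSW and SpatialTSW are metrics}, transcribed into the spherical setting with $\mathcal{H}^\alpha$ and $\mathbb{T}^{d_\theta}_k$ in place of their Euclidean counterparts. I would verify the three metric axioms for the functional in Eq.~(\ref{eq:SpatialSTSW-formula}) in turn: nonnegativity together with identity of indiscernibles, symmetry, and the triangle inequality.

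First, for each spherical tree $\mathcal{T} \in \mathbb{T}^{d_\theta}_k$ the pushed-forward objects $\mathcal{H}^\alpha_\mathcal{T} f_\mu,\mathcal{H}^\alpha_\mathcal{T} f_\nu$ lie in $\mathcal{P}(\mathcal{T})$ (this uses the well-definedness argument for $\mathcal{H}^\alpha$ adapted to the sphere, paralleling Appendix~\ref{appendix:well-definedness-S}), so $\text{W}(\mathcal{H}^\alpha_\mathcal{T} f_\mu,\mathcal{H}^\alpha_\mathcal{T} f_\nu)$ is a well-defined nonnegative quantity. Symmetry is immediate from symmetry of $\text{W}$ and linearity of the integral against $\sigma$. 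For the triangle inequality, given $\mu_1,\mu_2,\mu_3 \in \mathcal{P}(\mathbb{S}^d)$, I would use the pointwise triangle inequality
\begin{align}
\text{W}(\mathcal{H}^\alpha_\mathcal{T} f_{\mu_1}, \mathcal{H}^\alpha_\mathcal{T} f_{\mu_3}) \le \text{W}(\mathcal{H}^\alpha_\mathcal{T} f_{\mu_1}, \mathcal{H}^\alpha_\mathcal{T} f_{\mu_2}) + \text{W}(\mathcal{H}^\alpha_\mathcal{T} f_{\mu_2}, \mathcal{H}^\alpha_\mathcal{T} f_{\mu_3})
\end{align}
for each $\mathcal{T}$, and then integrate against $d\sigma(\mathcal{T})$ and split the sum; this is the same manipulation carried out in Appendix~\ref{appendix:thm:CircularTSW and SpatialTSW are metrics}.

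The key remaining step is the identity of indiscernibles: if $\text{SpatialSTSW}(\mu,\nu)=0$ then $\mu=\nu$. Here the nonnegative integrand must vanish $\sigma$-almost everywhere, so $\mathcal{H}^\alpha_\mathcal{T} f_\mu = \mathcal{H}^\alpha_\mathcal{T} f_\nu$ for $\sigma$-a.e.\ $\mathcal{T} \in \mathbb{T}^{d_\theta}_k$ since $\text{W}$ is itself a metric on $\mathcal{P}(\mathcal{T})$. I would then invoke Theorem~\ref{mainbody:injectivity-of-generalized-spherical-radon-transform-tree} to conclude $f_\mu=f_\nu$, hence $\mu=\nu$; this is precisely the role played by injectivity of $\mathcal{CR}^\alpha$ (resp.\ $\mathcal{H}^\alpha$) in the Euclidean proof.

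The main obstacle is a hypothesis check rather than a new estimate: one must ensure that the splitting map $\alpha$ used to define $\text{SpatialSTSW}$ is $\operatorname{O}(d_\theta+1)$-invariant, so that Theorem~\ref{mainbody:injectivity-of-generalized-spherical-radon-transform-tree} actually applies, and that $\sigma$ has full support on $\mathbb{T}^{d_\theta}_k$ (so that $\sigma$-a.e.\ equality of $\mathcal{H}^\alpha_\mathcal{T} f_\mu$ and $\mathcal{H}^\alpha_\mathcal{T} f_\nu$ is tantamount to equality for all $\mathcal{T}$ once combined with continuity in $\mathcal{T}$, as in the arguments of \cite{tran2025spherical,tran2025distancebased}). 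Both conditions match the standing setup elsewhere in the paper, so beyond making them explicit there is no new technical ingredient; the rest of the proof is a direct transcription of Appendix~\ref{appendix:thm:CircularTSW and SpatialTSW are metrics}.
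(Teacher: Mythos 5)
Your proposal is correct and follows essentially the same route as the paper's own proof in Appendix~\ref{appendix:dis-Spatial Spherical Tree-Sliced Wasserstein distance}: verify symmetry and the integrated triangle inequality directly, and obtain identity of indiscernibles from the injectivity of the Spatial Spherical Radon Transform on Spherical Trees (Theorem~\ref{mainbody:injectivity-of-generalized-spherical-radon-transform-tree}). Your added remarks on checking the $\operatorname{O}(d_\theta+1)$-invariance of $\alpha$ and the support of $\sigma$ are sensible hypothesis checks that the paper leaves implicit, but they do not alter the argument.
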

\begin{proof}
    We will show that:
    \begin{equation}
    \text{SpatialSTSW} (\mu,\nu) = 
  \int_{\mathbb{T}^{d_{\theta}}_k} \text{W}(\mathcal{H}^\alpha_\mathcal{T} f_\mu, \mathcal{H}^\alpha_\mathcal{T} f_\nu) ~d\sigma(\mathcal{T}),
\end{equation}
is a metric on $\mathcal{P}(\mathbb{S}^d)$, by verifying its positive definiteness, symmetry and triangle inequality.
\paragraph{Positive definiteness.} For $\mu,\nu \in \mathcal{P}(\mathbb{S}^d)$, it is clear that 
\begin{align}
    \text{SpatialSTSW}(\mu,\mu) = 0,
\end{align} 
and
\begin{align}
\text{SpatialSTSW}(\mu,\nu) \ge 0. 
\end{align}
If $\text{SpatialSTSW}(\mu,\nu) = 0$, then $\text{W}(\mathcal{H}^\alpha_{\mathcal{T}} f_\mu, \mathcal{H}^\alpha_{\mathcal{T}} f_\nu) = 0$ for almost every $\mathcal{T} \in \mathbb{T}^d_k$. Since $\text{W}$ is a metric on $\mathcal{P}(\mathcal{T})$, we have $\mathcal{H}^\alpha_{\mathcal{T}} f_\mu =  \mathcal{H}^\alpha_{\mathcal{T}} f_\nu$ for almost every $\mathcal{L} \in \mathbb{T}$. By the injectivity of the Spatial Spherical Radon Transform on Spherical Trees, it implies that $\mu =\nu$.

\paragraph{Symmetry.} For $\mu,\nu \in \mathcal{P}(\mathbb{S}^d)$, we have:
\begin{align}
    \text{SpatialSTSW} (\mu,\nu) &= 
  \int_{\mathbb{T}^{d_\theta}_k} \text{W}(\mathcal{H}^\alpha_{\mathcal{L}} f_\mu, \mathcal{H}^\alpha_{\mathcal{L}} f_\nu) ~d\sigma(\mathcal{T}) \notag \\
  &= 
  \int_{\mathbb{T}^{d_\theta}_k} \text{W}(\mathcal{H}^\alpha_{\mathcal{L}} f_\nu, \mathcal{H}^\alpha_{\mathcal{L}} f_\mu) ~d\sigma(\mathcal{T}) \notag \\
  &= \text{SpatialSTSW} (\nu,\mu)
\end{align}

So  $\text{SpatialSTSW}(\mu,\nu)  =\text{SpatialSTSW} (\nu,\mu)$. 

\paragraph{Triangle inequality.} For $\mu_1, \mu_2, \mu_3 \in \mathcal{P}(\mathbb{S}^d)$, we have:
\begin{align}
    &\text{SpatialSTSW} (\mu_1,\mu_2) + \text{SpatialSTSW} (\mu_2,\mu_3) \notag\\
    &\hspace{30pt}=\int_{\mathbb{T}^{d_\theta}_k} \text{W}(\mathcal{H}^\alpha_{\mathcal{T}} f_{\mu_1}, \mathcal{H}^\alpha_{\mathcal{T}} f_{\mu_2})  ~d\sigma(\mathcal{T})
 +\int_{\mathbb{T}^{d_\theta}_k} \text{W}(\mathcal{H}^\alpha_{\mathcal{T}} f_{\mu_2}, \mathcal{H}^\alpha_{\mathcal{T}} f_{\mu_3})~d\sigma(\mathcal{T}) \notag\\
 & \hspace{30pt}= \int_{\mathbb{T}^{d_\theta}_k} \left(\text{W}(\mathcal{H}^\alpha_{\mathcal{T}} f_{\mu_1}, \mathcal{H}^\alpha_{\mathcal{T}} f_{\mu_2}) +\text{W}(\mathcal{H}^\alpha_{\mathcal{T}} f_{\mu_1}, \mathcal{H}^\alpha_{\mathcal{T}} f_{\mu_2}) \right)  ~d\sigma(\mathcal{T}) \notag\\
 & \hspace{30pt}\ge \int_{\mathbb{T}^{d_\theta}_k} \text{W}(\mathcal{H}^\alpha_{\mathcal{T}} f_{\mu_1}, \mathcal{H}^\alpha_{\mathcal{T}} f_{\mu_3})~d\sigma(\mathcal{T})\notag \\
 &\hspace{30pt} = \text{CircularTSW} (\mu_1,\mu_3).
\end{align}
The triangle inequality holds for $\text{SpatialSTSW}$.

In conclusion, SpatialSTSW is a metric on the space $\mathcal{P}(\mathbb{S}^d)$.
\end{proof}

\textbf{The choice of the injective map $h$.} Note that, the map $h ~ \colon ~ \mathbb{S}^d \rightarrow \mathbb{S}^{d_\theta}$ has to satisfy the injective condition. We construct $h$ as follows. We construct $h$ as follows. First, consider $d_\theta = d + 1$. We define a continuous function:  
$k(y) = \frac{\pi}{2(1+\epsilon)} \left( \frac{1}{d+1} \sum_{i=0}^{d} y_i + 1 + \epsilon \right),$
which maps $y \in \mathbb{S}^d$ to the range $(0, \pi)$. We set $\epsilon = 10^{-6}$.  Using this, we define the mapping  
$h(y) = \left( \cos(k(y)), \sin(k(y)) \cdot y \right),$
which is injective.

\section{Experimental Details}\label{app:sec:experiment_details}

\subsection{Algorithm of proposed Tree-Sliced Distances}\label{app:subsec:alg_TSW}

We describe the pseudo-codes for $\widehat{\text{CircularTSW}}, \widehat{\text{SpatialTSW}}, \widehat{\text{SpatialSTSW}}$ in Algorithms~\ref{alg:compute-tsw-sl-circular},~\ref{alg:compute-tsw-sl-spatial},~\ref{alg:compute-tsw-sl-spatial-spherical} respectively.

\begin{algorithm}[H]
\caption{Circular Tree-Sliced Wasserstein distance.}
\begin{algorithmic}
\label{alg:compute-tsw-sl-circular}
    \STATE \textbf{Input:} Probability measures $\mu$ and $\nu$ in $\mathcal{P}(\mathbb{R}^d)$, number of tree systems $L$, number of lines in tree system $k$, space of tree systems $\mathbb{T}$, splitting maps $\alpha$, and parameter $r \in \mathbb{R}_{\ge0}$.
    \FOR{$i=1$ to $L$}
    \STATE Sampling $x \in \mathbb{R}^d$ and $\theta_1, \ldots, \theta_k \overset{i.i.d}{\sim} \mathcal{U}(\mathbb{S}^{d-1})$.
    \STATE Contruct tree system $\mathcal{L}_i = \{(x,\theta_1), \ldots, (x,\theta_k)\}$.
    \STATE Projecting $\mu$ and $\nu$ onto $\mathcal{L}_i$ to get $\mathcal{CR}^{\alpha}_{\mathcal{L}_i, r} \mu$ and $\mathcal{CR}^{\alpha}_{\mathcal{L}_i, r} \nu$.
  
    \STATE Compute $\widehat{\text{CircularTSW}} (\mu,\nu) =  (1/L) 
    \cdot \text{W}(\mathcal{CR}^\alpha_{\mathcal{L}_i, r} \mu,\mathcal{CR}^\alpha_{\mathcal{L}_i, r} \nu)$.
  \ENDFOR
 \STATE \textbf{Return:} $\widehat{\text{CircularTSW}} (\mu,\nu)$.
\end{algorithmic}
\end{algorithm}

\begin{algorithm}[H]
\caption{Spatial Tree-Sliced Wasserstein distance.}
\begin{algorithmic}
\label{alg:compute-tsw-sl-spatial}
    \STATE \textbf{Input:} Probability measures $\mu$ and $\nu$ in $\mathcal{P}(\mathbb{R}^{d_\theta})$, number of tree systems $L$, number of lines in tree system $k$, space of tree systems $\mathbb{T}$, splitting maps $\alpha$, and injective continuous map $h ~ \colon ~ \mathbb{R}^d \rightarrow \mathbb{R}^{d_\theta}$.
    \FOR{$i=1$ to $L$}
    \STATE Sampling $x \in \mathbb{R}^d$ and $\theta_1, \ldots, \theta_k \overset{i.i.d}{\sim} \mathcal{U}(\mathbb{S}^{{d_\theta}-1})$.

    \STATE Contruct tree system $\mathcal{L}_i = \{(x,\theta_1), \ldots, (x,\theta_k)\}$.
    \STATE Projecting $\mu$ and $\nu$ onto $\mathcal{T}_i$ to get $\mathcal{H}^{\alpha}_{\mathcal{L}_i} \mu$ and $\mathcal{H}^{\alpha}_{\mathcal{L}_i} \nu$.
  
    \STATE Compute $\widehat{\text{SpatialTSW}} (\mu,\nu) =  (1/L) 
    \cdot \text{W}(\mathcal{R}^\alpha_{\mathcal{L}_i} \mu,\mathcal{R}^\alpha_{\mathcal{L}_i} \nu)$.
  \ENDFOR
 \STATE \textbf{Return:} $\widehat{\text{SpatialTSW}} (\mu,\nu)$.
\end{algorithmic}
\end{algorithm}

\begin{algorithm}[H]
\caption{Spatial Spherical Tree-Sliced Wasserstein distance.}
\begin{algorithmic}
\label{alg:compute-tsw-sl-spatial-spherical}
    \STATE \textbf{Input:} Probability measures $\mu$ and $\nu$ in $\mathcal{P}(\mathbb{S}^d)$, number of tree systems $L$, number of lines in tree system $k$, space of tree systems $\mathbb{T}$, splitting maps $\alpha$, and injective continuous map $h ~ \colon ~ \mathbb{S}^d \rightarrow \mathbb{S}^{d_\theta}$.
    \FOR{$i=1$ to $L$}
    \STATE Sampling $x \in \mathbb{S}^{d_\theta}$ and $y_1, \ldots, y_k \overset{i.i.d}{\sim} \mathcal{U}(\mathbb{S}^{{d_\theta}-1})$.
    \STATE Contruct tree system $\mathcal{T}_i = \{(x,y_1), \ldots, (x,y_k)\}$.
    \STATE Projecting $\mu$ and $\nu$ onto $\mathcal{L}_i$ to get $\mathcal{H}^{\alpha}_{\mathcal{T}_i} \mu$ and $\mathcal{H}^{\alpha}_{\mathcal{T}_i} \nu$.
  
    \STATE Compute $\widehat{\text{SpatialSTSW}} (\mu,\nu) =  (1/L) 
    \cdot \text{W}(\mathcal{H}^\alpha_{\mathcal{T}_i} \mu,\mathcal{H}^\alpha_{\mathcal{T}_i} \nu)$.
  \ENDFOR
 \STATE \textbf{Return:} $\widehat{\text{SpatialSTSW}} (\mu,\nu)$.
\end{algorithmic}
\end{algorithm}

\subsection{Computational and Memory Complexity}
\label{appendix:complexity}

\begin{table}[t]
\caption{Computational and Memory Complexity Analysis of proposed Tree-Sliced distances}
\label{tab:complexity_analysis}
\vskip 0.15in
\begin{center}
\begin{small}
\begin{adjustbox}{width=0.95\textwidth}
\begin{tabular}{p{3cm}p{3cm}llll}
\toprule
\textbf{Distance} & \textbf{Operation} & \textbf{Description} & \textbf{Computation} & \textbf{Memory} \\ 
\midrule
\multirow{4}{*}{SpatialTSW} 
& Mapping & Map points onto new space & $O(nd_\theta)$ & $O(nd_\theta)$\\
& Projection & Matrix multiplication of points and lines & $O(Lknd_\theta)$ & $O(Lkd_\theta + nd_\theta)$\\
& Distance-based weight splitting & Distance calculation and softmax & $O(Lknd_\theta)$ & $O(Lkn + Lkd_\theta + nd_\theta)$ \\
& Sorting & Sorting projected coordinates & $O(Lkn\log n)$ & $O(Lkn)$ \\
& \textbf{Total} & & $O(Lknd_\theta + Lkn\log n)$ & $O(Lkn + Lkd_\theta + nd_\theta)$ \\
\midrule
\multirow{3}{*}{CircularTSW} 
& Circular projection & Subtraction and Norm calculation & $O(Lknd_\theta)$ & $O(Lkd_\theta + nd_\theta)$ \\
& Distance-based weight splitting & Distance calculation and softmax & $O(Lknd_\theta)$ & $O(Lkn + Lkd_\theta + nd_\theta)$ \\
& Sorting & Sorting projected coordinates & $O(Lkn\log n)$ & $O(Lkn)$ \\
& \textbf{Total} & & $O(Lknd_\theta + Lkn\log n)$ & $O(Lkn + Lkd_\theta + nd_\theta)$ \\
\midrule
\multirow{3}{*}{CircularTSW$_{r=0}$} 
& Circular projection & Subtraction and Norm calculation & $O(Lnd_\theta)$ & $O(Ld_\theta + nd_\theta)$ \\
& Distance-based weight splitting & Distance calculation and softmax & $O(Lknd_\theta)$ & $O(Lkn + Lkd_\theta + nd_\theta)$ \\
& Sorting & Sorting projected coordinates & $O(Ln\log n)$ & $O(Ln)$ \\
& \textbf{Total} & & $O(Lknd_\theta + Ln\log n)$ & $O(Lkn + Lkd_\theta + nd_\theta)$ \\
\midrule
\multirow{4}{*}{SpatialSTSW} 
& Mapping & Map points onto new space & $O(nd_\theta)$ & $O(nd_\theta)$\\
& Projection & Matrix multiplication of points and source & $O(Lnd_\theta)$ & $O(Ld_\theta + nd_\theta)$ \\
& Distance-based weight splitting & Distance calculation and softmax & $O(Lknd_\theta)$ & $O(Lkn + Lkd_\theta + nd_\theta)$ \\
& Sorting & Sorting projected coordinates & $O(Ln\log n)$ & $O(Ln)$ \\
& \textbf{Total} & & $O(Lknd_\theta + Ln\log n)$ & $O(Lkn + Lkd_\theta + nd_\theta)$ \\
\bottomrule
\end{tabular}
\end{adjustbox}
\end{small}
\end{center}
\vskip -0.1in
\end{table}

We provide complexity and memory analysis of our proposed distance. Since memory on GPU can be optimized for parallel processing capabilities, we provide the empirical memory usage on GPU, with expectation that our distance would be used in a GPU setting which is standard in machine learning.

\textbf{Computation and Memory Complexity.} Assuming $n \ge m$, the computational complexity of SpatialTSW and CircularTSW is $O(Lknd_\theta + Lkn\log n)$, while CircularTSW$_{r=0}$ and SpatialSTSW have a more efficient complexity of $O(Lknd_\theta + Ln\log n)$. All distances share an empirical memory cost of $O(Lkn + Lkd_\theta + nd_\theta)$. We analyze the main operations of our proposed distances in Table~\ref{tab:complexity_analysis}.

\textbf{Projection and Sorting.}  
In CircularTSW$_{r=0}$ and SpatialSTSW, the projected coordinates within a tree is the same for all lines. As a result, the computational complexity of these two steps in CircularTSW$_{r=0}$ and SpatialSTSW is reduced by a factor of the number of lines in a tree, $k$, compared to SpatialTSW and CircularTSW. This reduction is the primary reason for the computational advantage of CircularTSW$_{r=0}$ and SpatialSTSW.

\textbf{Memory Cost of Distance-Based Splitting.}  
As previously noted in prior work \cite{tran2025distancebased}, the empirical GPU-optimized memory cost of distance-based splitting is lower than its theoretical estimate due to kernel fusion optimizations. This operation consists of: (1) computing distance vectors from points to lines ($O(Lknd_\theta)$ computation and memory), (2) calculating their norms ($O(Lknd_\theta)$ computation and $O(Lknd_\theta)$ memory), and (3) applying softmax over all lines in each tree ($O(Lkn)$ computation and memory). While the theoretical cost is $O(Lknd_\theta)$ for both computation and memory, we leverage PyTorch’s automatic kernel fusion (via `torch.compile`) to merge these steps into a single operation. This enables the distance vectors ($Lkn \times d_\theta$) to be stored in shared GPU memory rather than global memory. As a result, only three matrices need to be stored: a line matrix ($O(Lkd_\theta)$), a support matrix ($O(nd_\theta)$), and a split weight matrix ($O(Lkn)$), reducing overall GPU memory usage to $O(Lkn + Lkd_\theta + nd_\theta)$.

\subsection{Runtime and memory analysis}
\label{appendix:runtime-memory-analysis}

\begin{figure}[h]
    \centering
    \begin{minipage}{0.49\textwidth}
        \centering
        \includegraphics[width=\textwidth]{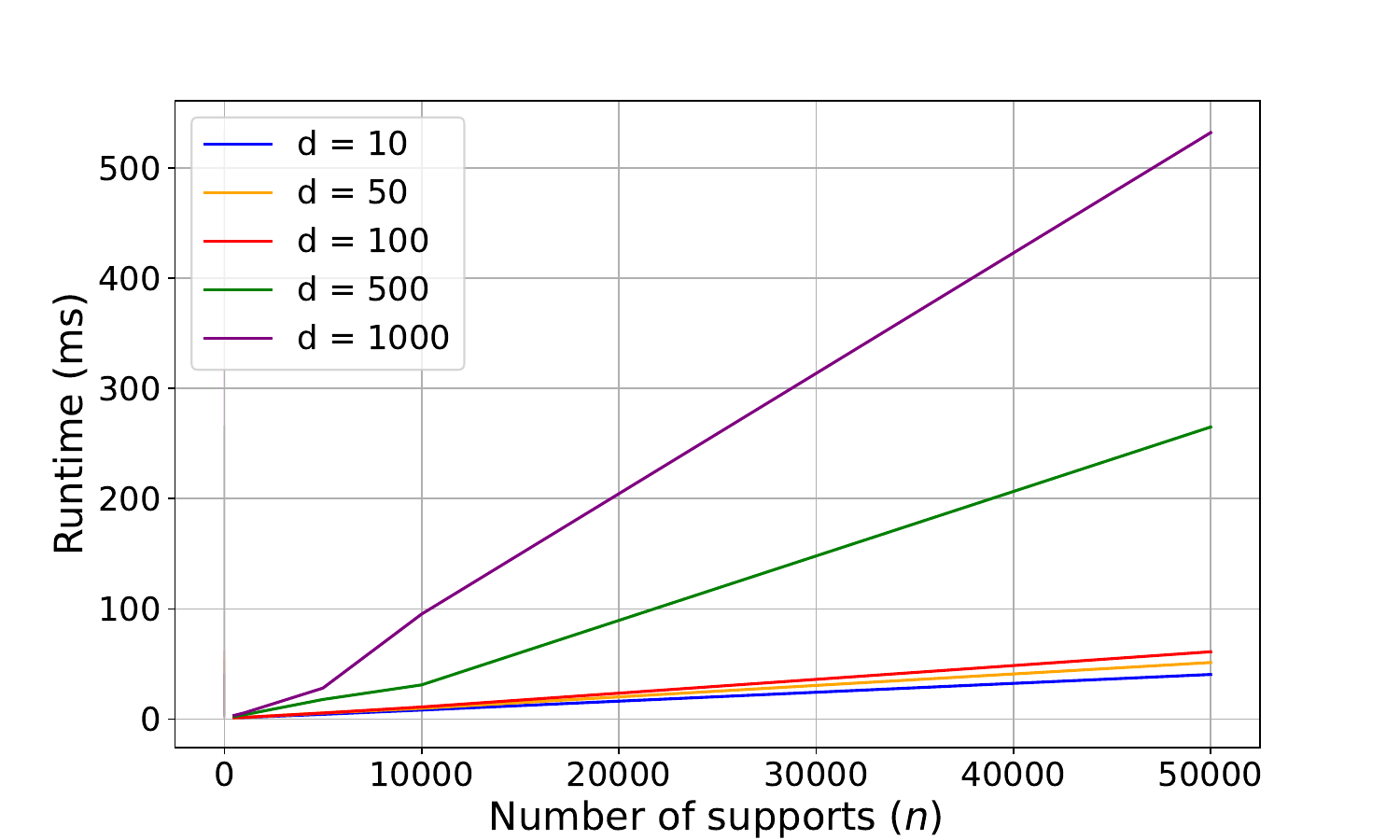}
    \end{minipage}
    \hfill
    \begin{minipage}{0.49\textwidth}
        \centering
        \includegraphics[width=\textwidth]{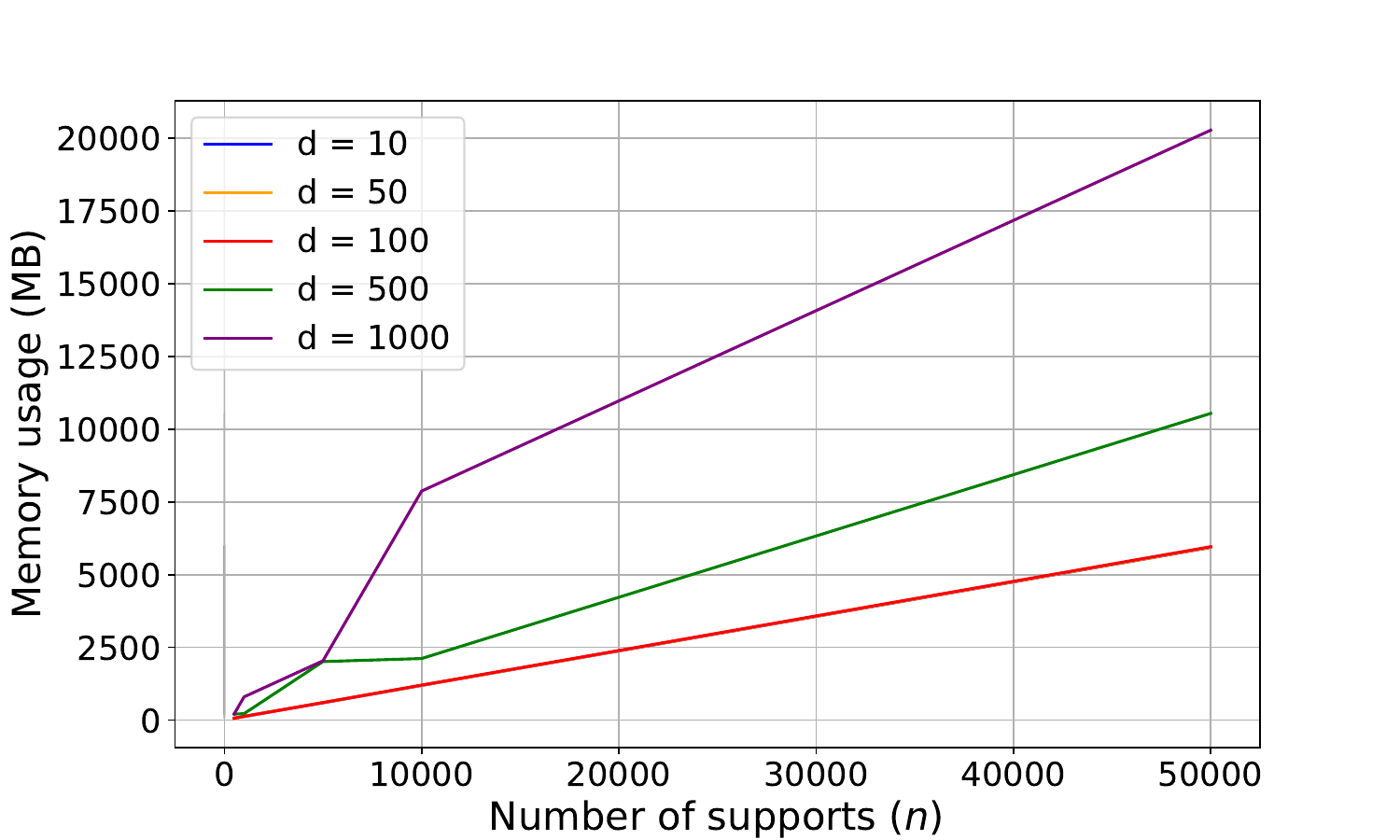}
    \end{minipage}
    
    \caption{Runtime and memory evolution of SpatialTSW.}
    \label{fig:pow_runtime_memory}
\end{figure}
\begin{figure}[h]
    \centering
    \begin{minipage}{0.49\textwidth}
        \centering
        \includegraphics[width=\textwidth]{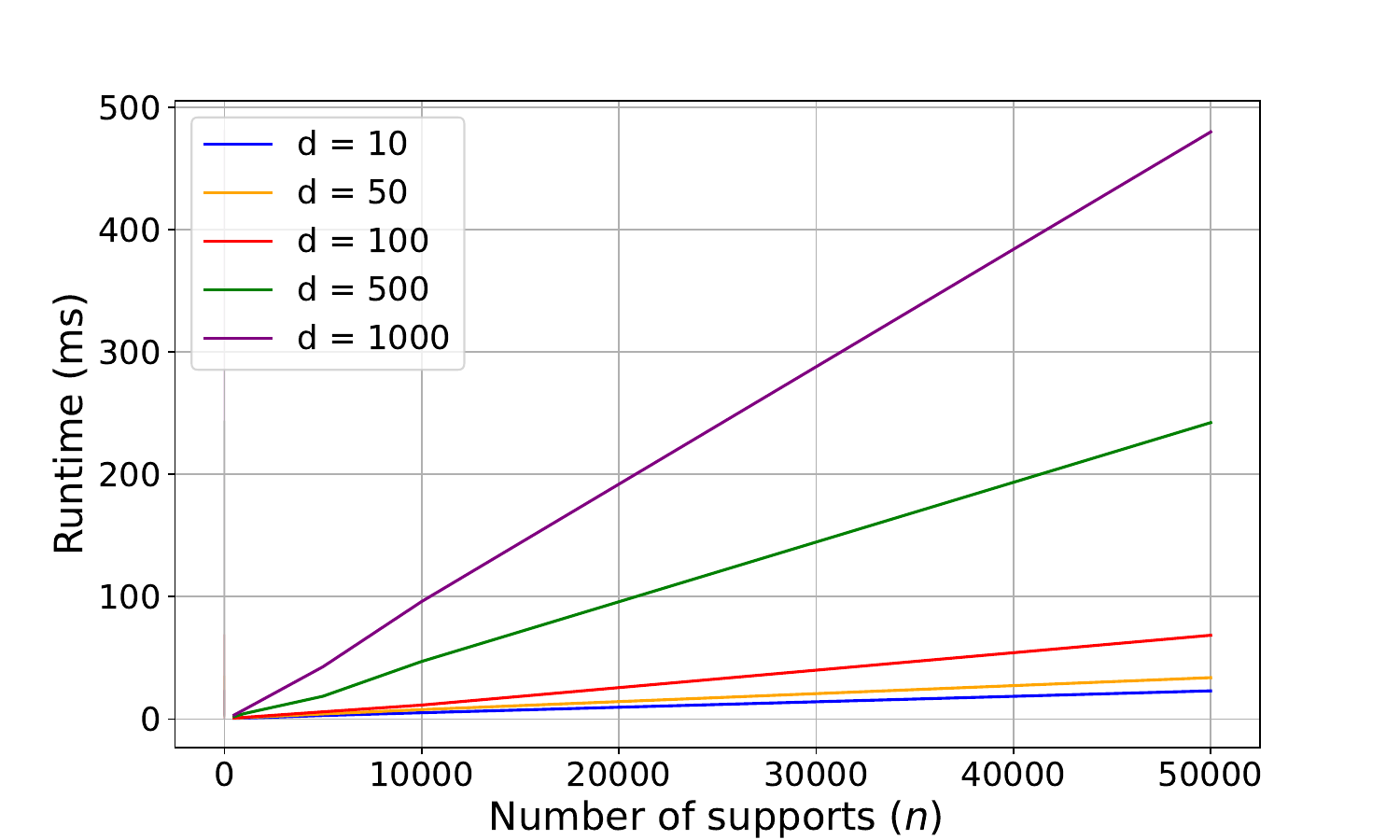}
    \end{minipage}
    \hfill
    \begin{minipage}{0.49\textwidth}
        \centering
        \includegraphics[width=\textwidth]{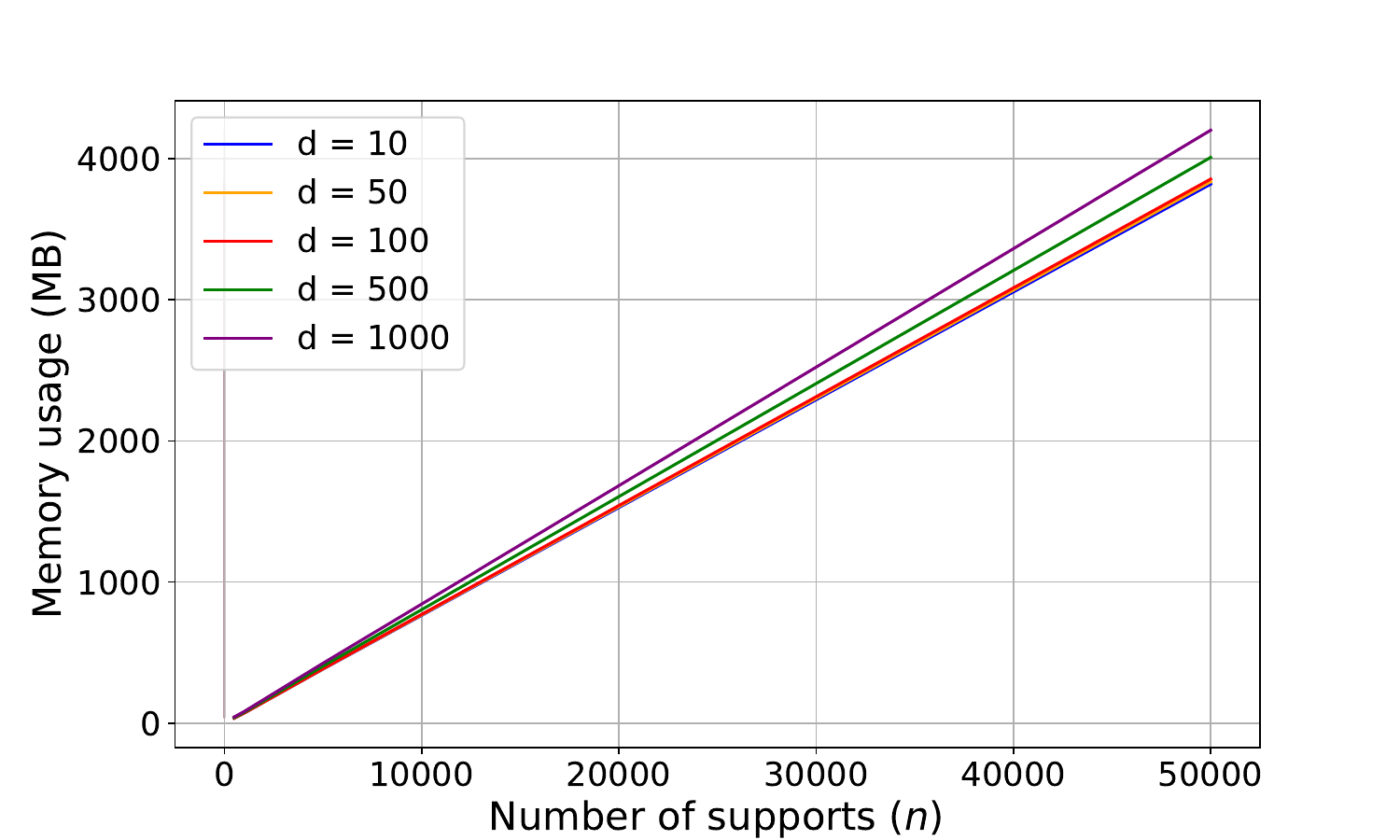}
    \end{minipage}
    
    \caption{Runtime and memory analysis of CircularTSW.}
    \label{fig:circular_runtime_memory}
\end{figure}

\begin{figure}[h]
    \centering
    \begin{minipage}{0.49\textwidth}
        \centering
        \includegraphics[width=\textwidth]{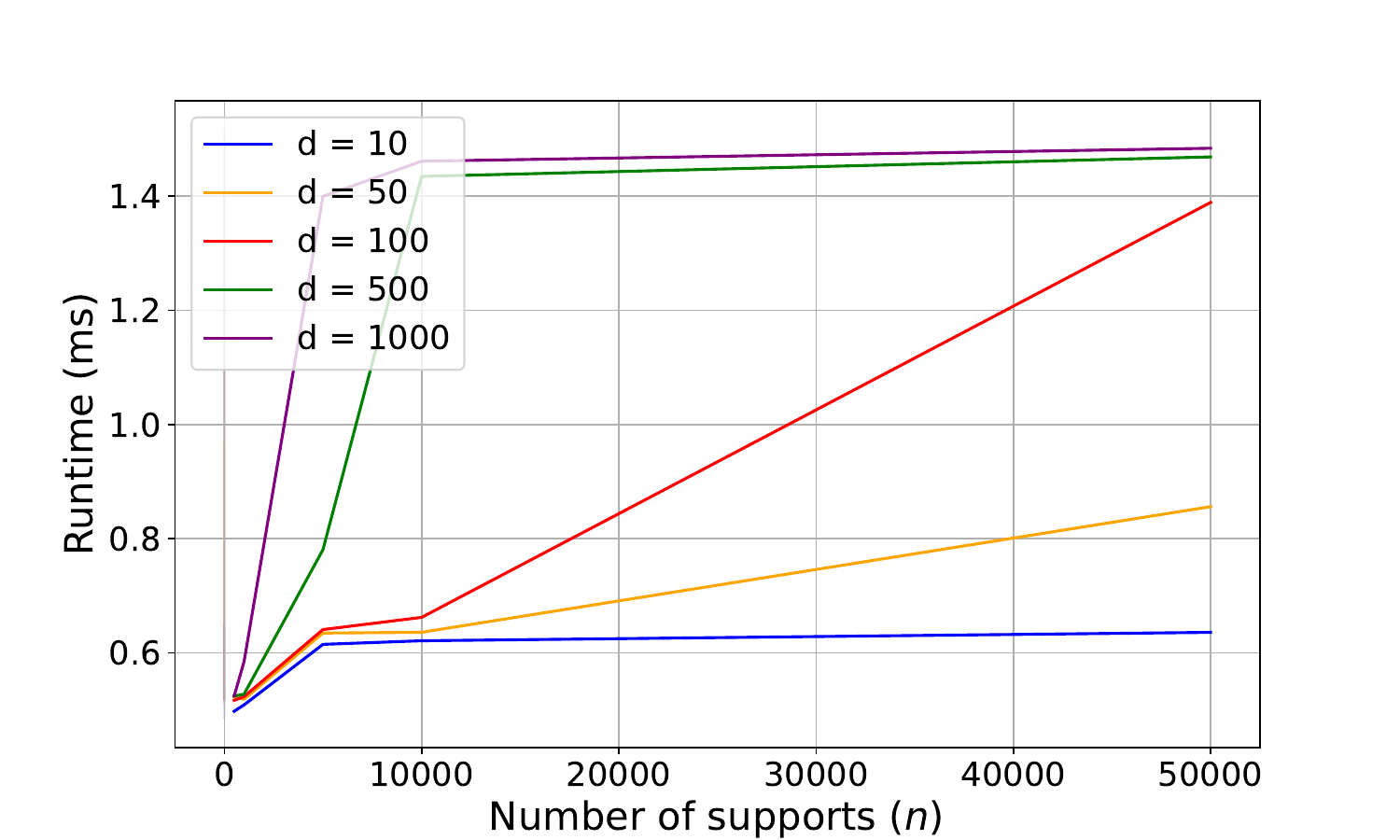}
    \end{minipage}
    \hfill
    \begin{minipage}{0.49\textwidth}
        \centering
        \includegraphics[width=\textwidth]{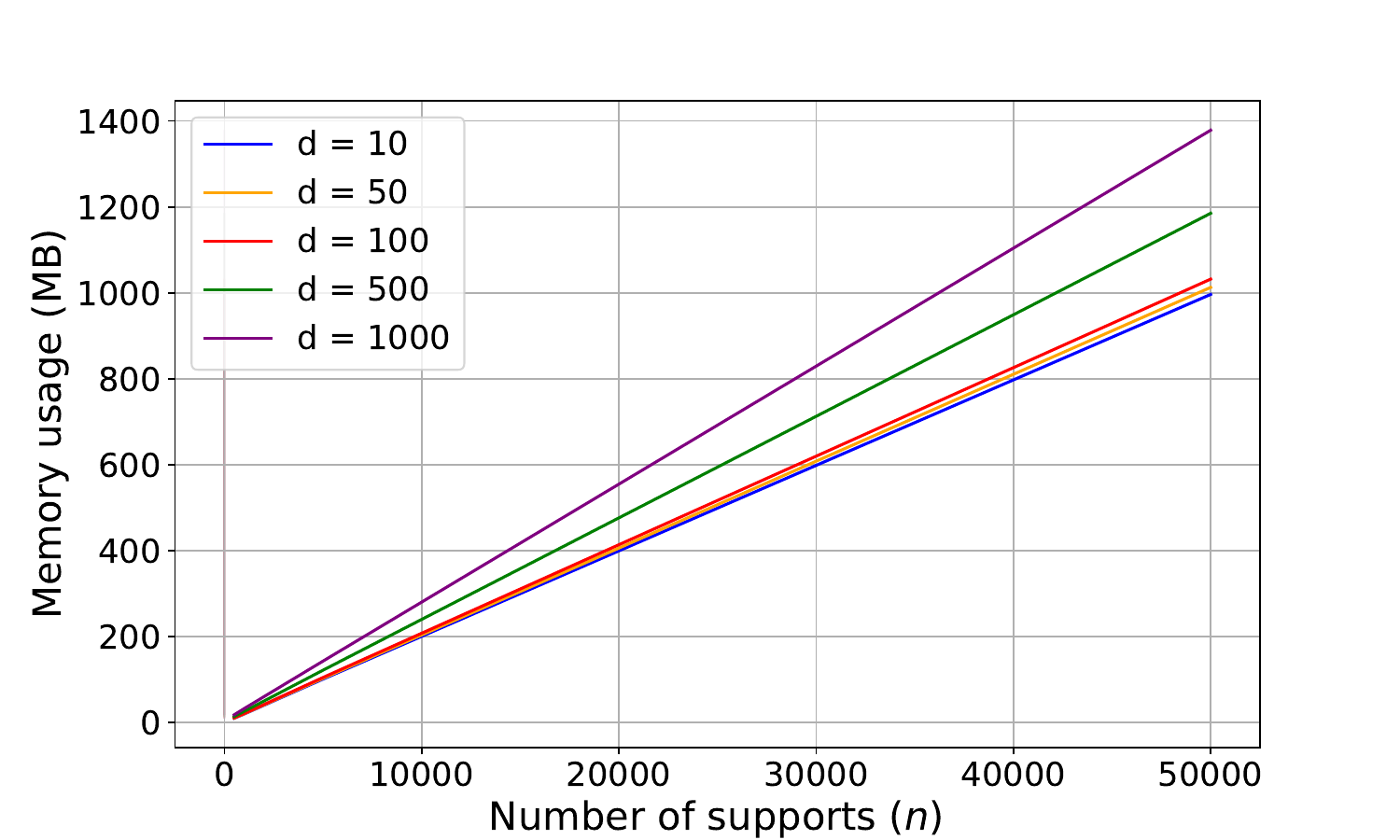}
    \end{minipage}
    
    \caption{Runtime and memory analysis of CircularTSW$_{r=0}$.}
    \label{fig:circular_concentric_runtime_memory}
\end{figure}

In this section, we conduct a runtime and memory analysis of SpatialTSW, CircularTSW, and CircularTSW$_{r=0}$ with respect to the number of supports and the support's dimension on a single NVIDIA A100 GPU. We fix $L = 2500$ and $k = 4$ (following the practical setting from our diffusion model experiments) for all configurations and vary $N \in \{500, 1000, 5000, 10000, 50000\}$ and $d \in \{10, 50, 100, 500, 1000\}$.    

\textbf{Runtime scaling.} Figures~\ref{fig:pow_runtime_memory} and~\ref{fig:circular_runtime_memory} demonstrate a linear relationship between the runtime of SpatialTSW and CircularTSW and the number of supports $n$. Regarding scaling with the data dimension, we observe that $d = 10000$ takes approximately twice the runtime of $d = 5000$, suggesting a linear relationship between dimension and computational time. This linear trend aligns with our theoretical complexity analysis in Appendix~\ref{appendix:complexity}. It is also worth noting that CircularTSW runs faster than SpatialTSW, as it relies on vector norms instead of vector multiplications. Regarding CircularTSW$_{r=0}$, Figure~\ref{fig:circular_concentric_runtime_memory} also demonstrates an almost linear relationship with the number of supports $n$. Interestingly, the runtime of CircularTSW$_{r=0}$ scales very efficiently with $d$. We suspect this is due to the reduced amount of vector normalization required compared to CircularTSW (by a factor of $k$). Ultimately, CircularTSW$_{r=0}$ is significantly faster than other Tree-Sliced methods, by two orders of magnitude when $d$ and $n$ are sufficiently large, which aligns with our theoretical complexity analysis. The significant reduction in computational cost when using the Circular Sliced Wasserstein variants arises from the efficiency of computing $L_2$ norms compared to inner products. This advantage is illustrated in Figure \ref{fig:inner_vs_l2_runtime}.

\begin{figure}[h]
\vskip 0.2in
\begin{center}
\centerline{\includegraphics[width=.7\columnwidth]{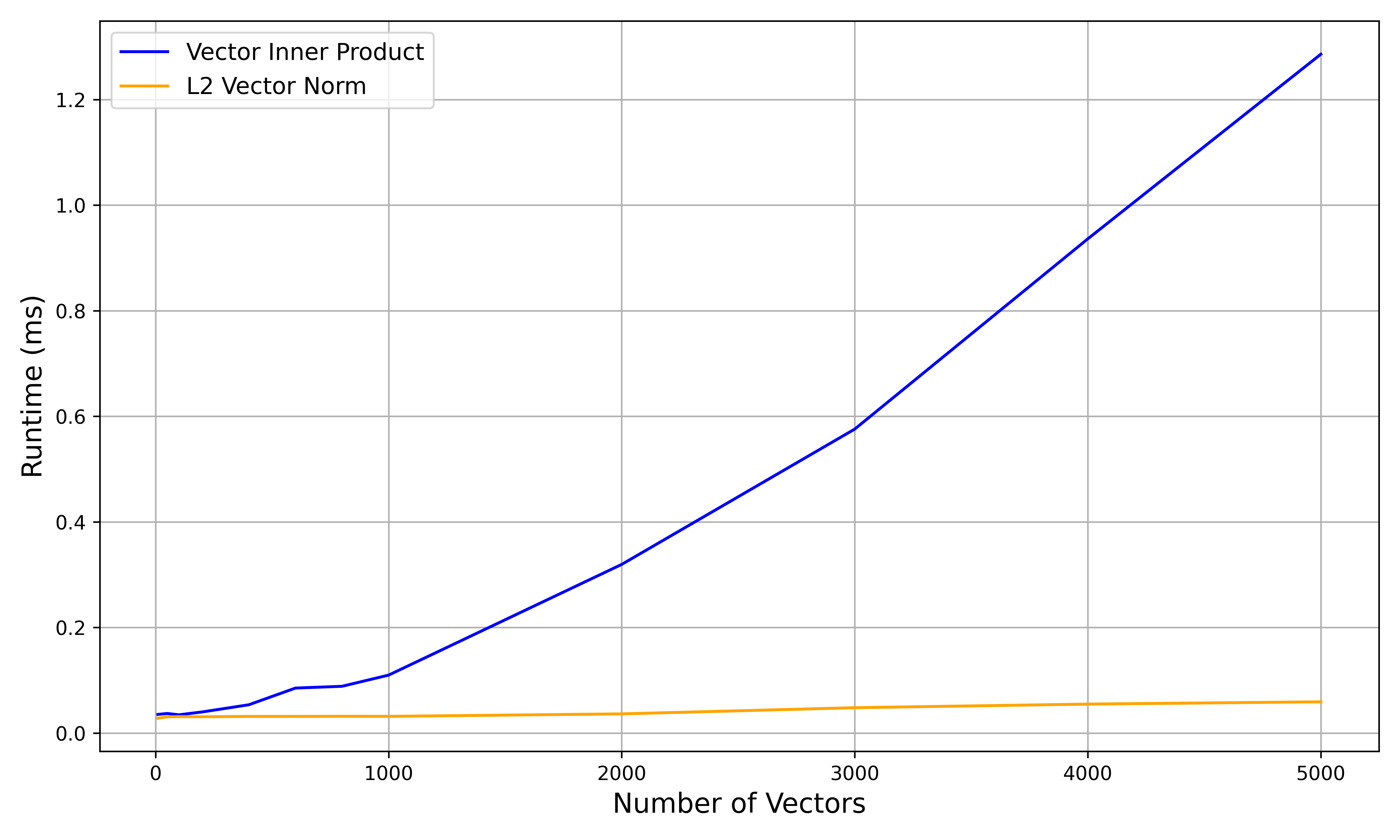}}
\caption{Rebuttal result comparing inner product and L2 norm performance, benchmarked on an A100 GPU with a fixed vector dimension of $d=3000$. The L2 norm demonstrates significantly faster computation times as the number of vectors increases.}
\label{fig:inner_vs_l2_runtime}
\end{center}
\vskip -0.2in
\end{figure}

\textbf{Memory scaling.} Figures~\ref{fig:pow_runtime_memory} ,~\ref{fig:circular_runtime_memory}, and~\ref{fig:circular_concentric_runtime_memory} presents the memory consumption analysis of SpatialTSW, CircularTSW, and CircularTSW$_{r=0}$, revealing a linear scaling relationship with $d$ and $n$. This aligns with the theoretical complexity analysis and suggests a predictable scaling behavior. Notably, the peak memory usage of CircularTSW and CircularTSW$_{r=0}$ is significantly lower than that of SpatialTSW.

\subsection{Splitting map for CircularTSW}
\label{appendix:splitting-map-circular-tsw}

We recall that the splitting map $\alpha$ is defined as:
\begin{align}
    \alpha(y,\mathcal{L})_l = \operatorname{softmax} \Bigl( \{ d(y,\mathcal{L})_i\}_{i=1}^k \Bigr),
\end{align}
where $d(y,\mathcal{L})_i$ represents the distance between $y$ and the $i^{\text{th}}$ line in $\mathcal{L}$. 

In the context of the Radon Transform on a system of lines, this involves computing the inner product between the support and the projection directions. However, in the Circular Radon Transform on a system of lines, the projection coordinate calculation involves the Euclidean norm $\|\cdot\|_2$. 

Therefore, we define a more computationally efficient distance function as:
\begin{align}
    d(y, \mathcal{L})_i = \left\| y - x_i - \left\| y - x_i - r\theta_i \right\|_2 \theta_i \right\|_2,
\end{align}
where $x_i$ is the source, $\theta_i$ is the direction of the $i^{\text{th}}$ line in $\mathcal{L}$, and $r$ is a fixed radius. 

Notably, this distance function still results in an $\operatorname{E}(d)$-invariant splitting map.

\subsection{Analysis on number of lines $k$ in CircularTSW$_{r=0}$}
\label{appendix:ablate-k-circular-concentric}

\begin{figure}[h]
\vskip 0.2in
\begin{center}
\centerline{\includegraphics[width=.7\columnwidth]{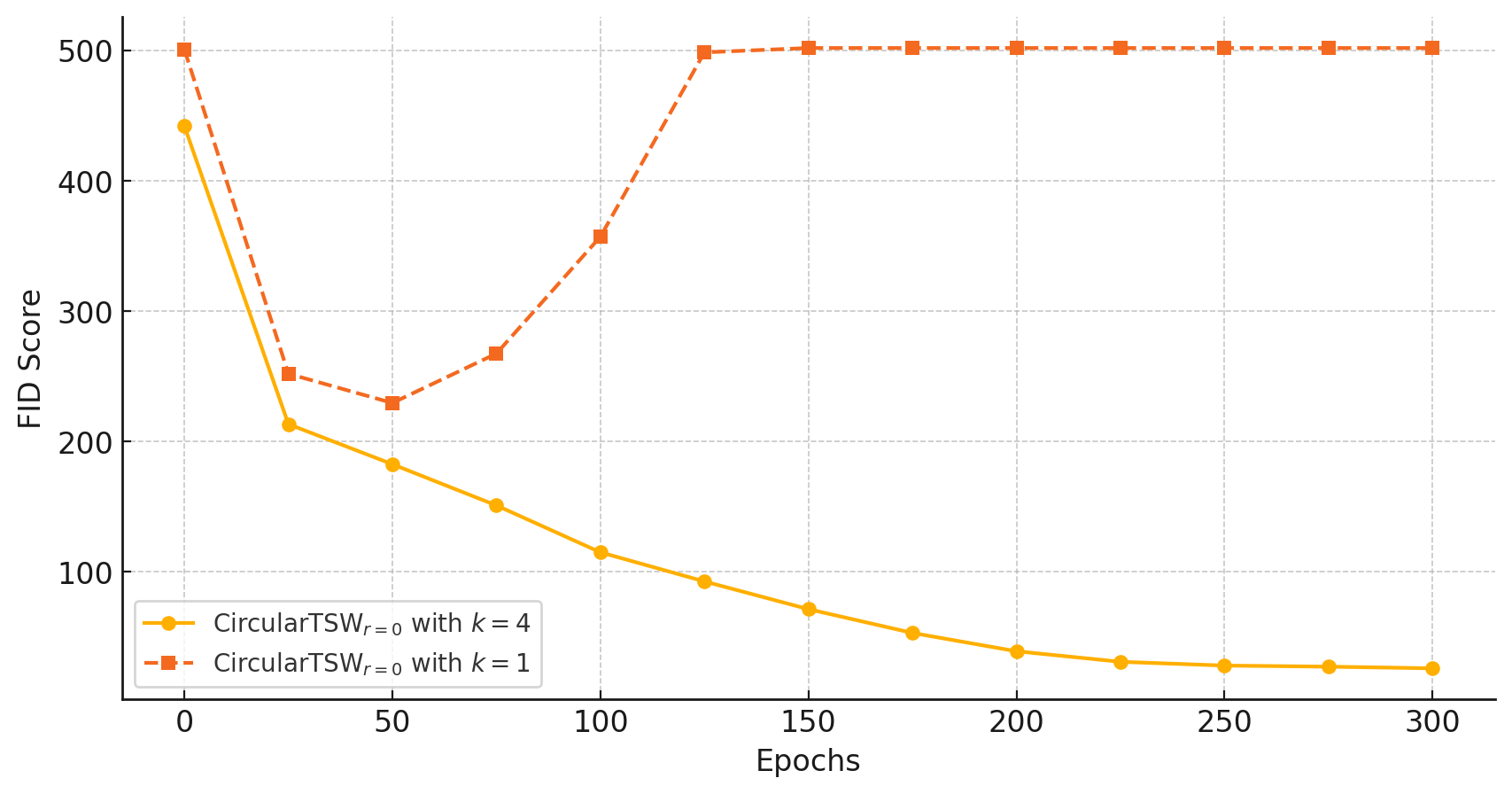}}
\caption{FID scores over the training process for CircularTSW-DD$_{r=0}$ with $k=1$ and $k=4$.}
\label{fig:circular-concentric-compare-k}
\end{center}
\vskip -0.2in
\end{figure}

In this section, we demonstrate that CircularTSW-DD$_{r=0}$ is effective only in a tree-based setting. To illustrate this, we conduct an experiment using the Denoising Diffusion Generative Adversarial Network (DDGAN), following the setup described in Appendix~\ref{appendix:exp-ddgan}. Figure~\ref{fig:circular-concentric-compare-k} presents the FID scores over 300 training epochs for models using CircularTSW$_{r=0}$ with $k=1$ and $k=4$. The results clearly show that only the model with $k=4$ trains stably, with its FID score gradually improving over time. In contrast, for $k=1$, the FID score suddenly spikes to 500 after epoch 125, indicating that the model collapses and starts generating meaningless images (all black pixels). This confirms that CircularTSW$_{r=0}$ is specifically designed for tree-like structures, relying on a distance-based splitting map to function effectively.

\subsection{Denoising Diffusion Generative Adversarial Network}
\label{appendix:exp-ddgan}

\paragraph{Diffusion Models.} 
Diffusion models \citep{sohl2015deep, ho2020denoising} have gained popularity as powerful generative models capable of producing high-quality data. In this experiment, we introduce their mechanisms and demonstrate the improvements introduced by our approach. The diffusion process begins with an initial sample from the distribution $q(x_0)$ and progressively corrupts it by adding Gaussian noise over $T$ steps. This process is formally defined as:

$$
q(x_{1:T} | x_0) = \prod_{t=1}^{T} q(x_t | x_{t-1}),
$$

where each transition follows a Gaussian distribution:

$$
q(x_t | x_{t-1}) = \mathcal{N}(x_t; \sqrt{1 - \beta_t} x_{t-1}, \beta_t I),
$$

with a predefined variance schedule $\beta_t$.

The objective of denoising diffusion models is to learn the reverse diffusion process, which reconstructs the original data from noisy samples. This requires estimating the parameters $\theta$ of the reverse process, formulated as:

$$
p_{\theta}(x_{0:T}) = p(x_T) \prod_{t=1}^{T} p_{\theta}(x_{t-1} | x_t),
$$

where each step follows a Gaussian transition:

$$
p_{\theta}(x_{t-1} | x_t) = \mathcal{N}(x_{t-1}; \mu_{\theta}(x_t, t), \sigma^2_t I).
$$

Training these models is typically done by maximizing the evidence lower bound (ELBO), which minimizes the Kullback-Leibler (KL) divergence between the true posterior and the model's approximation of the reverse diffusion process. This is expressed as:

$$
L = - \sum_{t=1}^{T} \mathbb{E}_{q(x_t)} \left[ \text{KL}(q(x_{t-1} | x_t) || p_{\theta}(x_{t-1} | x_t)) \right] + C,
$$

where $\text{KL}(\cdot || \cdot)$ represents the Kullback-Leibler divergence, and $C$ is a constant term.

\paragraph{Denoising Diffusion GANs.} 
While diffusion models generate high-quality and diverse samples, their slow sampling process limits real-world applicability. Denoising Diffusion GANs (DDGANs) \citep{xiao2021tackling} address this issue by modeling each denoising step using a multimodal conditional GAN, allowing for larger denoising steps. This significantly reduces the number of steps to just $4$, leading to sampling speeds up to $2000$ times faster than traditional diffusion models while maintaining competitive sample quality and diversity. The implicit denoising model in DDGANs is formulated as:

$$
p_{\theta}(x_{t-1} | x_t) = \int p_{\theta}(x_{t-1} | x_t, \epsilon) G_{\theta}(x_t, \epsilon) d\epsilon, \quad \epsilon \sim \mathcal{N}(0, I).
$$

\citet{xiao2021tackling} optimize the model parameters $\theta$ using adversarial training, with the objective:

$$
\min_{\phi} \sum_{t=1}^{T} \mathbb{E}_{q(x_t)}[D_{adv}(q(x_{t-1} | x_t) || p_{\phi}(x_{t-1} | x_t))],
$$

where $D_{adv}$ represents the adversarial loss. Instead, \citet{nguyen2024sliced} replace the adversarial loss with the Augmented Generalized Mini-batch Energy (AGME) distance. For two distributions $\mu$ and $\nu$, given a mini-batch size $n \geq 1$, AGME using a Sliced Wasserstein (SW) kernel is defined as:

$$
\text{AGME}^2_{b}(\mu, \nu; g) = \text{GME}^2_{b}(\tilde{\mu}, \tilde{\nu}),
$$

where $\tilde{\mu} = f_{\sharp} \mu$ and $\tilde{\nu} = f_{\sharp} \nu$, with $f(x) = (x, g(x))$ for a nonlinear function $g: \mathbb{R}^d \to \mathbb{R}$. The Generalized Mini-batch Energy (GME) distance \citep{salimans2018improving} is defined as:

$$
\text{GME}^{2}_b(\mu, \nu) = 2 \mathbb{E}[D(P_X, P_Y)] - \mathbb{E}[D(P_X, P_X')] - \mathbb{E}[D(P_Y, P_Y')],
$$

where $X, X' \overset{i.i.d.}{\sim} \mu^{\otimes m}$ and $Y, Y' \overset{i.i.d.}{\sim} \nu^{\otimes m}$, with

$$
P_X = \frac{1}{m} \sum_{i=1}^{m} \delta_{x_i}, \quad X = (x_1, \ldots, x_m).
$$

Here, $D$ represents any valid distance metric. In our work, we replace $D$ with Tree-Sliced Wasserstein (TSW) variants (our methods) and Sliced Wasserstein (SW) variants.

\paragraph{Setting.} 
We adopt the same architecture and hyperparameters as \citet{nguyen2024sliced} and \citet{tran2025distancebased}. Our models are trained for $1800$ epochs. For Tree-Sliced methods, including our own, we set $L = 2500$ and $k = 4$. For vanilla SW and SW variants, we follow \citet{nguyen2024sliced} and use $L = 10000$. The learning rate is also set according to \citet{nguyen2024sliced}, where $lr_d=1.25e\mathrm-4$ and $lr_g=1.6e\mathrm-4$. For SpatialTSW, we define $h(y) = y + y^3$, and for CircularTSW, we set $r = 0.01$. The standard deviation in tree sampling follows \citet{tran2025distancebased} and is set to $0.1$. To evaluate runtime, we use a batch size of 64 and measure time on a single NVIDIA A100 GPU.

\paragraph{Qualitative Results.} Figure~\ref{fig:qualitative_images} presents the qualitative results of SpatialTSW-DD, CircularTSW-DD, and Circular$_{r=0}$TSW-DD.

\begin{figure}[h]
    \centering
    \begin{minipage}{0.3\linewidth}
        \centering
        \includegraphics[width=\linewidth]{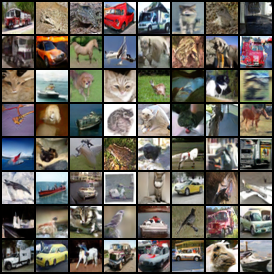}
        \\ SpatialTSW-DD
    \end{minipage}
    \begin{minipage}{0.3\linewidth}
        \centering
        \includegraphics[width=\linewidth]{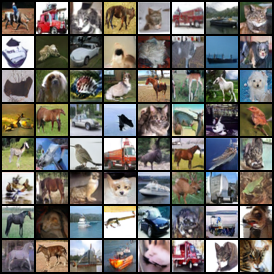}
        \\ CircularTSW-DD
    \end{minipage}
    \begin{minipage}{0.3\linewidth}
        \centering
        \includegraphics[width=\linewidth]{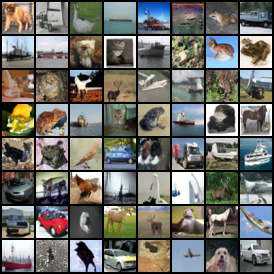}
        \\ Circular$_{r=0}$TSW-DD
    \end{minipage}
    \caption{Example images generated by our proposed DDGAN. The images correspond to (Left) SpatialTSW-DD, (Middle) CircularTSW-DD, and (Right) Circular$_{r=0}$TSW-DD.}
    \label{fig:qualitative_images}
\end{figure}

\subsection{Gradient Flow}
\label{appendix:exp-gradientflow}

\begin{table}[h]
\caption{Detailed results on the 25 Gaussians dataset. The table reports the average Wasserstein distance between source and target distributions over 5 runs.}
\label{table:gradientflow_appendix}
\vskip 0.15in
\begin{center}
\begin{small}
\begin{adjustbox}{width=0.95\textwidth}
\begin{tabular}{lcccccccc}
\toprule
\multirow{2}{*}{Methods}       & \multicolumn{5}{c}{Iteration}                                                                     & \multirow{2}{*}{Time/Iter($s$)} \\ 
\cmidrule(lr){2-6}
                                & 500              & 1000             & 1500             & 2000             & 2500             &           \\ \midrule
SW                              & 4.21e-1 $\pm$ 5.39e-3 & 1.54e-1 $\pm$ 2.43e-3         & 7.72e-2 $\pm$ 3.88e-3          & 4.97e-2 $\pm$ 3.30e-3          & 3.59e-2 $\pm$ 3.43e-3          & 0.0018     \\ 
MaxSW                           & 5.23e-1 $\pm$ 8.31e-3          & 2.36e-1 $\pm$ 4.63e-3         & 1.23e-1 $\pm$ 3.17e-3        & 8.04e-2 $\pm$ 3.70e-3        & 6.76e-2 $\pm$ 3.07e-3        & 0.1020      \\  
SWGG                            & 6.59e-1 $\pm$ 1.93e-2 & 3.62e-1 $\pm$ 2.70e-2 & 1.92e-1 $\pm$ 1.99e-2         & 9.07e-2 $\pm$ 1.31e-2          & 4.42e-2 $\pm$ 1.90e-2          & 0.0019     \\ 
LCVSW                           & \underline{3.46e-1 $\pm$ 4.63e-3}          & \underline{6.96e-2 $\pm$ 3.11e-3}          & 2.26e-2 $\pm$ 1.39e-3         & 1.31e-2 $\pm$ 2.07e-3         & 9.28e-3 $\pm$ 9.25e-4         & 0.0019    \\ 
\midrule
TSW-SL                          & 3.49e-1 $\pm$ 4.61e-3          & 8.10e-2 $\pm$ 2.34e-3          & \underline{1.06e-2 $\pm$ 1.00e-3}          & 2.68e-3 $\pm$ 3.24e-4          & 3.16e-6 $\pm$ 1.99e-6         & 0.0019     \\ 
Db-TSW                       & 3.50e-1 $\pm$ 5.10e-3         & 8.12e-2 $\pm$ 2.34e-3         & 1.09e-2 $\pm$ 1.41e-3 & \underline{1.77e-3 $\pm$ 6.69e-4} & \underline{1.30e-7 $\pm$ 9.28e-9} & 0.0020     \\
Db-TSW$^\perp$               & 3.52e-1 $\pm$ 5.17e-3         & 7.69e-2 $\pm$ 3.37e-3 & 2.73e-2 $\pm$ 4.87e-4 & 2.56e-3 $\pm$ 7.72e-4 & 2.03e-6 $\pm$ 3.70e-6 & 0.0021     \\
\midrule
SpatialTSW                          & \textbf{3.20e-1 $\pm$ 4.73e-3}          & \textbf{3.44e-2 $\pm$ 2.42e-3}          & \textbf{2.95e-3 $\pm$ 1.46e-4}          & \textbf{3.97e-4 $\pm$ 8.13e-5}          & \textbf{1.17e-7 $\pm$ 2.24e-8}          & 0.0021     \\ 
CircularTSW               & 4.28e-1 $\pm$ 4.33e-3         & 1.20e-1 $\pm$ 2.37e-3 & 3.48e-2 $\pm$ 5.10e-4 & 1.41e-2 $\pm$ 7.50e-4 &  7.86e-3 $\pm$ 3.94e-4 & 0.0017     \\
CircularTSW$_{r=0}$                       & 4.32e-1 $\pm$ 4.01e-3         & 1.22e-1 $\pm$ 1.50e-3         & 3.41e-2 $\pm$ 2.22e-3 & 1.45e-2 $\pm$ 1.03e-3 & 8.94e-3 $\pm$ 9.42e-4 & 0.0015     \\
\bottomrule
\end{tabular}
\end{adjustbox}
\end{small}
\end{center}
\vskip -0.1in
\end{table}

\paragraph{Detailed results on the \textit{25 Gaussians} dataset.} 
Table~\ref{table:gradientflow_appendix} presents the detailed results of our proposed methods and baselines. The low standard deviation of SpatialTSW indicates that our method consistently achieves faster convergence compared to other methods.

\paragraph{Ablation on $h ~ \colon ~ \mathbb{R}^d \rightarrow \mathbb{R}^{d_\theta}$ in SpatialTSW.}
\label{appendix:ablate-h} 
We ablate several injective continuous functions $h$ in SpatialTSW on the \textit{25-Gaussian} dataset. The results in Table~\ref{tab:ablate-h} show that, in general, $h(y) = y + \gamma y^3$ outperforms $h(y) = y + \gamma y^5$, although the latter tends to converge faster during the first 1500 iterations. The best result is achieved with $h(y) = y + 0.5 y^3$, yielding a final $W_2$ value of $9.59e\mathrm{-8}$.

\begin{table}[t]
\caption{Ablation study on the choice of $h$ in SpatialTSW for Gradient Flow. Results show the average Wasserstein distance between source and target distributions over 5 runs on the 25 Gaussians dataset.}
\label{tab:ablate-h}
\vskip 0.15in
\begin{center}
\begin{small}
\begin{adjustbox}{width=0.5\textwidth}
\begin{tabular}{llccccc}
\toprule
\multirow{2}{*}{$h(y)$} & \multirow{2}{*}{$\gamma$} & \multicolumn{5}{c}{Iteration} \\ 
\cmidrule(lr){3-7}
 &  & 500 & 1000 & 1500 & 2000 & 2500 \\ 
\midrule
\multirow{5}{*}{$y + \gamma y^3$} 
 & $0.1$  & 3.49e-1 & 7.12e-2 & 7.77e-3 & 6.49e-4 & \underline{1.15e-7} \\  
 & $0.5$  & 3.33e-1 & 4.68e-2 & 3.74e-3 & \textbf{1.58e-7} & \textbf{9.59e-8} \\  
 & $1$    & 3.20e-1 & 3.44e-2 & 2.95e-3 & 3.97e-4 & 1.17e-7 \\  
 & $5$    & 2.88e-1 & 3.26e-2 & 3.57e-3 & 3.39e-4 & 2.24e-7 \\  
 & $10$   & 2.73e-1 & 3.35e-2 & 5.11e-3 & 1.54e-4 & 5.59e-7 \\  
\midrule
\multirow{5}{*}{$y + \gamma y^5$} 
 & $0.1$  & 3.57e-1 & 7.29e-2 & 7.72e-3 & 1.71e-3 & 1.32e-7 \\  
 & $0.5$  & 3.36e-1 & 4.24e-2 & 4.57e-3 & 1.06e-3 & 1.50e-7 \\  
 & $1$    & 2.99e-1 & 2.37e-2 & 2.95e-3 & 3.50e-5 & 1.63e-7 \\  
 & $5$    & \underline{1.75e-1} & \underline{8.54e-3} & \underline{2.60e-3} & 2.03e-3 & 1.85e-3 \\  
 & $10$   & \textbf{1.34e-1} & \textbf{6.30e-3} & \textbf{3.14e-4} & \underline{6.40e-6} & 1.55e-6 \\  
\bottomrule
\end{tabular}
\end{adjustbox}
\end{small}
\end{center}
\vskip -0.1in
\end{table}

\paragraph{Hyperparameters.} 
For Tree-Sliced methods, we set $L = 25$ and $k = 4$. For the SW and SW-variant baselines, we use $L = 100$. The global learning rate is set to $0.001$. Each distribution in both datasets is sampled 500 supports.

\subsection{A Guide to Selecting Projection Variants}
Our motivation for proposing the non-linear projection framework is inspired by Generalized Sliced-Wasserstein (GSW) \cite{kolouri2019generalized}, which also includes both Circular and Spatial variants. It is underexplored in prior studies that among the three versions—original SW, SpatialSW, and CircularSW, which variant is most suitable for a given task.

This suggests that among the corresponding TSW variants, such as Db-TSW \cite{tran2025distancebased}, CircularTSW, and Spatial TSW, there is no guarantee that the versions with non-linear projections will consistently outperform the linear-projection TSW. However, the two new distance variants each offer distinct advantages over standard TSW, as outlined below:
\begin{itemize}
    \item The definition of SpatialTSW subsumes Db-TSW as a special case when the function is the identity map. This implies that models leveraging SpatialTSW have, in theory, greater representational capacity than those using Db-TSW. A similar relationship holds between the corresponding SW variants.
    \item The definition of CircularTSW is theoretically non-comparable to Db-TSW due to their fundamentally different constructions. However, CircularTSW$_{r=0}$ offers improved runtime efficiency. This benefit does not hold in the SW context, where CircularSW$_{r=0}$ performs poorly. One reason is that CircularSW$_{r=0}$ defines only a pseudo-metric, while CircularTSW$_{r=0}$ is a true metric.
\end{itemize}

Our framework offers greater flexibility by enabling a broader selection of distance functions. However, in Machine Learning, predicting the best variant for a task often requires empirical experimentation. Table \ref{table:full_result} shows that both Db-TSW and SpatialTSW perform well, but the non-linearity in SpatialTSW makes it hard to determine in advance which variant is better suited for a given task.

We offer intuition for selecting CircularTSW and CircularTSW$_{r=0}$. Since these distances rely on the $L_2$ norm for the projection step, they are likely to perform well when the $L_2$ norms of the data are diversely distributed. We validate this advantage over Db-TSW and SpatialTSW in the Table $\ref{table:dim_results_short}$, where the distribution of $L_2$ norms is uniform. We speculate that this property explains why CircularTSW performs effectively for the Diffusion experiment (Table \ref{table:diffusion}).

To the best of our knowledge, Db-TSW \cite{tran2025distancebased} is the only tree-sliced distance effectively suited for large-scale generative tasks involving transport from a training measure to a target measure in Euclidean space. Previously, \cite{tran2024tree} presents a basic and limited version of \cite{tran2025distancebased}, primarily emphasizing the constructive aspects of the tree-sliced approach, which serve as foundational groundwork. Meanwhile, \cite{tran2025spherical} explores the method in a spherical setting. Other works on Tree-Sliced Wasserstein (TSW), such as \cite{le2019tree}, \cite{yamada2022approximating}, and others, are mainly designed for classification tasks and are not applicable to generative settings. This limitation arises because these methods rely on a clustering-based framework for computing slices, which is theoretically unsuitable (as the clustering must be recomputed each time the training measure is updated, rendering previous clustering results irrelevant) and empirically inefficient (since clustering is significantly more computationally expensive than linear or non-linear projection methods).
\begin{table}[h]
\caption{Results for Tree-Sliced variants (Linear, Spatial, Circular) in a Gradient Flow task across datasets, showing the average Wasserstein distance between source and target distributions over 5 runs. Each method uses 100 projecting directions, trained for 500 iterations, with the best result reported over $lr \in \{1, 5 \times 10^{-1}, 1 \times 10^{-1}, 1 \times 10^{-2}, 5 \times 10^{-2}, 1 \times 10^{-3}, 3 \times 10^{-3}, 5 \times 10^{-3}\}$. SpatialTSW performs best on Half Moons, Swiss Roll, 25 Gaussians, and 8 Gaussians datasets, while Db-TSW excels on the Circle dataset.}
\label{table:full_result}
\vskip 0.1in
\begin{center}
\begin{small}
\begin{tabular}{l c c c c c}
\toprule
Methods & Circle & Half Moons & Swiss Roll & 25 Gaussians & 8 Gaussians \\
\midrule
SW               & 6.463e-4 $\pm$ 3.112e-5 & 1.648e-4 $\pm$ 3.754e-5 & 9.795e-4 $\pm$ 1.328e-4 & 4.007e-2 $\pm$ 1.940e-3 & 3.786e-2 $\pm$ 7.090e-3 \\
Db-TSW           & \textbf{1.331e-6 $\pm$ 1.123e-7} & \underline{1.659e-6 $\pm$ 1.471e-7}                    & \underline{1.659e-6 $\pm$ 1.471e-7} & \underline{2.103e-3 $\pm$ 6.378e-4} & \underline{3.475e-3 $\pm$ 1.151e-3} \\
\midrule
SpatialSW        & 2.098e-4 $\pm$ 1.919e-5 & 2.146e-4 $\pm$ 5.308e-5 & 9.329e-4 $\pm$ 1.113e-4 & 5.206e-2 $\pm$ 2.456e-3 & 3.593e-2 $\pm$ 2.619e-3 \\
SpatialTSW       & \underline{1.366e-6 $\pm$ 9.439e-8} & \textbf{1.267e-6 $\pm$ 6.638e-8} & \textbf{1.615e-6 $\pm$ 1.751e-7} & \textbf{1.969e-3 $\pm$ 6.314e-4} & \textbf{2.652e-3 $\pm$ 6.996e-4} \\
\midrule
CircularSW       & 6.044e-4 $\pm$ 1.670e-5 & 8.147e-5 $\pm$ 4.858e-6 & 7.950e-4 $\pm$ 1.065e-4 & 1.150e-1 $\pm$ 4.502e-3 & 2.137e-1 $\pm$ 1.276e-2 \\
CircularTSW      & 1.922e-4 $\pm$ 1.493e-5 & 6.982e-5 $\pm$ 4.253e-6 & 2.053e-4 $\pm$ 3.739e-5 & 1.172e-2 $\pm$ 8.564e-4 & 1.307e-2 $\pm$ 1.531e-3 \\
CircularTSW$_{r=0}$ & 7.924e-4 $\pm$ 5.153e-5 & 9.201e-5 $\pm$ 9.562e-6 & 4.030e-4 $\pm$ 5.877e-5 & 2.009e-2 $\pm$ 1.612e-3 & 3.044e-2 $\pm$ 9.105e-4 \\
\bottomrule
\end{tabular}
\end{small}
\end{center}
\vskip -0.1in
\end{table}

\begin{table}[h]
\caption{Results on the advantage of the Circular Tree-Sliced variant in a Gradient Flow task when the $L_2$ norm of the data is uniformly distributed. Data is sampled such that the $L_2$ norm follows $\text{Uniform}(0,1)$. The table reports the average Wasserstein distance between source and target distributions over 5 runs. Each method uses 100 projecting directions and is trained for 500 iterations, with the best result reported over $lr \in \{1, 5 \times 10^{-1}, 1 \times 10^{-1}, 1 \times 10^{-2}, 5 \times 10^{-2}, 1 \times 10^{-3}, 3 \times 10^{-3}, 5 \times 10^{-3}\}$. CircularTSW consistently achieves a lower Wasserstein distance, while Linear and Spatial variants struggle as the dimension $d$ increases.}
\label{table:dim_results_short}
\vskip 0.1in
\begin{center}
\begin{small}
\begin{tabular}{l c c c}
\toprule
Methods & $d = 2000$ & $d = 5000$ & $d = 10000$ \\
\midrule
SW                    & 0.535 $\pm$ 0.006   & 9.795 $\pm$ 0.025   & 70.06 $\pm$ 0.100   \\
Db-TSW        & 4.871 $\pm$ 0.049   & 87.49 $\pm$ 0.137    & 308.97 $\pm$ 0.367  \\
\midrule
SpatialSW             & 1.510 $\pm$ 0.006   & 18.66 $\pm$ 0.043    & 95.55 $\pm$ 0.170   \\
SpatialTSW            & 6.394 $\pm$ 0.066   & 93.08 $\pm$ 0.131    & 314.44 $\pm$ 0.269  \\
\midrule
CircularSW            & 0.357 $\pm$ 0.005   & \underline{0.404 $\pm$ 0.007}   & \underline{0.428 $\pm$ 0.004}  \\
CircularTSW           & \textbf{0.304 $\pm$ 0.009}   & \textbf{0.347 $\pm$ 0.010}   & \textbf{0.369 $\pm$ 0.015}  \\
CircularTSW$_{r=0}$   & \underline{0.332 $\pm$ 0.010}   & 0.517 $\pm$ 0.015   & 0.873 $\pm$ 0.022  \\
\bottomrule
\end{tabular}
\end{small}
\end{center}
\vskip -0.1in
\end{table}

\paragraph{Hyperparameter $r$.} Selecting the optimal hyperparameter, such as $r$ for CircularTSW, is challenging and often requires empirical tuning. Intuitively, $r$ should be large enough to ensure diverse projections onto the lines but should not exceed the data's magnitude. For normalized data, we suggest starting with $r = \frac{1}{\sqrt{d}}$ and tuning from there.

\subsection{Spherical Gradient Flow}
\paragraph{Data.} Given the probability density function of the von Mises-Fisher distribution $f(x; \mu, \kappa) = C_{d}(\kappa)\exp(\kappa\mu^{T}x)$, where $\mu \in \mathbb{S}^{d}$ is mean direction and $\kappa > 0$ is concentration parameter and the normalization constant $C_{d}(\kappa) = \dfrac{\kappa^{d/2 - 1}}{(2\pi)^{p/2}I_{p/2 - 1}(\kappa)}$, we use 12 vMFs as the target distribution with $\kappa = 50$ and mean directions as follows:
\[
\begin{array}{cccc}
\mu_1 = (-1, \phi, 0), & \mu_2 = (1, \phi, 0), & \mu_3 = (-1, -\phi, 0), & \mu_4 = (1, -\phi, 0) \\
\mu_5 = (0, -1, \phi), & \mu_6 = (0, 1, \phi), & \mu_6 = (0, -1, -\phi), & \mu_8 = (0, 1, -\phi) \\
\mu_9 = (\phi, 0, -1), & \mu_{10} = (\phi, 0, 1), & \mu_{11} = (-\phi, 0, -1), & \mu_{12} = (-\phi, 0, 1), \\
\end{array}
\]
where $\phi = \dfrac{1 + \sqrt{5}}{2}$.

Similar to \citet{tran2024stereographic, tran2025spherical}, we pick 200 samples from each vMF.

\paragraph{Setting.} We set $L = 200$ trees and $k = 5$ lines for STSW and SpatialSTSW, and $L = 1000$ projections for other sliced methods. ARI-S3W (30) employs 30 rotations with a pool size of 1000, whereas RI-S3W (1) and RI-S3W (5) use 1 and 5 rotations, respectively. Training is conducted using Adam \citep{kinga2015method} optimizer with learning rate $lr = 0.01$, following update rules \cite{bonet2022spherical}: 
\[
\begin{cases}
x^{(k+1)} = x^{(k)} - \gamma \nabla_{x^{(k)}} \text{SpatialSTSW}(\hat{\mu}_k, \nu), \\
x^{(k+1)} = \frac{x^{(k+1)}}{\| x^{(k+1)} \|_2}.
\end{cases}
\]

\subsection{Self-Supervised Learning}
\paragraph{Encoder.} Following the approach outlined in \citet{bonet2022spherical, tran2024stereographic, tran2025spherical}, we use ResNet18 \citep{he2016deep} as the encoder. We train it on CIFAR-10 data for 200 epochs with a batch size of 512 and the SGD optimizer with initial $lr = 0.05$, momentum $0.9$, and weight decay $10^{-3}$. To generate positive pairs, we employ commonly used augmentation techniques, consistent with previous studies \citep{wang2020understanding, bonet2022spherical, tran2024stereographic, tran2025spherical}. These transformations include resizing, cropping, horizontal flipping, color jittering, and random grayscale conversion.

For STSW and SpatialSTSW, we configure $L = 200$ trees and $k = 20$. For other distances, we use $L = 200$ projections and $N_R = 5$ with a pool size of 100 for RI-S3W and ARI-S3W. The regularization coefficients are chosen as follows: $\lambda = 10$ for STSW and SpatialSTSW, $\lambda = 1$ for SW, $\lambda = 20$ for SSW, and $\lambda = 0.5$ for S3W variants.

\paragraph{Linear Classifier.} To evaluate the quality of the feature representations learned by the pre-trained encoder, a linear classifier is trained on top of these features. Following the approach of \citet{bonet2022spherical}, the classifier is trained for 100 epochs using the Adam \citep{kinga2015method} optimizer. We set the initial learning rate to $10^{-3}$ together with a weight decay of $0.2$ at epochs $60$ and $80$.

\subsection{Sliced-Wasserstein Autoencoder (SWAE)}
\paragraph{Setup.} For our model training, we use Adam \citep{kinga2015method} optimize, setting learning rate 
 to $lr = 10^{-3}$. The model undergoes training over 100 epochs with a batch size of 500, where the binary cross-entropy (BCE) loss function serves as the reconstruction loss. For STSW and SpatialSTSW, we fix $L = 200$ trees and $k = 10$ lines. Other sliced methods use $L=100$ projections. RI-S3W ad ARI-S3W use $N_R = 5$ rotation and pool size of 100. We use prior 10 vMFs, while setting the regularization parameter $\lambda = 1$ for STSW, SpatialSTSW, $\lambda = 10$ for SSW, and $\lambda = 10^{-3}$ for SW and S3W variants.

\textbf{CIFAR-10 Model Architecture.} \cite{tran2024stereographic, tran2025spherical}

Encoder:
\begin{align*}
    x \in \mathbb{R}^{3 \times 32 \times 32} &\rightarrow \text{Conv2d}_{32} \rightarrow \text{ReLU} \rightarrow \text{Conv2d}_{32} \rightarrow \text{ReLU} \\
    &\rightarrow \text{Conv2d}_{64} \rightarrow \text{ReLU} \rightarrow \text{Conv2d}_{64} \rightarrow \text{ReLU} \\
    &\rightarrow \text{Conv2d}_{128} \rightarrow \text{ReLU} \rightarrow \text{Conv2d}_{128} \rightarrow \text{Flatten}\\
    &\rightarrow \text{FC}_{512} \rightarrow \text{ReLU} \rightarrow \text{FC}_{3}\\
    &\rightarrow \ell^2 \text{ normalization} \rightarrow z \in \mathbb{S}^2
\end{align*}
Decoder:
\begin{align*}
    z \in \mathbb{S}^2 &\rightarrow \text{FC}_{512} \rightarrow \text{FC}_{2048} \rightarrow \text{ReLU}\\
    &\rightarrow \text{Reshape}(128 \times 4 \times 4) \rightarrow \text{Conv2dT}_{128} \rightarrow \text{ReLU}\\
    &\rightarrow \text{Conv2dT}_{64} \rightarrow \text{ReLU} \rightarrow \text{Conv2dT}_{64} \rightarrow \text{ReLU}\\
    &\rightarrow \text{Conv2dT}_{32} \rightarrow \text{ReLU} \rightarrow \text{Conv2dT}_{32} \rightarrow \text{ReLU}\\
    &\rightarrow \text{Conv2dT}_{3} \rightarrow \text{Sigmoid}
\end{align*}


\subsection{Hardware settings}
\label{exp:resources}
The gradient flow experiments were conducted on a single NVIDIA A100 GPU, with each experiment taking approximately $0.5$ hours. The denoising diffusion experiments were executed in parallel on two NVIDIA A100 GPUs, with each run lasting around $50$ hours. All spherical experiments were conducted on a single NVIDIA A100 GPU.


\end{document}